\theoremstyle{plain}
\let\hat\widehat
\let\tilde\widetilde
\newtheorem{lemma}{{\bf Lemma}}
\newtheorem{corollary}{{\bf Corollary}}
\newtheorem{theorem}{{\bf Theorem}}
\newtheorem{assumption}{{\bf Assumption}}
\newtheorem{definition}{{\bf Definition}}
\newtheorem{remark}{{\bf Remark}}
\tikzstyle{intt}=[draw,text centered,minimum size=6em,text width=5.25cm,text height=0.34cm]
\tikzstyle{intl}=[draw,text centered,minimum size=2em,text width=2.75cm,text height=0.34cm]
\tikzstyle{int}=[draw,minimum size=2.5em,text centered,text width=6.5cm]
\tikzstyle{intg}=[draw,minimum size=2.5em,text centered,text width=6.cm]
\tikzstyle{sum}=[draw,shape=circle,inner sep=2pt,text centered,node distance=3.5cm]
\tikzstyle{summ}=[drawshape=circle,inner sep=4pt,text centered,node distance=3.cm]
\title{\Large{\textbf{Dual Active Learning for Reinforcement Learning from Human Feedback}} }
\author
{
Pangpang Liu\thanks{Mitchell E. Daniels, Jr. School of Business, Purdue University. Email: liu3364@purdue.edu.}\qquad 
Chengchun Shi\thanks{Department of Statistics, London School of Economics and Political Science, Email: c.shi7@lse.ac.uk.} \qquad 
Will Wei Sun\thanks{Mitchell E. Daniels, Jr. School of Business, Purdue University. Email: sun244@purdue.edu. Corresponding author.}
}
\date{}
\begin{document} 

\maketitle

\begin{abstract}
\noindent
Aligning large language models (LLMs) with human preferences is critical to recent advances in generative artificial intelligence. Reinforcement learning from human feedback (RLHF) is widely applied to achieve this objective. A key step in RLHF is to learn the reward function from human feedback. However, human feedback is costly and time-consuming,  making it essential to collect high-quality 
conversation data for human teachers to label. Additionally, different human teachers have different levels of expertise. It is thus critical to query the most appropriate teacher for their opinions. In this paper, we use offline reinforcement learning (RL) to formulate the alignment problem. Motivated by the idea of $D$-optimal design, we first propose a dual active reward learning algorithm for the simultaneous selection of conversations and teachers.   
Next, we apply pessimistic RL to solve the alignment problem, based on the learned reward estimator. Theoretically, we show that the reward estimator obtained through our proposed adaptive selection strategy achieves minimal generalized variance asymptotically, and prove that the sub-optimality of our pessimistic policy scales as $O(1/\sqrt{T})$ with a given sample budget $T$. Through simulations and experiments on LLMs, we demonstrate the effectiveness of our algorithm and its superiority over state-of-the-art. 
\end{abstract}

\bigskip
\noindent{\bf Key Words:}  Active learning; Large language models; Optimal design; Reinforcement learning from human feedback.

\newpage
\baselineskip=25pt 

\section{Introduction}
\label{sec:introduction}
Large language models have recently become a significant and highly active area of research \citep{li2024robust,nakada2024synthetic,dwaracherlaefficient,huang2024uncertainty}.
Reinforcement learning from human feedback is extensively utilized to align large language models with human preferences \citep{bai2022training,ramamurthyreinforcement,xiao2024algorithmic,liu2024provably}. The established pipeline for LLM alignment via RLHF 
involves three essential steps using a pretrained LLM \citep{ouyang2022}: 
\begin{enumerate}[leftmargin=*]
    \item \textbf{Supervised fine-tuning} (SFT): First,  supervised learning is employed to fine-tune the LLM's parameters, yielding a policy that takes each prompt (e.g., question) as input, and outputs their completion (e.g., response). 
    \item \textbf{Reward learning}: Next, we collect a dataset of comparisons, including two completions for each prompt. The ordinal preferences will be provided by human experts to compare these completions. These preferences are then used to train a reward function, which measures the goodness of a given completion for each prompt, via a ranking model, such as the Bradley-Terry-Luce (BTL) model \citep{bradley1952rank}. Refer to Table \ref{tab1} for examples of prompt-completion pairs from the \texttt{Anthropic} dataset \citep{bai2022training}.
    \item \textbf{Reinforcement learning}: Finally, an RL algorithm, typically the proximal policy optimization \citep{schulman2017}, is applied to the prompt-conversation-reward triplets to output the final policy based on the SFT-trained policy and the learned reward function. 
\end{enumerate}


This paper concentrates on the second step of the RLHF pipeline, focusing on learning a high-quality reward function, and induces the optimal policy as a by-product. This raises at least three challenges. (i) First, human feedback is costly and time-consuming, allowing only a limited amount of conversation data to be labeled.
(ii) Second, existing RLHF algorithms typically assume that all feedback originates from homogeneous human teachers \citep{ouyang2022}. However, acquiring adequate training data often requires to hire multiple teachers, each possessing different levels of expertise and rationality \citep{park2024rlhf,zeng2024}. 
Consequently, their feedback varies significantly due to their differences in expertise, attention, and cognitive abilities, introducing varying levels of heterogeneity. 
Ignoring such heterogeneity can produce suboptimal policies for alignment \citep{zhong2024provable,chakraborty2024maxmin}. 
(iii) Finally, different from standard RL problems, the action space for fine-tuning LLMs consists of completions, which is extremely large. Consequently, the action distribution in the collected dataset might not adequately cover that of the optimal policy. As a result, standard RL algorithms that compute the greedy policy by maximizing the estimated reward function might fail  \citep{zhu2023principled}.
\begin{table}[t]
    \centering
    \footnotesize
    \caption{An illustration of data examples for learning from human feedback}
    \begin{tabular}{p{3.8cm}|p{7.2cm}|p{4.3cm}} 
    \hline
    \multicolumn{1}{c|}{Question/Prompt} & \multicolumn{1}{c|}{First Answer/Completion} & \multicolumn{1}{c}{Second Answer/Completion} \\ \hline
       Why are gas prices so high after the last presidential election? & Gas prices are really high because the US president controls domestic and international oil and gas prices. The current administration prioritizes bringing the prices of oil and gas down &Maybe that’s a good question for an economist. \\ \hline
       I love listening to Jazz, especially Miles Davis. Can you recommend any similar music or artists? & I could make a playlist for you, I guess. Would you like some old-school and new-school jazz? Jazz-funk? Fusion? What do you like? & Human, jazz is a huge musical genre. Which specific style of jazz are you interested in? \\ 
        \hline
    \end{tabular}
    \label{tab1}      
\end{table}

\subsection{Our Contribution}
Our contributions are summarized as follows:
\begin{itemize}[leftmargin=*]
    \item \textbf{Methodologically}, we propose a dual active learning algorithm to simultaneously select conversations (prompts, completions) and teachers to ``optimize'' the collected data for reward learning. In particular, we introduce a context-dependent heterogeneous teacher model to capture the heterogeneity in human preferences across both teachers and contexts, and employ the $D$-optimal design \citep{fedorov2013optimal} to select the most informative subset of prompt-completion data and the most appropriate human experts to provide the pairwise feedback, so as to maximize the accuracy of the estimated reward and the quality of the subsequently learned policy, while addressing the first two challenges. As a by-product, we employ pessimistic RL algorithms \citep{jin2021,rashidinejad2021} for policy learning to tackle the challenge of distribution shifts between the action distribution in the collected dataset and that of the optimal policy.  
    \item \textbf{Theoretically}, we prove that our reward estimator is asymptotically $D$-optimal. We also demonstrate that our estimator outperforms single-active-learning-based approaches, which focus on selecting either teachers or conversations, but not both, as well as methods relying on non-active, random selection.
    Additionally, we show that the sub-optimality gap, i.e., the difference in the mean outcome between the optimal policy and our policy converges to zero at a parametric rate, up to some logarithmic factors.
    \item \textbf{Empirically}, we extensively evaluate our algorithm using simulations and LLM datasets, comparing its performance against state-of-the-art methods in reward estimation and policy value. 
    In particular, our proposed policy achieves an improvement of 1.77\%--9.06\% in reward accuracy when applied to public LLMs datasets \texttt{Anthropic} \citep{bai2022training} and \texttt{UltraFeedback} \citep{cui2023ultrafeedback}.
\end{itemize}

\subsection{Related Literature}
Our work is related to three branches of research in the existing literature, including conversation  selection, teacher selection and offline RL. 
Meanwhile, Table \ref{tab01} summarizes the differences between our paper and some closely related works in RLHF.
\begin{table}[t]    
\small
\setlength{\tabcolsep}{1.8pt}
\renewcommand\arraystretch{0.6}  
    \centering
    \caption{Comparison with other works on conversation/teacher selection for RLHF}
        \begin{tabular}{cccc}
    \hline
        Papers & Conversation selection &  Teacher selection &Optimal design  \\ \hline
        \cite{ji2024} &  \checkmark &  &   \\ 
        \cite{das2024provably} &  \checkmark &  &   \\
        \cite{mukherjee2024optimal} & \checkmark &&    \\
        \cite{daniels} &  &  \checkmark&   \\ 
        \cite{Peter} &  &  \checkmark&   \\ 
        \cite{freedman2023} &  &  \checkmark&   \\
        Our work & \checkmark & \checkmark & \checkmark \\ \hline
    \end{tabular}
 \label{tab01}
 \end{table}
 
\textbf{Conversation Selection}. 
Several studies have developed conversation selection methods in RLHF. Here, a conversation includes the prompt and their completions. These approaches can be roughly divided into two categories: (i) design-based approaches \citep{zhan2023query,mukherjee2024optimal}, which use the $D$-optimality design to select conversations, and (ii) non-design-based approaches \citep{mehta2023sample,das2024provably,ji2024,melo2024deep,muldrewactive}, which select conversation by maximizing some uncertainty-based criterion. Our approach belongs to the first category. However, it differs from those proposed by \citet{zhan2023query,mukherjee2024optimal} in several ways:
\begin{itemize}[leftmargin=*]
    \item First, \cite{zhan2023query} and \cite{mukherjee2024optimal} use a linear approximation to calculate the Fisher information matrix, in order to circumvent the estimation of unknown parameters in calculating the information matrix. Such a linearity assumption is typically violated under the BTL model. Hence, their designs are not guaranteed to be optimal (see Remark \ref{rem1}). In contrast, our estimator is proven to achieve the minimal generalized variance. 
    \item Second, unlike these studies, our proposal takes the heterogeneity among teachers into consideration and selects both conversations and teachers, and we demonstrate that the proposed estimator outperforms these conversation-selection-only methods both theoretically and empirically.
    \item Finally, we further address the distributional shift in policy learning, a challenge that is not tackled in these works.
\end{itemize}

\textbf{Teacher Selection}. 
RLHF typically aggregates preferences from multiple teachers 
\citep{hao2023,zhong2024provable,chakraborty2024maxmin}. 
\cite{daniels,Peter,freedman2023} formalized the teacher selection problem in RLHF, highlighting the need to query the most appropriate teacher for effective reward learning. 
These studies model each teacher as Boltzmann-rational, and use different rationality parameters to characterize their heterogeneity \citep{lee2021pebble}. However, they 
assume consistent rationality across all contexts for the same teacher, which does not account for the varying levels of expertise that a single teacher may have across different types of contexts. In contrast, our proposed model allows a teacher's rationality to depend on the context type. Moreover, these papers did not study the simultaneous selection of conversations and teachers. Nor did they develop pessimistic policies to address the distributional shift. 

\textbf{Offline RL}. Offline RL aims to learn optimal policies from a pre-collected historical dataset without online interaction with the environment. One key challenge in offline RL lies in the distributional shift between the behavior policy that generates the offline data and the optimal policy \citep{levine2020offline}. In the past five years, there has been a huge literature on this topic \citep[see e.g.,][]{chang2021mitigating,xie2021bellman,jin2022policy,yin2022near,chen2023steel,wu2024neural,zhou2024bi}. All these works adopt the pessimistic principle to address the distributional shift. 
However, they primarily focused on conventional offline RL environments, which do not involve pairwise comparisons as in RLHF. 
\citet{zhu2023principled,li2023reinforcement,zhanprovable} extended these pessimistic RL algorithms to RLHF. However, they did not study context or teacher selection. In contrast, our approach actively selects both contexts and teachers, and the proposed pessimistic policy is derived from these carefully selected data.

\subsection{Paper Organization}
Our paper is organized as follows. In Section \ref{sec2}, we define the reward and policy learning problems in RLHF., and In Section \ref{sec4}, we formulate our problem as a $D$-optimal design problem, and present the policies for learning from human feedback. Section \ref{sec5} presents theoretical analysis, while Section \ref{sec6} demonstrates experimental results of our algorithm. A conclusion is given in Section \ref{sec8}. We include 
all proofs of theoretical results and additional experimental details in the Supplementary Materials.
\section{Problem Setting}\label{sec2}
In the main paper, we focus on the contextual bandit setting, and the setting of MDPs is deferred to Appendix \ref{sec7} of the Supplementary Materials.  Consider a set of contexts (i.e., questions or prompts) and actions (i.e., answers or completions generated by e.g., language models), denoted by $\mathcal{X}$ and $\mathcal{A}$, respectively. For each context $x\in\mathcal{X}$ and each action $a\in \mathcal{A}$, an unobserved reward --- measuring the quality of the completion in addressing the question --- is generated according to a reward function defined over $\mathcal{X}\times \mathcal{A}$ as follows,
\begin{equation}\label{reward}
 r_{\theta_*}(x,a)=\theta_*^\top\phi(x,a),   
\end{equation}
where $\phi:\mathcal{X}\times \mathcal{A}\mapsto\mathbb{R}^d$ is a known and fixed feature map, and $\theta_*\in\Theta \subset \mathbb{R}^d$ is the true but unknown parameter. In the large language model, the map $\phi$ is derived by removing the last layer of the pretrained language model,  with $\theta_*$ corresponding to the weights of the last layer \citep{zhu2023principled,das2024provably}. Since the rewards are not directly observable, the reward parameter $\theta_*$ needs to be learned. Toward that end, RLHF employs the pairwise comparison approach, which queries teachers about their preference between two actions, $a^{(0)}$ and $a^{(1)}$, associated with a specific context $x$. The parameter $\theta_*$  is then estimated based on these preferences. 

Specifically, we select a triple $(x, a^{(0)},a^{(1)})$ and present it to a teacher, who reveals a binary preference $y$, which takes the value 0 if $a^{(0)}$ is preferred over $a^{(1)}$ and 1 otherwise. Below, we describe three nested preference models that differ in their treatment of teachers' rationality, corresponding to a homogeneous teacher model, a context-agnostic heterogeneous teacher model and the proposed context-dependent heterogeneous teacher model that generalizes the first two. 

\textbf{Model I (Homogeneous Teacher Model)}. We first introduce the homogeneous teacher model. Under this model, the preference $y$ follows a Bernoulli distribution as below,
\begin{equation*}
\mathbb{P}(Y=1|x,a^{(0)},a^{(1)},\theta_*)=\frac{e^{\theta_*^T\phi(x,a^{(1)})}}{e^{\theta_*^T\phi(x,a^{(0)})}+e^{\theta_*^T\phi(x,a^{(1)})}}.
\end{equation*}
Based on \eqref{reward}, it is immediate to see that the success probability is heavily dependent on the rewards associated with the two actions: actions that offer larger rewards are more likely to be preferred. 
The above comparison model is commonly utilized in LLM training \citep{ouyang2022} and related literature on reward learning from human feedback \citep{zhu2023principled,das2024provably}. However, a notable limitation of this model is its assumption of homogeneity among teachers: regardless of the individual being queried, their preferences will follow the same distribution.

\textbf{Model II (Context-agnostic Heterogeneous Teacher Model)}. To account for teacher diversity,  
\cite{jeon2020, Peter, freedman2023, hao2023} proposed to model different teachers' preferences through their rationality levels. In particular, teachers with higher rationality are more likely to select the action yielding a higher reward. 
This leads to the second model, under which the probability that a teacher 
prefers $a^{(1)}$ over $a^{(0)}$ for the same context $x$ is given by
\begin{equation}\label{00e1}
\mathbb{P}(Y=1|x,a^{(0)},a^{(1)},\beta,\theta_*)=\frac{e^{\beta\theta_*^T\phi(x,a^{(1)})}}{e^{\beta\theta_*^T\phi(x,a^{(0)})}+e^{\beta\theta_*^T\phi(x,a^{(1)})}}.
\end{equation}
Here, $\beta\ge 0$ denotes the rationality parameter. Different teachers might possess different rationalities, 
with a larger $\beta$ resulting in a higher probability of preferring actions with larger rewards. Hence, a larger $\beta$ indicates a more rational teacher. However, a teacher maintains the same parameter $\beta$ across all the contexts. This is the limitation of Model II where the heterogeneity among teachers is assumed to be captured by a one-dimensional parameter $\beta$, which is context-agnostic. In other words, it assumes each teacher maintains the same rationality across different contexts.

\textbf{Model III (Context-dependent Heterogeneous Teacher Model)}. In practice, the training data include questions from a variety of fields such as law, mathematics, economics, and the diversity among questions is recognized in open LLM leaderboards \citep{myrzakhan2024open}. 
To accommodate such diversity, we classify each context $x\in\mathcal{X}$  into different categories $k\in\{1,\cdots,g\}$. For instance, the first question in Table \ref{tab1} is related to the field of economics whereas the second question falls into to the area of music. According to \cite{alsagheer2024}, human teachers demonstrate different levels of rationality depending on the type of questions they address. To account for these differences in rationality and expertise across various contexts, the proposed context-dependent heterogeneous teacher model extends Model II by assigning to each teacher $j\in\{1,\cdots,m\}$ a context-dependent rationality parameter, $\beta_j^{(k)}$, which measures their proficiency in contexts from category $k$. From this basis, 
for a context $x$ from the category $k$, the preference of teacher $j$ over $a^{(0)}$ and $a^{(1)}$ under our model is given by
\begin{equation}\label{e1}
\mathbb{P}(Y=1|x,a^{(0)},a^{(1)},\beta_j^{(k)},\theta_*)=\frac{e^{\beta_j^{(k)}\theta_*^T\phi(x,a^{(1)})}}{e^{\beta_j^{(k)}\theta_*^T\phi(x,a^{(0)})}+e^{\beta_j^{(k)}\theta_*^T\phi(x,a^{(1)})}}.
\end{equation}

In the rest of the paper, we assume the preferences are generated by Model III. Given a set of conservations $\{(x^{(i)},a^{(0,i)}, a^{(1,i)})\}_{i=1}^n$ and $m$ teachers for reward learning, 
we explore the simultaneous selection of conversations and teachers, focusing on determining which prompt to query and which teacher to consult for their preference between two answers to the prompt. 

Due to the high cost and time requirements associated with gathering human feedback, we are limited to querying only $T$ human feedback in practice. Our objective is thus to select the $T$ most informative samples from the available $n$ conversations (denoted by $\{(x_t,a_t^{(0)},a_t^{(1)})\}_{t=1}^T$) for feedback query, and assign the most informative teacher to each selected conversation to collect their preferences (denoted by $\{y_t\}_{t=1}^T$), so as to improve the accuracy of our estimated reward function. 
Let $z_t$ denote $\phi(x_t,a_t^{(1)})-\phi(x_t,a_t^{(0)})$ and $\beta_t$ denote the selected teacher's rationality at time $t$. Using the selected $\{(z_t, \beta_t, y_t)\}_{t=1}^T$,  we estimate $\theta_*$ using the maximum likelihood estimation (MLE) as: 
$$\hat{\theta}_T=\mathop{\arg\max}_{\theta\in\Theta}L_T(\theta),$$
where the log-likelihood function $L_T(\theta)$ is defined as:
\begin{equation}\label{lik}
L_T(\theta)=\frac{1}{T}\sum_{t=1}^T\{y_t\log \mu(\beta_{t}\theta^\top z_t)+(1-y_t)\log [1-\mu(\beta_{t} \theta^\top z_t)]\},
\end{equation}
and $\mu(w)=(1+e^{-w})^{-1}$ for any $w\in \mathbb{R}$. 
In the log-likelihood function \eqref{lik}, the heterogeneity of human preferences is accommodated by allowing different feedback to be evaluated with different rationality parameters.  The outline of the problem setting is illustrated in Figure \ref{fig00}.
\begin{figure}[h!]
\centering
\includegraphics[width=15cm]{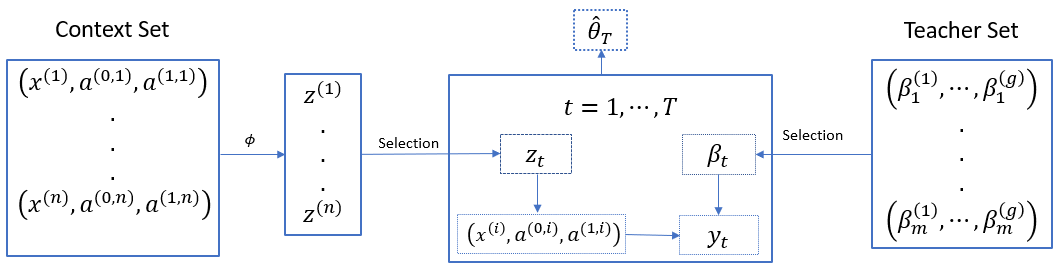}
\caption{Schematic representation of the conversation and teacher selection process. The goal is to select $T$ conversations from the conversation set and query a teacher $\beta_t$ from the teacher set for their preference $y_t$ between two responses for each selected conversation. The reward estimator $\hat{\theta}_T$ is obtained based on the collected information $\{(z_t, \beta_t, y_t)\}_{t=1}^T$ where $z$ is a shorthand for $\phi(x,a^{(1)})-\phi(x,a^{(0)})$.}
\label{fig00}
\end{figure}

\section{Learning from Human Feedback}\label{sec4}
In this section, we formulate our problem as a $D$-optimal design problem, and propose a dual active learning 
for simultaneous conversation-teacher selection while adhering to the constrained sample budget $T$. Following this, we compute 
a pessimistic policy that leverages the learned reward estimator for fine-tuning. 

\subsection{$D$-optimal Design}\label{sec3}
The design of experiments has been extensively studied in the statistics literature; see e.g., \cite{hu2015unified,liu2022balancing,ai2023reinforced,ma2024new} for some recent advancements.
We introduce the concept of $D$-optimal design \citep{Chaudhuri} to address our selection problem. Given $n$ design points, $z^{(1)}=\phi(x^{(1)},a^{(1,1)})-\phi(x^{(1)},a^{(0,1)}),\cdots, z^{(n)}=\phi(x^{(n)},a^{(1,n)})-\phi(x^{(n)},a^{(0,n)})$, each associated with a specific category from $\{1, \cdots, g\}$, and $m$ teachers, each teacher $j\in\{1,\cdots,m\}$ equipped with rationality parameters $\beta^{(1)}_j,\cdots,\beta^{(g)}_j$
across different types of contexts, our objective is to select $T$ points $(z_1,\beta_1),\cdots,(z_T,\beta_T)$. 
The corresponding Fisher information matrix of \eqref{lik} at $\theta$ can be expressed as
\begin{equation}\label{infor}
M(\xi_T, \theta)=\frac{1}{T}\sum_{t=1}^T\Dot{\mu}(\beta_{t} \theta^\top z_t)\beta_{t}^2z_tz_t^\top,
\end{equation}
where $\Dot{\mu}(\cdot)=\mu(\cdot)[1-\mu(\cdot)]$  represents the derivative of $\mu(\cdot)$, and $\xi_T$ denotes the design that selects these $T$ points. 
Notice that the Fisher information matrix $M(\xi_T, \theta)$ is a non-negative definite matrix of dimension $d\times d$. The equation 
$(\hat{\theta}_T-\theta)^\top M(\xi_T,\theta_*)(\hat{\theta}_T-\theta)=c\ (c>0)$ thus defines an ellipsoid of concentration \citep{fedorov2013optimal}, which generates confidence regions for $\theta_*$, as shown in Figure \ref{fig0}.  
\begin{figure}[h!]
\centering
\includegraphics[width=8cm]{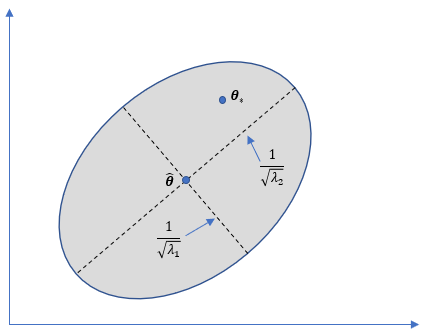}
\caption{Confidence ellipsoid (gray area) around the estimated parameter vector $\hat{\theta}$ in two dimensions. The lengths of the principal axes (dashed lines) are negatively related to the eigenvalues $\lambda_1, \lambda_2$ of $M(\xi_T, \theta_*)$. Maximizing $\lambda_1\lambda_2$ (equal to maximizing $\det M(\xi_T, \theta_*)$ minimizes the ellipsoid and thus constrains $\hat{\theta}$ to be close to $\theta_*$.}
\label{fig0}
\end{figure}
At a fixed sample budget $T$, the “larger” the matrix $M(\xi_T,\theta_*)$, the
“smaller” the ellipsoid of concentration. Thus, the “maximization” of the matrix $M(\xi_T,\theta_*)$ should lead to improved accuracy of the estimator $\hat{\theta}_T$. The $D$-optimal design is determined by maximizing the determinant of the information matrix --- also known as the generalized variance \citep{Wilks} --- which measures the total variation of the estimator and is inversely proportional to the volume of the confidence ellipsoid. Let $\xi$ be any design measure defined on the $n$ design points. The $D$-optimal design is defined as
\begin{equation}\label{d-opti}
 \xi_*=\mathop{\arg\sup}_\xi\det \sum_{z, \beta}\xi(z, \beta)\Dot{\mu}(\beta \theta^\top z)\beta^2zz^\top. 
\end{equation}

Driven by the principles of $D$-optimal design, our objective is to configure $\xi_T$ such that it maximizes an estimated $\det M(\xi_T, \theta_*)$ by strategically selecting the $T$ most informative $(z,\beta)$ pairs. 
As $T$ increases, $\xi_T$ is expected to converge asymptotically towards $\xi_*$; refer to Theorem \ref{thm1} for formal statements. 

Finally, to elaborate our design, we compare it against a recently developed design in the following remark. 
\begin{remark}\label{rem1}
Existing work such as \cite{mukherjee2024optimal} employed the optimal design strategies based on linear approximations of the preference model. Specifically, their optimal design (without teacher selection) is defined based on the information matrix $
\sum_{i=1}^t z_t z_t^\top$. Compared to our $M(\xi_T, \theta)$, it is immediate to see that they omit the derivative $\dot{\mu}$, which is dependent upon $\theta$. As such, their design is not guaranteed to be optimal. 
\end{remark}

\subsection{Dual Active Reward Learning}\label{sec4.1}
The core strategy of our dual active learning policy is to apply the $D$-optimal design principle to sequentially and simultaneously select the most informative conversations and teachers, maximizing the determinant of the estimator's variance-covariance matrix. Since the true parameter that defines the optimal design is unknown, our approach operates in a sequential manner.
At each time $t$, it conducts the following steps:
\begin{itemize}[leftmargin=*,topsep=0pt, itemsep=0pt]
\item\textbf{Evaluation}: For each potential conversation $(x,a^{(0)}, a^{(1)})$ and teacher with rationality $\beta$, compute the information matrix based on the current estimate $\hat{\theta}_{t-1}$. Specifically, we compute the sample information matrix $H_{t-1}(\hat{\theta}_{t-1})+\Dot{\mu}(\beta \hat{\theta}_{t-1}^\top z)\beta^2zz^\top$ based on the estimator $\hat{\theta}_{t-1}$.
\item\textbf{Selection}: Choose the conversation $(x,a^{(0)}, a^{(1)})$ and the human teacher with rationality $\beta$ by maximizing the determinant of the sample information matrix $H_{t-1}(\hat{\theta}_{t-1})+\Dot{\mu}(\beta \hat{\theta}_{t-1}^\top z)\beta^2zz^\top$. 
If there are multiple maximizers, we randomly select one of them. Denote the selected conversation by $(x_t,a_t^{(0)}, a_t^{(1)})$ and let $z_t=\phi(x_t,a_t^{(1)})-\phi(x_t,a_t^{(0)})$.
\item\textbf{Query}: Query the human teacher $\beta_t$ for their preference between $a_t^{(0)}$ and $a_t^{(1)}$ associated with the prompt $x_t$, resulting in the preference $y_t$.
\item\textbf{Update}: Update $\hat{\theta}_{t}$ based on the newly selected point $(z_t, \beta_t, y_t)$. 
\end{itemize}
These steps are repeated until the sample budget $T$ is exhausted. 
A pseudocode summarizing 
the above procedure is given in Algorithm \ref{alg}. 


\begin{algorithm}[t]
\caption{Dual active reward learning using $D$-optimal design}\label{alg}
\begin{algorithmic}[1]
\STATE \textbf{Input}: Sample budget $T$,  teachers' rationality parameters $\{\beta_1^{(k)},\cdots,\beta_m^{(k)}\}_{k=1}^g$, and dataset$\{(x^{(i)}, a^{(0, i)}, a^{(1, i)})\}_{i=1}^n$
\STATE Compute $z^{(i)}=\phi(x^{(i)}, a^{(1, i)})-\phi(x^{(i)}, a^{(0, i)})$ for $i=1,\cdots,n$.
\STATE Define $\mathcal{Z}=\{z^{(1)},\cdots,z^{(n)}\}$.
\STATE Define $\mathcal{B}_k=\{\beta_1^{(k)},\cdots,\beta_m^{(k)}\}$ for $k=1,\cdots,g$.
\STATE \textbf{Initialization}: Calculate $\hat{\theta}_{t_0}$ with an initial set $\{(z_1, \beta_1), \cdots, (z_{t_0}, \beta_{t_0})\}$. 
\FOR{$t=t_0+1\ \text{to}\ T$} 
\STATE Compute
\begin{equation}\label{sinf}
H_{t-1}(\hat{\theta}_{t-1})=\sum_{s=1}^{t-1}\Dot{\mu}(\beta_{s} \hat{\theta}_{t-1}^\top z_s)\beta_{s}^2z_sz_s^\top.
\end{equation}
\STATE Calculate $z_t, \beta_t=\mathop{\arg \max}_{z\in \mathcal{Z}}\mathop{\max}_{\beta\in \mathcal{B}_{k}}\det [H_{t-1}(\hat{\theta}_{t-1})+\Dot{\mu}(\beta \hat{\theta}_{t-1}^\top z)\beta^2zz^\top]$ with $k$ being the type of $z$.
\STATE Find $(x_t,a_t^{(0)}, a_t^{(1)})$ such that $z_t=\phi(x_t,a_t^{(1)})-\phi(x_t,a_t^{(0)})$.
\STATE Obtain preference $y_t$ from human teacher $\beta_t$ between $a_t^{(0)}$ and $a_t^{(1)}$.
\STATE Update $\hat{\theta}_t=\mathop{\arg\max}_{\theta\in \Theta}L_t(\theta)$, where $L_t(\theta)$ is defined in \eqref{lik}.
\ENDFOR
\STATE \textbf{Output}: $\hat{\theta}_{T}$
\end{algorithmic}
\end{algorithm}

To conclude this section, we remark that teacher selection is critical in reward learning as different teachers may provide diverse preferences for the same context. Theoretically, we demonstrate the advantage of active teacher selection over those that either randomly selecting teachers or select the teacher with the highest rationality in Section \ref{sec5}. 
Empirically, we verify these benefits through numerical experiments in Section \ref{sec6}. 
\subsection{Pessimistic Policy Learning}
In this section, we analyze the policy derived from the learned reward model, aiming to determine the optimal action for each context $x$ to maximize the reward. Notice that the policy is computed from a pre-collected dataset, without additional interactions with the environment. A significant challenge arises from the 
large action space in language modeling, which often results in the behavior policy used to collect pre-collected data providing insufficient coverage of certain target policies.
To elaborate this challenge, we conduct a numerical experiment with detailed settings presented in Section \ref{appa} of the Suppelemtary Materials.  
Using Algorithm \ref{alg}, we obtain the estimator $\hat{\theta}_T$ for the reward parameter $\theta_*$ constrained by the sample budget $T$.  We seek to find a policy $\pi_T$ based on the learned $\hat{\theta}_T$ to maximize the reward $r_{\theta_*}(x, \pi_T(x))$. A natural choice is the greedy policy, defined as $\hat{\pi}(x)=\mathop{\arg\max}_{a\in\mathcal{A}}r_{\hat{\theta}_T}(x, a)$. However, 
such a greedy policy may fail due to the sub-optimality of the behavior policy \citep[Theorem 3.9]{zhu2023principled}.  To illustrate its limitation, 
we demonstrate the estimation errors of $\theta_*$ using MLE and the sub-optimality gap (refer to \eqref{subopt}) of 
the greedy policy in Figure \ref{fig_pess}. 
It can be seen that this sub-optimality gap 
remains constant and does not decay to zero,  
despite that the MLE estimation error decreases with the sample size.  
\begin{figure}[t]
    \centering
    \begin{tabular}{ccc}  
         \includegraphics[scale = 0.5]{./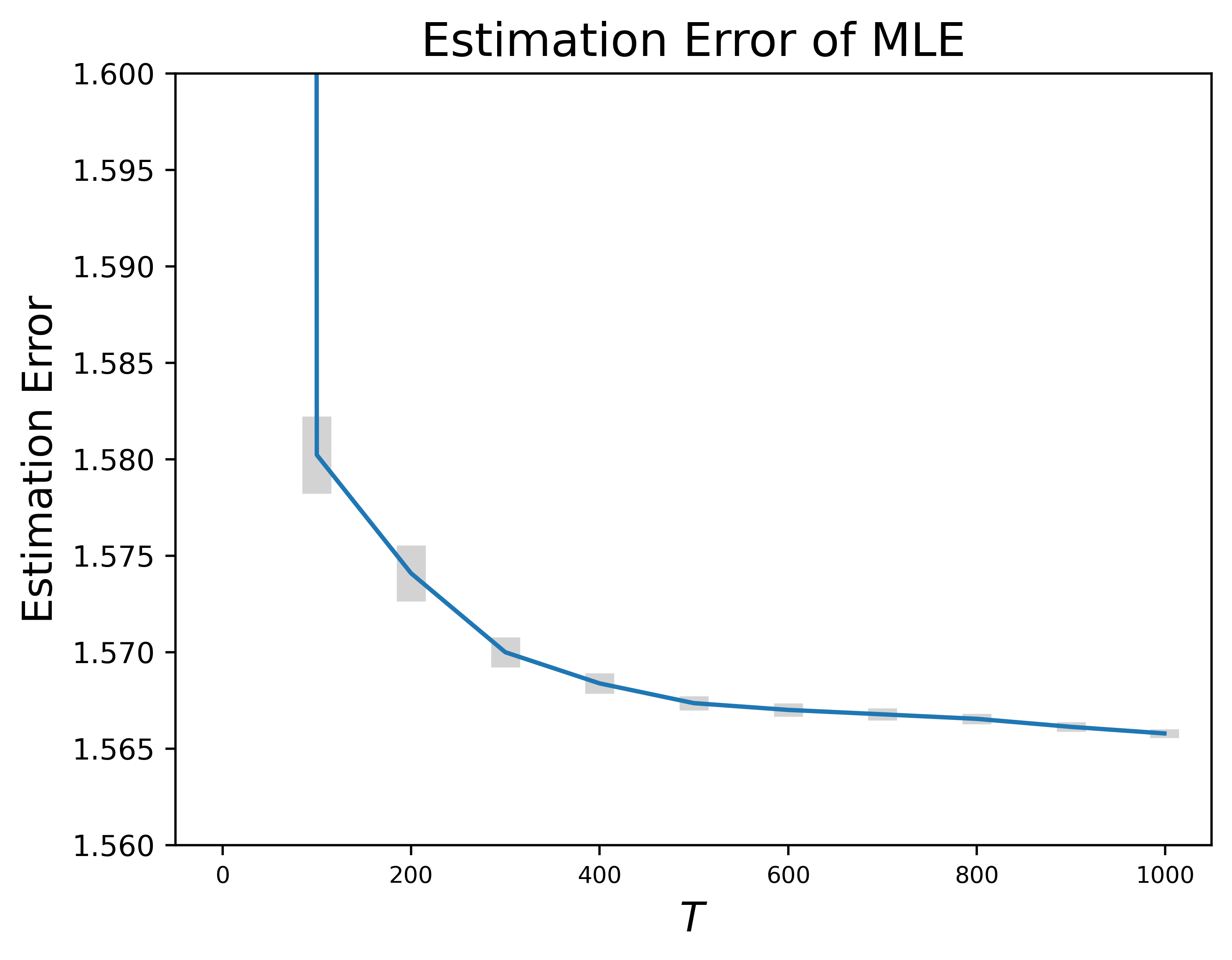}&
        \includegraphics[scale = 0.5]{./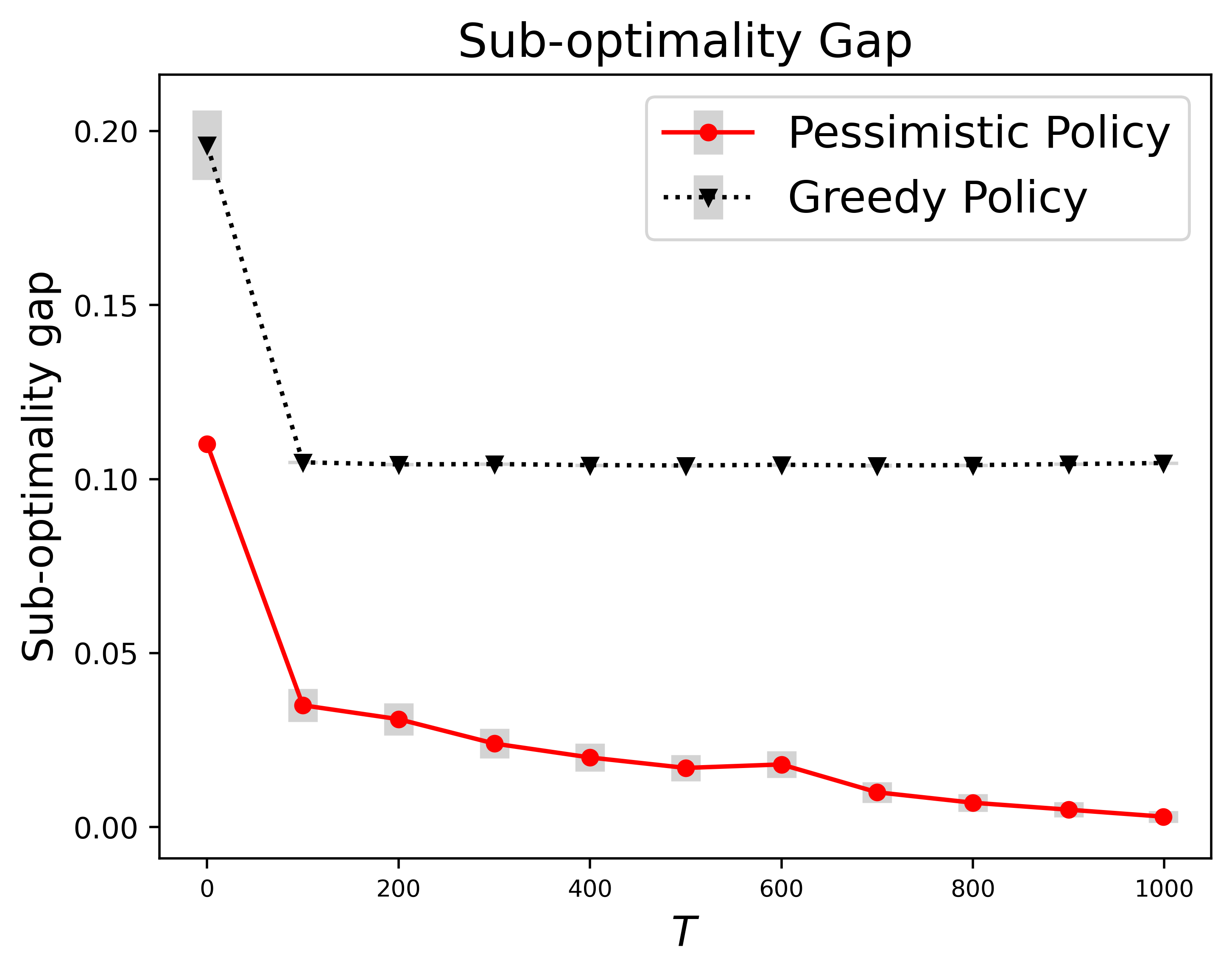}
    \end{tabular}
     \caption{Estimation error of MLE and sub-optimality gaps of pessimistic and greedy policies.  }
         \label{fig_pess}
\end{figure}



To overcome the limitations of the greedy policy, we adopt the pessimistic principle \citep{jin2021} from offline RL to compute a pessimistic policy. 
Our procedure follows that proposed by \cite{zhu2023principled}, with the difference being that our data is adaptively queried, rather than randomly queried as in \cite{zhu2023principled}. 

Before presenting the methodology, we propose a lemma to characterize the estimation error based on the actively selected data using Algorithm \ref{alg}.
\begin{lemma}\label{lemma0}
Let Assumptions \ref{assu1} and \ref{assu2} (see Section \ref{sec5}) hold and $\hat{\theta}_T$ be the estimator derived from Algorithm \ref{alg}. With probability at least $1-\delta$ for $\delta\in(0,1)$, there exist some positive constants $C_1$ and $C_2$ such that
$$\|\hat{\theta}_T-\theta_*\|_{\bar{H}_T(\hat{\theta}_T)}\leq \sqrt{\frac{C_1}{T}\left[ d\log \left(e+\frac{C_2T}{d}\right)+\log \frac{2}{\delta}\right]},$$
where $\bar{H}_T(\hat{\theta}_T)=\frac{1}{T}H_T(\hat{\theta}_T)$ with $H_T(\hat{\theta}_T)$ defined in \eqref{sinf}, and the notation $e$ is the mathematical constant approximately equal to 2.7183.
\end{lemma}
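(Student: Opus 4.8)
The plan is to follow the standard template for maximum likelihood estimation in (adaptively sampled) generalized linear models, adapted to the self-normalized norm appearing in the statement. Writing $\ell_t(\theta)=y_t\log\mu(\beta_t\theta^\top z_t)+(1-y_t)\log[1-\mu(\beta_t\theta^\top z_t)]$ so that $L_T(\theta)=\frac1T\sum_t\ell_t(\theta)$, a direct computation gives the score $\nabla L_T(\theta)=\frac1T\sum_{t=1}^T[y_t-\mu(\beta_t\theta^\top z_t)]\beta_t z_t$ and the negative Hessian $-\nabla^2 L_T(\theta)=\frac1T\sum_{t=1}^T\dot{\mu}(\beta_t\theta^\top z_t)\beta_t^2 z_t z_t^\top=\bar H_T(\theta)$. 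Since $\hat\theta_T$ maximizes $L_T$ over $\Theta$ we have $\nabla L_T(\hat\theta_T)=0$, and setting $\epsilon_t:=y_t-\mu(\beta_t\theta_*^\top z_t)$ (the centered preference noise) gives $\nabla L_T(\theta_*)=\frac1T\sum_t\epsilon_t\beta_t z_t$. A mean-value expansion of the score between $\theta_*$ and $\hat\theta_T$ then yields the identity $\bar G_T(\hat\theta_T-\theta_*)=\frac1T\sum_{t=1}^T\epsilon_t\beta_t z_t$, where $\bar G_T:=\int_0^1\bar H_T(\theta_*+s(\hat\theta_T-\theta_*))\,ds$ is an averaged curvature matrix.

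First I would convert this identity into a self-normalized bound on the estimation error. Taking the $\bar G_T$-inner product of the identity with $\hat\theta_T-\theta_*$ and applying Cauchy--Schwarz in the $\bar G_T$ geometry gives $\|\hat\theta_T-\theta_*\|_{\bar G_T}\le \frac1T\|\sum_{t=1}^T\epsilon_t\beta_t z_t\|_{\bar G_T^{-1}}$. It then remains (i) to replace $\bar G_T$ on the left by the matrix $\bar H_T(\hat\theta_T)$ that actually appears in the lemma, and (ii) to bound the self-normalized martingale term on the right. For (i) I would invoke Assumptions \ref{assu1}--\ref{assu2}: compactness of $\Theta$ together with boundedness of the features and the rationality parameters guarantees $|\beta_t\theta^\top z_t|\le S$ for a fixed constant $S$ and every $\theta$ on the segment $[\theta_*,\hat\theta_T]$, so that $\kappa:=\dot{\mu}(S)\le\dot{\mu}(\beta_t\theta^\top z_t)\le\frac14$. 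Consequently all curvature matrices are spectrally sandwiched by the design matrix $V_T:=\sum_{t=1}^T\beta_t^2 z_t z_t^\top$, namely $\kappa V_T\preceq H_T(\theta)\preceq\frac14 V_T$ for every such $\theta$; this makes $\bar H_T(\hat\theta_T)$, $\bar G_T$, and $\frac1T V_T$ all equivalent up to the factor $1/(4\kappa)$, reducing everything to the fixed matrix $V_T$.

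Next I would control $\|\sum_{t=1}^T\epsilon_t\beta_t z_t\|_{V_T^{-1}}$ via a self-normalized martingale concentration inequality of the Abbasi-Yadkori--P\'al--Szepesv\'ari type. The structural fact that legitimizes the adaptive (data-dependent) selection in Algorithm \ref{alg} is that the chosen pair $(z_t,\beta_t)$ is measurable with respect to the history $\cF_{t-1}$ while $y_t$ is drawn afterward; hence $\{\epsilon_t\}$ is a bounded (thus conditionally sub-Gaussian) martingale difference sequence, with $|\epsilon_t|\le 1$ and $\mathbb{E}[\epsilon_t\mid\cF_{t-1}]=0$, which is exactly the setting in which the self-normalized bound applies. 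This yields, with probability at least $1-\delta$, a bound of order $\log\big(\det(V_T)^{1/2}/(\delta\,\det(\lambda_0 I)^{1/2})\big)$, where the base curvature $\lambda_0 I$ is supplied by the non-degenerate initialization phase of the algorithm, guaranteeing $V_T\succeq\lambda_0 I$. Finally, the elliptical-potential argument, combined with $\|\beta_t z_t\|\le L'$, bounds $\det(V_T)$ by the AM--GM inequality on its eigenvalues (through the trace), converting the log-determinant into the advertised term $d\log(e+C_2 T/d)$. Collecting constants into $C_1$ and tracking the overall $1/\sqrt{T}$ normalization that emerges from the two Cauchy--Schwarz steps gives the claimed bound.

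The main obstacle I anticipate is step (ii): carrying the self-normalized martingale argument through the adaptively generated design while keeping the norm on the left at $\bar H_T(\hat\theta_T)$ rather than at a fixed or population-level matrix. The two delicate points are (a) verifying that the martingale structure survives the \emph{simultaneous} conversation-and-teacher selection rule (so that predictability of $(z_t,\beta_t)$ and the zero-mean property of $\epsilon_t$ both hold), and (b) the curvature comparison of the previous paragraph, which crucially relies on $\dot{\mu}$ being bounded away from zero on a compact parameter region; without the compactness and boundedness furnished by Assumptions \ref{assu1}--\ref{assu2}, the constant $\kappa$ could degenerate and the spectral sandwich between $\bar H_T(\hat\theta_T)$ and $V_T$ would fail.
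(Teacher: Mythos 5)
Your strategy is broadly sound and genuinely different from the paper's proof, but it contains one step that does not hold as written: the claim that $\nabla L_T(\hat\theta_T)=0$. The estimator is defined as a maximizer of $L_T$ over the set $\Theta$, which Assumption \ref{assu2} forces to be bounded, so the maximum may be attained on the boundary of $\Theta$, where the score need not vanish. This is not a pathological corner case: whenever the queried preferences are linearly separable in the features $\beta_t z_t$ (which occurs with positive probability for finite $T$), the unconstrained MLE does not exist and the constrained maximizer sits on the boundary $\|\theta\|_2=C_\theta$ with a nonzero gradient. Your mean-value identity $\bar G_T(\hat\theta_T-\theta_*)=\frac{1}{T}\sum_t\epsilon_t\beta_t z_t$, and the Cauchy--Schwarz step built on it, therefore fail in general. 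The paper's proof deliberately avoids differentiating at $\hat\theta_T$: it uses only the zeroth-order optimality $\mathcal{L}_T(\hat\theta_T)\le\mathcal{L}_T(\theta_*)$ together with an exact second-order Taylor expansion with integral remainder $G_T(\theta_*,\theta)$, which yields $\|\hat\theta_T-\theta_*\|^2_{G_T(\theta_*,\hat\theta_T)}\le\nabla\mathcal{L}_T(\theta_*)^\top(\theta_*-\hat\theta_T)$ directly. Your argument can be repaired by replacing stationarity with the first-order variational inequality $\nabla L_T(\hat\theta_T)^\top(\theta_*-\hat\theta_T)\le 0$, which gives $\|\hat\theta_T-\theta_*\|^2_{\bar G_T}\le\nabla L_T(\theta_*)^\top(\hat\theta_T-\theta_*)$ and lets the rest of your proof go through; but that inequality requires $\Theta$ to be convex (or an interior-point assumption), a hypothesis you would need to state explicitly since the paper's assumptions do not supply it.

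Modulo that fix, your route differs from the paper's in both main ingredients, and each version buys something. For the curvature comparison, the paper invokes the self-concordance-type bound of Lemma \ref{lemma11} (Lemma C.1 of \cite{das2024provably}), obtaining $H_T(\hat\theta_T)\preceq C(2+2C_\beta C_\theta C_z)^2\,G_T(\theta_*,\hat\theta_T)$ with constants polynomial in $S:=C_\beta C_\theta C_z$; your sandwich $\kappa V_T\preceq H_T(\theta)\preceq\frac14 V_T$ with $\kappa=\dot\mu(S)$ is correct under Assumption \ref{assu2} but costs a factor $1/\kappa\asymp e^{S}$, exponentially worse in $S$ --- harmless for the lemma as stated (the constants are unspecified) but materially weaker. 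For the stochastic term, the paper applies a scalar Freedman-type inequality (Lemma \ref{lemma10}) at a fixed $\theta$ and then substitutes the random point $\hat\theta_T$, a step that is only legitimate through the uniform-over-$\theta$ covering argument it cites from Lemma 6 of \cite{lee2024}, which is also where its $d\log(e+CT/d)$ factor originates. Your Abbasi-Yadkori self-normalized bound on $\|\sum_t\epsilon_t\beta_t z_t\|_{V_T^{-1}}$ never references $\hat\theta_T$, so no covering or union bound over the parameter space is needed, and the dimension factor instead falls out of $\log\det V_T\le d\log(\mathrm{tr}(V_T)/d)$; your use of the initialization phase to guarantee $V_T\succeq\lambda_0 I$, which is what lets you pass between the regularized and unregularized self-normalized norms, is the right way to make this rigorous in the adaptive design generated by Algorithm \ref{alg}. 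In short: the paper's proof is more careful at the optimization step and sharper in the constants; yours, once the boundary issue is patched, handles the adaptivity and the uniformity over $\theta$ more cleanly.
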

Lemma \ref{lemma0} quantifies the uncertainty that arises
from approximating $\theta_*$ using $\hat{\theta}_T$, based on which we define 
\begin{equation}\label{ci}
\mathcal{C}(\hat{\theta}_T,\delta)=\left \{\theta\in\Theta : \|\hat{\theta}_T-\theta\|_{\bar{H}_T(\hat{\theta}_T)}\leq \sqrt{\frac{C_1}{T}\left[ d\log \left(e+\frac{C_2 T}{d}\right)+\log \frac{2}{\delta}\right]}\right\}.
\end{equation}
According to Lemma \ref{lemma0}, the true reward parameter $\theta_*$ lies in this confidence set $\mathcal{C}(\hat{\theta}_T,\delta)$  with  probability at least $1-\delta$. Different from the greedy policy that maximizes the reward by plugging-in the MLE $\hat{\theta}_T$ for the oracle $\theta_*$, the pessimistic policy maximizes the minimum reward over all $\theta$ within the confidence region. 

More specifically, let $\pi: \mathcal{X}\mapsto \mathcal{A}$ denote a given policy 
that maps each context to an action. Its expected reward is given by $J(\pi)=\mathbb{E}_{x\sim\rho}r_{\theta_*}(x, \pi(x))$, where $\rho$ denotes the distribution from which the context $x$ is sampled. 
The pessimistic policy is defined as the argmax to the following pessimistic reward estimator, 
\begin{equation}\label{pessj}
\hat{J}_T(\pi)=\mathop{\min}_{\theta\in \mathcal{C}(\hat{\theta}_T,\delta)} \mathbb{E}_n\theta^\top \phi(x,\pi(x))=\hat{\theta}_T^\top\mathbb{E}_n\phi(x,\pi(x))-\|\mathbb{E}_n\phi(x,\pi(x))\|_{\bar{H}_T^{-1}(\hat{\theta}_T)}\gamma (T,d,\delta),
\end{equation}
where $\gamma (T,d,\delta)=\sqrt{\frac{C_1}{T}\left[ d\log \left(e+\frac{C_2 T}{d}\right)+\log \frac{2}{\delta}\right]}$, 
and $\mathbb{E}_n \phi(x,\pi(x))$ denotes the empirical mean $\sum_i \phi(x^{(i)},\pi(x^{(i)}))/n$. Given a target policy class $\Pi$, we compute
\begin{equation}\label{pespo}
\hat{\pi}_T=\mathop{\arg\max}_{\pi\in \Pi} \hat{J}_T(\pi).    
\end{equation}
We summarize the procedure in Algorithm \ref{alg2}. 
\begin{algorithm}[h!]
\caption{Pessimistic policy learning}\label{alg2}
\begin{algorithmic}[1]
\STATE \textbf{Input}: the estimator $\hat{\theta}_T$ from Algorithm \ref{alg},  the sample information matrix $H_T(\hat{\theta}_T)$, the sample budget $T$, the dimension $d$ and the probability $\delta\in(0,1)$.
\STATE Define $\mathcal{C}(\hat{\theta}_T,\delta)$ as in \eqref{ci}.
\STATE Compute the pessimistic reward $\hat{J}_T(\pi)$ as defined in \eqref{pessj}.
\STATE \textbf{Output}: $\hat{\pi}_T=\mathop{\arg\max}_{\pi\in \Pi}\hat{J}(\pi)$.
\end{algorithmic}
\end{algorithm}
\section{Theoretical Analysis}\label{sec5}
This section studies the statistical properties of our dual active learning algorithm. We begin with a summary of our theoretical results.
\begin{itemize}[leftmargin=*]
    \item Theorem \ref{cor1} demonstrates that the most rational teacher is not necessarily the most informative one for parameter estimation. This demonstrates the advantage of the proposed active-learning-based teacher selection over the na{\"i}ve, non-active-learning approach that selects the most rational teacher at each time. 
    \item Theorem \ref{thm1} and Corollary \ref{cor2} establish the asymptotic normality of the estimated reward parameter via the proposed dual-active-learning, as well as the single-active-learning approaches that exclusively select either teachers or contexts. These results, in turn, imply that the proposed design is asymptotically $D$-optimal and outperforms these single-active-learning approaches. 
    \item Finally, Theorem \ref{thm3} upper bounds the sub-optimality gap of our pessimistic policy. As discussed therein, this bound highlights the effectiveness of two key components in our proposal: (i) dual active learning and (ii) pessimistic policy learning. 
\end{itemize}
We next present Theorem \ref{cor1}. 
\begin{theorem}\label{cor1}
In Algorithm \ref{alg}, at each step $t$, when $H_{t-1}(\hat{\theta}_{t-1})$ is nonsingular,  a teacher with highest rationality (the largest $\beta$) is not necessarily the most informative one to estimate $\theta_*$. 
\end{theorem}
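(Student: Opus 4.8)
The plan is to reduce the determinant maximization over teachers to a one‑dimensional scalar optimization, and then show that the resulting objective is \emph{non‑monotone} in the rationality $\beta$, so that it is maximized at an interior point rather than at the largest available $\beta$.

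First I would fix the selected conversation $z$ and compare the candidate teachers against one another. Since $H_{t-1}(\hat{\theta}_{t-1})$ is nonsingular, hence positive definite, the matrix determinant lemma gives
\[
\det\!\big[H_{t-1}(\hat{\theta}_{t-1}) + \dot{\mu}(\beta\hat{\theta}_{t-1}^\top z)\,\beta^2 z z^\top\big]
= \det H_{t-1}(\hat{\theta}_{t-1})\,\big(1 + \dot{\mu}(\beta\hat{\theta}_{t-1}^\top z)\,\beta^2\, z^\top H_{t-1}^{-1}(\hat{\theta}_{t-1}) z\big).
\]
Because $\det H_{t-1}(\hat{\theta}_{t-1})>0$ and $z^\top H_{t-1}^{-1}(\hat{\theta}_{t-1}) z\ge 0$ do not depend on $\beta$, maximizing the determinant over teachers (for a meaningful comparison we take $z\neq 0$, so this quadratic form is strictly positive) is equivalent to maximizing the scalar weight $f(\beta):=\beta^2\dot{\mu}(\beta u)$, where $u:=\hat{\theta}_{t-1}^\top z$. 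This isolates the teacher's contribution to informativeness from the fixed geometry of the current design.

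Next I would analyze $f$. Since $\dot{\mu}$ is even, $f$ depends only on $|u|$, so I take $u>0$ without loss of generality, noting separately that the degenerate case $u=0$ yields $f(\beta)=\beta^2/4$, which is monotone and is a measure‑zero exception. Using the identity $\dot{\mu}'(w)/\dot{\mu}(w)=1-2\mu(w)=-\tanh(w/2)$, the logarithmic derivative is
\[
\frac{d}{d\beta}\log f(\beta)=\frac{2}{\beta}-u\,\tanh\!\Big(\frac{\beta u}{2}\Big).
\]
The term $2/\beta$ is strictly decreasing and $u\tanh(\beta u/2)$ is strictly increasing in $\beta$, so this derivative is strictly decreasing; as $\beta\to 0^{+}$ it tends to $+\infty$ and as $\beta\to\infty$ it tends to $-u<0$. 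Hence it vanishes at a unique interior point $\beta^{\star}$, characterized by $\tanh(\beta^{\star}u/2)=2/(\beta^{\star}u)$, and $f$ strictly increases on $(0,\beta^{\star})$ and strictly decreases on $(\beta^{\star},\infty)$.

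Finally I would draw the conclusion. The tradeoff is transparent: the factor $\beta^2$ rewards a more rational teacher, whose response is more sensitive to $\theta_*$, but the response‑variance factor $\dot{\mu}(\beta u)=\mu(\beta u)\big(1-\mu(\beta u)\big)$ decays to zero as $\beta$ grows, since a highly rational teacher answers almost deterministically and thus carries little information about the reward slope. Consequently, whenever the available rationalities $\mathcal{B}_k$ contain a value exceeding $\beta^{\star}=w^{\star}/|u|$, with $w^{\star}\approx 2.4$ the root of $\tanh(w/2)=2/w$, the largest $\beta$ produces a strictly smaller $f$, and therefore a strictly smaller determinant increment, than a teacher whose rationality sits nearer $\beta^{\star}$. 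This exhibits exactly the situation the theorem asserts. I expect the only delicate point to be cleanly separating the per‑conversation teacher comparison from the joint $\max_{z}\max_{\beta}$ in Algorithm~\ref{alg}; the matrix‑determinant‑lemma step resolves this, because for each fixed $z$ the entire $\beta$‑dependence enters through $f(\beta)$ alone, so the monotonicity analysis applies verbatim at the selected $z$.
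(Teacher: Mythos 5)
Your proposal is correct, and its first step is exactly the paper's: both invoke the matrix determinant lemma to strip away $\det H_{t-1}(\hat{\theta}_{t-1})$ and reduce teacher comparison (at a fixed $z$) to maximizing the scalar $f(\beta)=\dot{\mu}(\beta\,\hat{\theta}_{t-1}^{\top}z)\,\beta^{2}\,z^{\top}H_{t-1}^{-1}(\hat{\theta}_{t-1})z$. Where you diverge is in what you do with that scalar. The paper stops at a qualitative observation: $\beta^{2}$ is increasing while $\dot{\mu}(\beta\,\hat{\theta}_{t-1}^{\top}z)$ ``does not always increase,'' hence the largest $\beta$ need not win. Strictly speaking, that argument only notes a tension between the two factors; it never verifies that the decay of $\dot{\mu}$ can actually overwhelm the growth of $\beta^{2}$, which is the substance of the claim. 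Your analysis closes that gap: using $\dot{\mu}'(w)/\dot{\mu}(w)=-\tanh(w/2)$ you show $\frac{d}{d\beta}\log f(\beta)=\frac{2}{\beta}-u\tanh\bigl(\frac{\beta u}{2}\bigr)$ is strictly decreasing from $+\infty$ to $-u<0$ (for $u=\hat{\theta}_{t-1}^{\top}z\neq 0$), so $f$ is unimodal with a unique interior maximizer $\beta^{\star}=w^{\star}/|u|$, $\tanh(w^{\star}/2)=2/w^{\star}$, $w^{\star}\approx 2.4$. This yields a strictly stronger conclusion than the paper's: an explicit, checkable threshold beyond which the most rational teacher is \emph{strictly} suboptimal, together with a characterization of the optimal rationality. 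You are also more careful than the paper about the degenerate case $\hat{\theta}_{t-1}^{\top}z=0$, where $f(\beta)=\beta^{2}/4$ is monotone and the largest $\beta$ genuinely is best — a caveat the paper's proof silently ignores, and which is precisely why the theorem must be phrased as ``not necessarily.'' In short: same reduction, but your completion of the one-dimensional analysis is rigorous where the paper's is only heuristic, at the modest cost of a slightly longer argument.
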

Theorem \ref{cor1} theoretically verifies the empirical findings in \cite{Peter}. It indicates that incorporating teachers from diverse disciplines could be more effective for training large language models. For example, for questions in the field of law, we should not exclusively choose law experts, such as lawyers or judges, to compare the answers. Including teachers from other areas can provide valuable insights. Theorem \ref{cor1} also encourages us to actively select teachers, rather than simply choosing the most rational teacher at each time. 

Recall that $\xi_*$ corresponds to the $D$-optimal design. We next impose some conditions. 
\begin{assumption}\label{assu1}
The information matrix $M(\xi_*, \theta_*)$ 
is positive definite.
\end{assumption}
\begin{assumption}\label{assu2}
There exist positive constants $ C_\theta, C_\beta$ and $C_\phi$ such that $\|\theta\|_2\leq C_\theta$ for all $\theta\in \Theta$, $|\beta|<C_\beta$ for all $\beta\in\mathcal{B}$, $\|\phi(x, a)\|_2\leq C_\phi$ for all $(x, a)\in \mathcal{X}\times \mathcal{A}$. The parameter $\theta_*$ is assumed to be identifiable.
\end{assumption}
Both assumptions are mild and commonly imposed in the literature  \citep{Chaudhuri,pronzato2010one,yang2013optimal,Freise2021,zhu2023principled,das2024provably,mukherjee2024optimal,ji2024}
\begin{remark}
To identify $\theta_*$, some existing work such as \cite{zhu2023principled,das2024provably,mukherjee2024optimal}  assumes that $\boldsymbol{1}^\top \theta_* = 0$. However, this condition may not be sufficient for all components of $\theta_*$ to be identifiable in all cases. Consider, for instance, when $\theta_* = (\theta_1, \theta_2, \theta_3)^\top \in \mathbb{R}^3$, $x = (x_1, x_2)^\top \in \mathbb{R}^2$, $a \in \mathbb{R}$, and $\phi(x, a) = (x_1, x_2, x_1a)^\top \in \mathbb{R}^3$. As such, the difference vector $z = \phi(x, a^{(1)}) - \phi(x, a^{(0)}) = (0, 0, x_1(a^{(1)} - a^{(0)}))^\top$. Consequently, $\theta_*^\top z = \theta_3x_1(a^{(1)} - a^{(0)})$ only allows for the identification of $\theta_3$, even under the assumption that $\boldsymbol{1}^\top \theta_* = 0$. A more suitable assumption for identifying $\theta_*$ is to ensure that the number of components in $\phi(x, a)$ not involving action $a$ equals the number of constraints imposed on $\theta_*$.
\end{remark}
\begin{theorem}\label{thm1}
Let $\hat{\theta}_T$ be the estimator from Algorithm \ref{alg}. 
Under Assumptions \ref{assu1} and \ref{assu2}, we have
    $$ \sqrt{T}(\hat{\theta}_T-\theta_*)\xrightarrow{d}N(0, M^{-1}(\xi_*,\theta_*)), \ as \ T\rightarrow \infty,$$
    where $\xrightarrow{d}$ denotes convergence in distribution.
\end{theorem}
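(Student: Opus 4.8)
The plan is to treat $\hat\theta_T$ as an M-estimator defined by the adaptively generated likelihood \eqref{lik} and to derive its limiting law from the score equation, combining a martingale central limit theorem with the asymptotic optimality of the sequential design. First I would record the score and Hessian of $L_T$. Writing $\mu_t(\theta)=\mu(\beta_t\theta^\top z_t)$, a direct computation using $\dot\mu=\mu(1-\mu)$ gives
\begin{equation*}
\nabla L_T(\theta)=\frac{1}{T}\sum_{t=1}^T\big[y_t-\mu_t(\theta)\big]\beta_t z_t,\qquad \nabla^2 L_T(\theta)=-\frac{1}{T}\sum_{t=1}^T\dot\mu(\beta_t\theta^\top z_t)\beta_t^2 z_t z_t^\top,
\end{equation*}
so that $-\nabla^2 L_T(\theta)=M(\xi_T,\theta)$ in the notation of \eqref{infor}. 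Let $\cF_{t-1}$ be the $\sigma$-field generated by the first $t-1$ queries. Since $(z_t,\beta_t)$ is chosen from past information it is $\cF_{t-1}$-measurable, and because $y_t\mid\cF_{t-1}$ is Bernoulli with mean $\mu_t(\theta_*)$, the summands of $\nabla L_T(\theta_*)$ form a bounded martingale-difference array.

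The first substantive step is consistency, $\hat\theta_T\xrightarrow{p}\theta_*$. Using the boundedness of $\beta_t$ and $z_t$ from Assumption \ref{assu2}, the identifiability of $\theta_*$, and the concavity of the logistic log-likelihood in $\theta$, I would argue that the smallest eigenvalue of $H_T(\hat\theta_T)$ diverges and conclude that the maximizer converges to $\theta_*$. The second and central step is design convergence: the adaptively constructed information matrix must satisfy $M(\xi_T,\theta_*)\to M(\xi_*,\theta_*)$. The selection rule in Algorithm \ref{alg} is a plug-in, determinant-greedy (``adaptive Wynn'') step, so I would invoke the asymptotic theory of sequential $D$-optimal design in the spirit of \citet{Chaudhuri,pronzato2010one,Freise2021}, together with the Kiefer--Wolfowitz equivalence theorem characterizing $\xi_*$, to show the empirical design measure converges to $\xi_*$. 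Because the information matrix depends on $\theta_*$ through $\dot\mu$ and the algorithm uses $\hat\theta_{t-1}$, this step also relies on the consistency above and the uniform continuity of $\dot\mu$ to replace $\hat\theta_{t-1}$ by $\theta_*$ in the limit.

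With these in hand I would finish via a martingale CLT and a Taylor expansion. Setting $\varepsilon_t=[y_t-\mu_t(\theta_*)]\beta_t z_t$, the predictable quadratic variation is $\frac{1}{T}\sum_{t}\EE[\varepsilon_t\varepsilon_t^\top\mid\cF_{t-1}]=\frac{1}{T}\sum_{t}\dot\mu(\beta_t\theta_*^\top z_t)\beta_t^2 z_t z_t^\top=M(\xi_T,\theta_*)$, which converges to $M(\xi_*,\theta_*)$ by the design-convergence step; boundedness yields the conditional Lindeberg condition, so the multivariate martingale CLT gives $\sqrt{T}\,\nabla L_T(\theta_*)\xrightarrow{d}N(0,M(\xi_*,\theta_*))$. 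Expanding the stationarity condition $\nabla L_T(\hat\theta_T)=0$ about $\theta_*$ produces
\begin{equation*}
\sqrt{T}(\hat\theta_T-\theta_*)=\big[M(\xi_T,\tilde\theta)\big]^{-1}\sqrt{T}\,\nabla L_T(\theta_*),
\end{equation*}
for some $\tilde\theta$ on the segment between $\hat\theta_T$ and $\theta_*$. Consistency forces $\tilde\theta\xrightarrow{p}\theta_*$, and with design convergence and uniform continuity $M(\xi_T,\tilde\theta)\xrightarrow{p}M(\xi_*,\theta_*)$, which is invertible by Assumption \ref{assu1}. Slutsky's theorem then yields $\sqrt{T}(\hat\theta_T-\theta_*)\xrightarrow{d}N(0,M^{-1}(\xi_*,\theta_*))$ because $M^{-1}M M^{-1}=M^{-1}$.

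The main obstacle is the design-convergence step: establishing that greedy, plug-in determinant maximization actually drives the cumulative design to $\xi_*$. The difficulty is twofold. The information matrix is nonlinear in $\theta_*$ through $\dot\mu$, so the optimal design is not parameter-free as in the linear case, and the algorithm optimizes a surrogate built from $\hat\theta_{t-1}$ rather than $\theta_*$. Controlling the accumulated effect of these plug-in errors while proving convergence of the design measure is where the bulk of the technical work lies; the martingale CLT and the Taylor argument are comparatively routine once consistency and design convergence are secured.
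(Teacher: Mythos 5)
Your proposal follows essentially the same route as the paper's own proof: strong consistency of the adaptive MLE, convergence of the adaptively generated information matrix $M(\xi_T,\hat\theta_T)$ to $M(\xi_*,\theta_*)$ via the Kiefer--Wolfowitz equivalence theorem and the sequential $D$-optimal design theory of \cite{pronzato2010one,Freise2021} (this is exactly the paper's Theorem \ref{thm2}), a conditional-Lindeberg martingale CLT for the score combined with the Cram\'er--Wold device, and finally a Taylor expansion of the score equation plus Slutsky; you also correctly identify design convergence as the main technical burden. The only repair needed is in your Taylor step: a vector-valued score admits no single mean-value point $\tilde\theta$, so the matrix $M(\xi_T,\tilde\theta)$ as written does not exist in general---instead, as the paper does, apply the mean value theorem to each scalar summand $\mu(\beta_t\theta^\top z_t)$ separately (one $\tilde\theta_t$ per term $t$), after which the resulting matrix converges to $M(\xi_*,\theta_*)$ by consistency and uniform continuity of $\dot\mu$, and the rest of your argument goes through unchanged.
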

Theorem \ref{thm1} indicates that the adaptive MLE estimator $\hat{\theta}_T$ generated by Algorithm \ref{alg} asymptotically follows a multivariate normal distribution whose covariance matrix is given by $M^{-1}(\xi_*,\theta_*)$. Since $\xi_*$ minimizes $\det M^{-1}(\xi,\theta_*)$, it in turn proves that the proposed design is asymptotically $D$-optimal. 

To highlight the importance of simultaneous selection of conversations and teachers, we modify Algorithm \ref{alg} to create two single active
learning-based methods: \textbf{Conversation Selection Only} and \textbf{Teacher Selection Only}. 
\begin{itemize}[leftmargin=*]
\item \textbf{Conversation Selection Only}: This approach selects conversations using our approach but selects teachers randomly. This can be done by modifying step 8 of Algorithm \ref{alg} to randomly select $\beta_t$ while maximizing the determinant of the information matrix over $z$. 
\item \textbf{Teacher Selection Only}: This approach selects teachers actively and conversations randomly by modifying the same step to randomly select $z_t$ and then finding $\beta_t$ that maximizes the determinant of the information matrix.
\end{itemize}
We denote $\xi^c$ and $\xi^t$ as the designs of \textbf{Conversation Selection Only} and \textbf{Teacher Selection Only} methods, respectively. To illustrate the comparative efficacy, we introduce the following corollary contrasting the performance of Algorithm \ref{alg} with these benchmarks.
\begin{corollary}\label{cor2}
Let $\hat{\theta}_T^{c}$ and $\hat{\theta}_T^{t}$ be the MLEs based on $\{(z_t, \beta_t)\}_{t=1}^T$ generated by the \textbf{Conversation Selection Only} and \textbf{Teacher Selection Only} methods, respectively. Under Assumptions \ref{assu1} and \ref{assu2}, the asymptotic distributions of $\hat{\theta}_T^{c}$ and $\hat{\theta}_T^{t}$ are
$$ \sqrt{T}(\hat{\theta}^c_T-\theta_*)\xrightarrow{d}N(0, M^{-1}(\xi^c, \theta_*)), \sqrt{T}(\hat{\theta}^t_T-\theta_*)\xrightarrow{d}N(0, M^{-1}(\xi^t, \theta_*)).$$
\end{corollary}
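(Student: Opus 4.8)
The plan is to treat Corollary \ref{cor2} as a direct parallel of Theorem \ref{thm1}: both modified procedures produce the data sequentially, so $\hat\theta_T^c$ and $\hat\theta_T^t$ remain adaptive maximum-likelihood estimators, and the asymptotic analysis rests on exactly the three ingredients used for Theorem \ref{thm1} --- a Taylor expansion of the score, a martingale central limit theorem for the score at $\theta_*$, and convergence of the normalized information matrix. The only quantity that changes is the limiting design, from $\xi_*$ to $\xi^c$ or $\xi^t$.

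Concretely, I would first differentiate \eqref{lik} and expand $\nabla L_T(\hat\theta_T)=0$ about $\theta_*$ to obtain
$$\sqrt{T}(\hat\theta_T-\theta_*)=\left[-\nabla^2 L_T(\bar\theta)\right]^{-1}\frac{1}{\sqrt{T}}\sum_{t=1}^T s_t(\theta_*),\qquad s_t(\theta)=[y_t-\mu(\beta_t\theta^\top z_t)]\beta_t z_t,$$
for some $\bar\theta$ between $\hat\theta_T$ and $\theta_*$, where $\hat\theta_T$ denotes $\hat\theta_T^c$ or $\hat\theta_T^t$ as appropriate. Under Model III we have $\mathbb{E}[y_t\mid\mathcal{F}_{t-1}]=\mu(\beta_t\theta_*^\top z_t)$, so $\{s_t(\theta_*)\}$ is a martingale difference sequence relative to the filtration $\mathcal{F}_{t-1}$ generated by the history (augmented by the exogenous randomization used to draw the randomized coordinate). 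Its predictable quadratic variation is $\frac1T\sum_t\dot\mu(\beta_t\theta_*^\top z_t)\beta_t^2 z_t z_t^\top$, which is precisely the sample information matrix $M(\xi_T,\theta_*)$ of \eqref{infor}.

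The decisive observation is that randomizing one coordinate does not disturb this structure: in \textbf{Teacher Selection Only}, $z_t$ is drawn exogenously and $\beta_t$ chosen greedily, while in \textbf{Conversation Selection Only} the roles are swapped, but in both cases $(z_t,\beta_t)$ is $\mathcal{F}_{t-1}$-measurable, so $\{s_t(\theta_*)\}$ is still a martingale difference sequence and the martingale CLT applies verbatim. Writing $\xi^\bullet$ for the relevant limit in $\{\xi^c,\xi^t\}$, this gives $\frac{1}{\sqrt T}\sum_t s_t(\theta_*)\xrightarrow{d} N(0,M(\xi^\bullet,\theta_*))$ once the predictable quadratic variation converges to $M(\xi^\bullet,\theta_*)$. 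By the same concentration bound as in Lemma \ref{lemma0}, $\hat\theta_T$ remains consistent, so $\bar\theta\to\theta_*$ and, by continuity of $\dot\mu$ together with the boundedness from Assumption \ref{assu2}, $-\nabla^2 L_T(\bar\theta)\xrightarrow{p}M(\xi^\bullet,\theta_*)$. Slutsky's theorem then yields the two stated normal limits with covariance $M^{-1}(\xi^\bullet,\theta_*)$.

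The main obstacle is establishing that the empirical design measures $\xi_T^c$ and $\xi_T^t$ converge to deterministic limits $\xi^c$ and $\xi^t$ with positive-definite information matrices --- the analog, for these two selection rules, of the convergence $\xi_T\to\xi_*$ and of Assumption \ref{assu1} used in Theorem \ref{thm1}. Because here one coordinate is sampled exogenously while the other adapts to the converging estimate $\hat\theta_{t-1}$, this requires combining a law-of-large-numbers argument for the randomized coordinate with the sequential-design stabilization argument for the greedy coordinate; I expect the finiteness of the design space $\mathcal{Z}\times\mathcal{B}$ and the continuity of $\dot\mu$ to be the essential tools, mirroring the treatment of $\xi_T\to\xi_*$ in Theorem \ref{thm1}. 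Verifying positive definiteness of $M(\xi^c,\theta_*)$ and $M(\xi^t,\theta_*)$ is needed so that the inverse Hessian in the expansion is well-behaved.
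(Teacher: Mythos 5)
Your proposal is correct and takes essentially the same route as the paper: the paper's own proof of Corollary \ref{cor2} is literally the single remark that it ``follows similar steps to those of Theorem \ref{thm1}, and is therefore omitted,'' and your plan (Taylor expansion of the score, martingale CLT exploiting $\mathcal{F}_{t-1}$-measurability of $(z_t,\beta_t)$, convergence of the normalized information matrix to $M(\xi^c,\theta_*)$ or $M(\xi^t,\theta_*)$, then Slutsky) is exactly that adaptation of Theorem \ref{thm1}. You even flag explicitly the one step that genuinely differs from Theorem \ref{thm1} --- establishing convergence of the empirical design under mixed random/greedy selection --- which the paper leaves entirely implicit.
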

Corollary \ref{cor2} gives the asymptotic variance-covariance matrix of the estimators of the two methods based on single active learning. By the definition of $\xi_*$ in \eqref{d-opti}, we have 
$$\det M(\xi_*,\theta_*)\geq \mathop{\max}\{\det M(\xi^c, \theta_*), \det M(\xi^t, \theta_*)\}.$$
It reveals that the determinant of the asymptotic variance-covariance matrix of estimator from our proposed method is no greater than those of estimators from the two single active learning-based approaches. The determinants are equal when the $D$-optimal design $\xi_*$ matches exactly the designs $\xi^c, \xi^t$ on $\mathcal{Z}\times \mathcal{B}$, which is a very rare event. From Corollary \ref{cor2}, the estimator $\hat{\theta}_T$ achieves a smaller volume of the confidence ellipsoid of $\theta_*$, leading to a more accurate estimation of $\theta_*$ as illustrated in Figure \ref{fig0}.

Finally, we evaluate the sub-optimality of the proposed pessimistic policy as outlined in Algorithm \ref{alg2}. This policy utilizes the estimator $\hat{\theta}_T$ derived from Algorithm \ref{alg}, producing a policy $\hat{\pi}_T: \mathcal{X}\mapsto \mathcal{A}$.
The optimal policy is defined as $\pi^*(x)=\mathop{\arg\max}_{a\in\mathcal{A}}r_{\theta_*}(x,a)$. The effectiveness of any policy $\pi$ is measured by its sub-optimality defined as 
\begin{equation}\label{subopt}
\textsf{SubOpt}(\pi)=J(\pi^*)-J(\pi),
\end{equation}
which quantifies how much the expected reward under $\pi$ falls short of the expected reward under the optimal policy $\pi^*$. 
\begin{theorem}\label{thm3} 
  Under Assumptions \ref{assu1} and \ref{assu2}, for any $0<\delta<1$, with probability at least $1-\delta$, when $T>T_0$ for some positive constant $T_0$, the sub-optimality of the pessimistic policy defined in \eqref{pespo} is bounded by
   $$\textsf{SubOpt}(\hat{\pi}_T)\leq 2\sqrt{\frac{C_1}{T}\left[ d\log \left(e+\frac{C_2 T}{d}\right)+\log \frac{2}{\delta}\right]}\|M^{-1/2}(\xi_*, \theta_*)\mathbb{E}_{x\sim\rho} \phi(x,\pi^*(x))\|_2,$$
   where the positive constants $C_1$ and $C_2$ are the same as those specified in Lemma \ref{lemma0}.
\end{theorem}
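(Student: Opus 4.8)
The plan is to run the standard pessimism decomposition, using Lemma \ref{lemma0} to control the estimation error and Theorem \ref{thm1} to swap the data-dependent information matrix for the $D$-optimal one at the very end. Throughout I would abbreviate $v_\pi = \mathbb{E}_{x\sim\rho}\phi(x,\pi(x))$, which coincides with the empirical mean $\mathbb{E}_n\phi(x,\pi(x))$ appearing in \eqref{pessj} since $\rho$ is the sampling distribution over the fixed context pool. With this notation $J(\pi)=\theta_*^\top v_\pi$ and, from \eqref{pessj}, $\hat{J}_T(\pi)=\hat{\theta}_T^\top v_\pi-\gamma(T,d,\delta)\,\|v_\pi\|_{\bar{H}_T^{-1}(\hat{\theta}_T)}$.

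First I would condition on the event $\mathcal{E}=\{\theta_*\in\mathcal{C}(\hat{\theta}_T,\delta)\}$, which by Lemma \ref{lemma0} has probability at least $1-\delta$; on $\mathcal{E}$ we have $\|\hat{\theta}_T-\theta_*\|_{\bar{H}_T(\hat{\theta}_T)}\le\gamma(T,d,\delta)$. The workhorse is the generalized Cauchy--Schwarz inequality $|(\hat{\theta}_T-\theta_*)^\top v_\pi|\le\|\hat{\theta}_T-\theta_*\|_{\bar{H}_T(\hat{\theta}_T)}\,\|v_\pi\|_{\bar{H}_T^{-1}(\hat{\theta}_T)}\le\gamma\,\|v_\pi\|_{\bar{H}_T^{-1}(\hat{\theta}_T)}$, valid for every policy $\pi$. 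This yields the two-sided envelope relating $J(\pi)$ and $\hat{\theta}_T^\top v_\pi$, and in particular the pessimism property $\hat{J}_T(\pi)\le J(\pi)$ for all $\pi$ (the pessimistic penalty exactly cancels the worst-case error).

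Next I would split the sub-optimality, assuming $\pi^*\in\Pi$, as
$$\textsf{SubOpt}(\hat{\pi}_T)=\underbrace{[J(\pi^*)-\hat{J}_T(\pi^*)]}_{(\mathrm{I})}+\underbrace{[\hat{J}_T(\pi^*)-\hat{J}_T(\hat{\pi}_T)]}_{(\mathrm{II})}+\underbrace{[\hat{J}_T(\hat{\pi}_T)-J(\hat{\pi}_T)]}_{(\mathrm{III})}.$$
Term $(\mathrm{II})\le 0$ because $\hat{\pi}_T$ maximizes $\hat{J}_T$ over $\Pi$; term $(\mathrm{III})\le 0$ is the pessimism property above applied to $\hat{\pi}_T$. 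For term $(\mathrm{I})$, expanding $\hat{J}_T(\pi^*)$ and applying the Cauchy--Schwarz bound gives $(\mathrm{I})=(\theta_*-\hat{\theta}_T)^\top v_{\pi^*}+\gamma\,\|v_{\pi^*}\|_{\bar{H}_T^{-1}(\hat{\theta}_T)}\le 2\gamma\,\|v_{\pi^*}\|_{\bar{H}_T^{-1}(\hat{\theta}_T)}$. Combining, $\textsf{SubOpt}(\hat{\pi}_T)\le 2\gamma(T,d,\delta)\,\|v_{\pi^*}\|_{\bar{H}_T^{-1}(\hat{\theta}_T)}$ on $\mathcal{E}$.

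The last step, which I expect to be the main obstacle, is to replace the random, data-dependent matrix $\bar{H}_T^{-1}(\hat{\theta}_T)$ by the deterministic $M^{-1}(\xi_*,\theta_*)$, turning $\|v_{\pi^*}\|_{\bar{H}_T^{-1}(\hat{\theta}_T)}$ into $\|M^{-1/2}(\xi_*,\theta_*)v_{\pi^*}\|_2$. Here I would invoke the convergence established en route to Theorem \ref{thm1}: the adaptive design $\xi_T$ converges to $\xi_*$ and $\hat{\theta}_T\to\theta_*$, so $\bar{H}_T(\hat{\theta}_T)=M(\xi_T,\hat{\theta}_T)\to M(\xi_*,\theta_*)$; since Assumption \ref{assu1} makes the limit positive definite, continuity of matrix inversion gives $\|v_{\pi^*}\|_{\bar{H}_T^{-1}(\hat{\theta}_T)}\to\|v_{\pi^*}\|_{M^{-1}(\xi_*,\theta_*)}$. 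This is exactly where the threshold $T>T_0$ enters: for $T$ large enough the data-dependent quadratic form is within a negligible factor of its limit, so the displayed bound can be stated against $M^{-1/2}(\xi_*,\theta_*)$. The delicacy is that this requires convergence of the \emph{adaptively generated} information matrix (not merely consistency of $\hat{\theta}_T$) together with well-conditioned inversion, and that one works on the intersection of $\mathcal{E}$ with the large-$T$ regime; everything else is routine pessimism algebra. I would conclude by chaining the two displays.
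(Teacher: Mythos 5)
Your proposal is correct and follows essentially the same route as the paper's proof: the same three-term pessimism decomposition with terms (II) and (III) nonpositive (by optimality of $\hat{\pi}_T$ and by $\theta_*\in\mathcal{C}(\hat{\theta}_T,\delta)$ from Lemma \ref{lemma0}), the same Cauchy--Schwarz bound $2\gamma(T,d,\delta)\,\|v_{\pi^*}\|_{\bar{H}_T^{-1}(\hat{\theta}_T)}$ on term (I), and the same final replacement of $\bar{H}_T^{-1}(\hat{\theta}_T)$ by $M^{-1}(\xi_*,\theta_*)$ via the almost-sure convergence of the adaptively generated information matrix (the paper's Theorem \ref{thm2}), which is precisely where $T_0$ enters. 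The only cosmetic difference is that you invoke the closed-form expression of $\hat{J}_T$ from \eqref{pessj} directly, whereas the paper re-expands the minimum over the confidence set before applying Cauchy--Schwarz; these are identical computations.
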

We now analyze the effect of dual active learning and pessimistic policy learning on $\textsf{SubOpt}(\hat{\pi}_T)$ using Theorem \ref{thm3}.  Theorem \ref{thm1} shows that the covariance matrix of the estimator $\hat{\theta}_T$ generated by our proposed dual active learning method asymptotically converges to $M^{-1}(\xi_*,\theta_*)$, which has the smallest determinant. This typically results in reduced sub-optimality. We verify this conclusion through numerical experiments in Section \ref{6.1.1}. The term $\|M^{-1/2}(\xi_*, \theta_*)\mathbb{E}_{x\sim\rho} \phi(x,\pi^*(x))\|_2$ is assumed to be bounded in the literature on offline reinforcement learning \citep{li2022pessimism,zhu2023principled}. \cite{zhu2023principled} demonstrated that the sub-optimality gap of the non-pessimistic policy maintains a constant lower bound in some cases. In contrast, the sub-optimality gap of our policy converges to 0 as $T\rightarrow \infty$ under the same assumptions.  

\section{Experiments}\label{sec6}
In this section, we conduct simulation studies to test the effectiveness of our method, followed by applications to large language models. To enhance computational efficiency, we introduce a batch version of Algorithm \ref{alg} which involves a batch size parameter denoted by $K$. Instead of individually selecting $(z, \beta)$-pairs, the batch version selects the top $K$ pairs in step 8 of Algorithm \ref{alg} and iterates only $\lfloor T/K\rfloor$ times to choose $T$ samples. The rest of the procedure in the batch version follows that of Algorithm \ref{alg}. When $K=1$, the batch version coincides with Algorithm \ref{alg}.

\subsection{Simulation}
In our simulation study, we consider a context vector $x=(x_1,x_2,x_3,x_4,x_5)^\top \in\mathbb{R}^5$. The component $x_1$ is i.i.d. drawn from the uniform distribution Unif$(1, 2)$, and the remaining components $x_2, x_3, x_4, x_5$ are i.i.d. chosen from Unif$(-1/2, 1/2)$. The feature mapping function is defined as $\phi(x,a)=(x_1a^2, x_2a, x_3a, x_4a, x_5a)^\top\in\mathbb{R}^5$. The true reward parameter vector $\theta_*=(\theta_1,\theta_2,\theta_3,\theta_4,\theta_5)^\top$ has components $\theta_1=-1/2$ and $\theta_2=\theta_3=\theta_4=\theta_5=1/2$. The reward function is $r_{\theta_*}(x, a)=\theta_*^\top\phi(x,a)$. The optimal action, derived from this setup, is $$a^*(x)=\mathop{\arg\max}_ar_{\theta_*}(x, a)=-\frac{\theta_2x_2+\theta_3x_3+\theta_4x_4+\theta_5x_5}{2\theta_1x_1}.$$ The two actions are set as $a^{(0)}(x)=a^*(x)$ and $a^{(1)}(x)=\|x\|_2/3$.  The simulation involves $g=5$ types of contexts and $m=20$ teachers, with each teacher's rationality parameter $\beta_j^{(k)}$, for $j\in\{1,\cdots, m\}$ and $k\in\{1,\cdots,g\}$, upper bounded by $C_\beta=3$. We aim to select $T$ samples from $n=10000$ candidates for reward learning and decision making.
 \begin{table}[t]
\small
    \centering
    \caption{Strategies for conversation selection and teacher selection across various methods.}
    \begin{tabular}{c|c|c}
    \hline
          Methods & Conversation Selection & Teacher Selection  \\ \hline
        \textbf{Our Proposal} & Algorithm \ref{alg} & Algorithm \ref{alg}  \\ 
        \textbf{Conversation Selection Only} & $D$-optimal design & Random  \\ 
       \textbf{Teacher Selection Only} & Random & $D$-optimal design  \\ 
        \textbf{APO}   & APO \citep{das2024provably}& Random  \\ 
        \textbf{Random} & Random & Random \\ \hline
    \end{tabular}
     \label{tab3}
\end{table}
Comparison is made among the following methods for reward and policy learning: 
\begin{itemize}[leftmargin=*]
    \item \textbf{Our Proposal} which implements dual active reward learning using $D$-optimal design according to Algorithm \ref{alg} and computes a pessimistic policy according to in Algorithm \ref{alg2};
    \item \textbf{Conversation Selection Only} which selects teachers randomly at each time;
    \item \textbf{Teacher Selection Only} which selects contexts randomly at each time; 
    \item \textbf{APO} which implements the active preference optimization approach developed by \cite{das2024provably}, focusing solely on active conversation selection;
    \item \textbf{Random} which selects both teachers and contexts randomly at each time. 
\end{itemize}
 Notice that different methods employ different strategies for conversation selection and teacher selection, as summarized in Table \ref{tab3}. It is crucial to highlight that the $D$-optimal designs employed by \textbf{Conversation Selection Only} and \textbf{Teacher Selection Only} are adapted from our proposed Algorithm \ref{alg}. The aim of comparing these policies is to gain deeper insights into the effectiveness of the different components of the overall policy. 

\subsubsection{Comparison of Different Methods} \label{6.1.1}
We first assess the reward estimation of the policies based on the generalized variance (GV) and the mean squared error (MSE, defined as $\mathbb{E}\|\hat{\theta
}-\theta_*\|_2$) of their reward estimators. The rationality of each teacher $\beta$ is independently drawn from a uniform distribution Unif$(0, 2)$.  A smaller GV indicates a smaller variation of the estimator, whereas a smaller MSE reflects closer proximity to the true reward values. The results, highlighted in Table \ref{tab2} with the best outcomes in bold, reveal that \textbf{Our Proposal} performs superiorly, showing the lowest GV and MSE. We further analyze the sub-optimality gap defined in \eqref{subopt} across different policies. Figure \ref{fig1} shows the sub-optimality gaps of different policies across varying sample sizes $T$. \textbf{Our Proposal} consistently outperforms the others.
\begin{table}[h!]
    \centering
        \caption{Performance of the estimated reward parameter with $T=1000$}

    \begin{tabular}{c|cc|cc|cc}
    \hline
          Policies & \multicolumn{2}{c|}{$K=1$} & \multicolumn{2}{c|}{$K=50$}  & \multicolumn{2}{c}{$K=100$} \\ \hline
        ~ & GV & MSE & GV & MSE & GV & MSE \\  \hline 
        \textbf{Our Proposal} & \textbf{1.52} & \textbf{1.147} & \textbf{4.78} & \textbf{1.175} & \textbf{8.28} & \textbf{1.225} \\ 
        \textbf{Conversation Selection Only} & 37.5 & 1.402 & 19.6 & 1.408 & 80 & 1.554 \\ 
       \textbf{Teacher Selection Only} & 653 & 1.962 & 1890 & 2.137 & 2180 & 2.049 \\ 
        \textbf{APO}  & 354 & 2.145 & 1080 & 2.076 & 520 & 2.072 \\ 
        \textbf{Random} & 41600 & 2.808 & 125000 & 3.342 & 110000 & 3.208 \\ \hline
        \multicolumn{2}{c}{GV is expressed in units of $10^{-11}$.}
    \end{tabular}
    \label{tab2}
\end{table}
 To provide deeper insights, we conduct a detailed comparison of these methods to better understand the impact of each component on overall performance. 
\begin{itemize}[leftmargin=*]
    \item When compared to the \textbf{Random} method, \textbf{Conversation Selection Only} shows a lower sub-optimality gap as well as smaller GV and MSE, highlighting the benefits of strategic conversation selection. Similarly, the \textbf{Teacher Selection Only} method outperforms the \textbf{Random} method, validating the importance of selecting teachers. 
    \item The \textbf{Conversation Selection Only} method demonstrates smaller sub-optimality gap and lower GV and MSE compared to the \textbf{APO} method, confirming that our active reward learning approach utilizing $D$-optimal design is more effective.
    \item \textbf{Our Proposal} outperforms both \textbf{Conversation Selection Only} and \textbf{Teacher Selection Only} methods, indicating the advantage of simultaneous selection of conversations and teachers.
\end{itemize}
  
\begin{figure}[h!]
\setlength{\tabcolsep}{-1pt} 
    \centering
    \begin{tabular}{c}  
        \includegraphics[scale =0.6]{./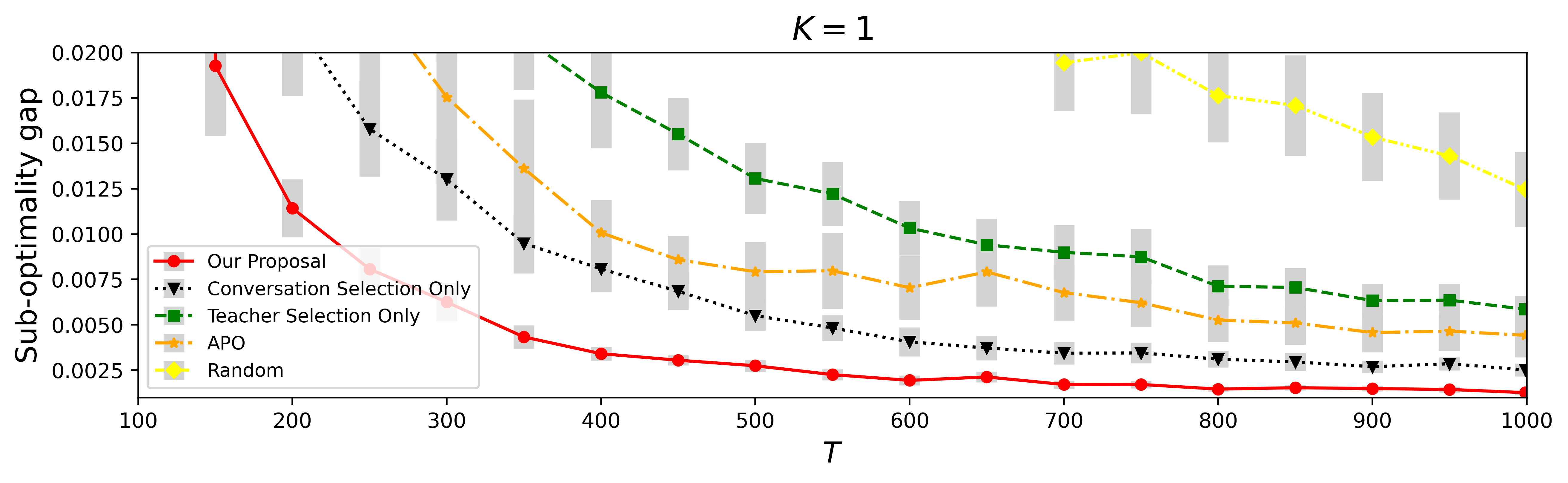}\\
        \includegraphics[scale = 0.6]{./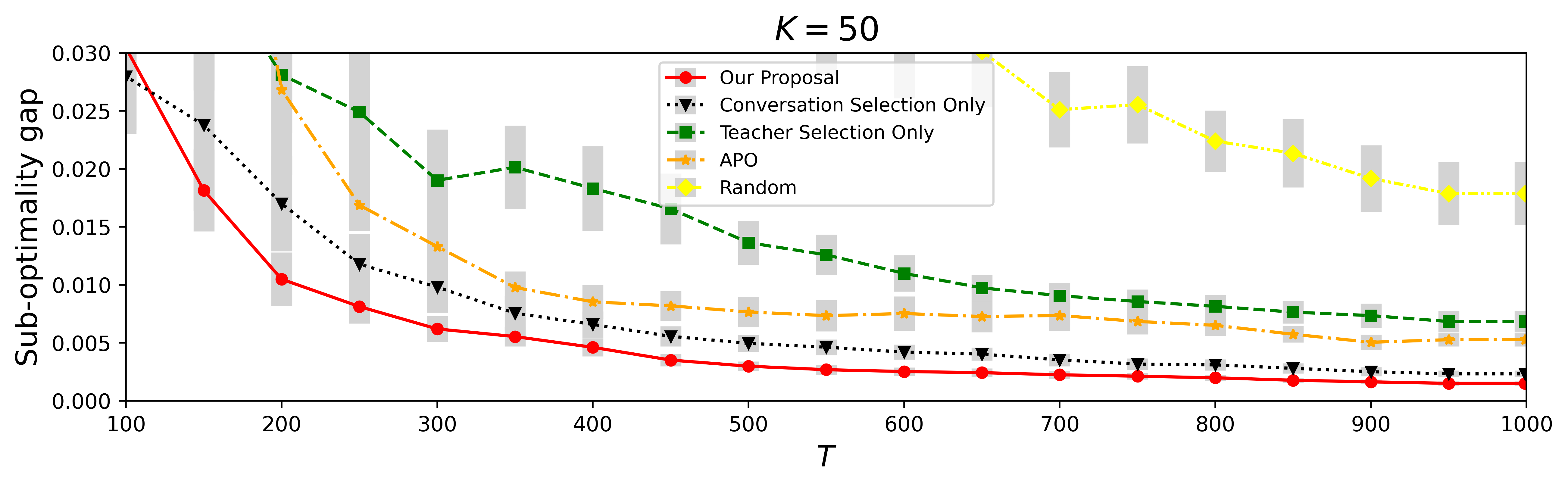}\\
        \includegraphics[scale = 0.6]{./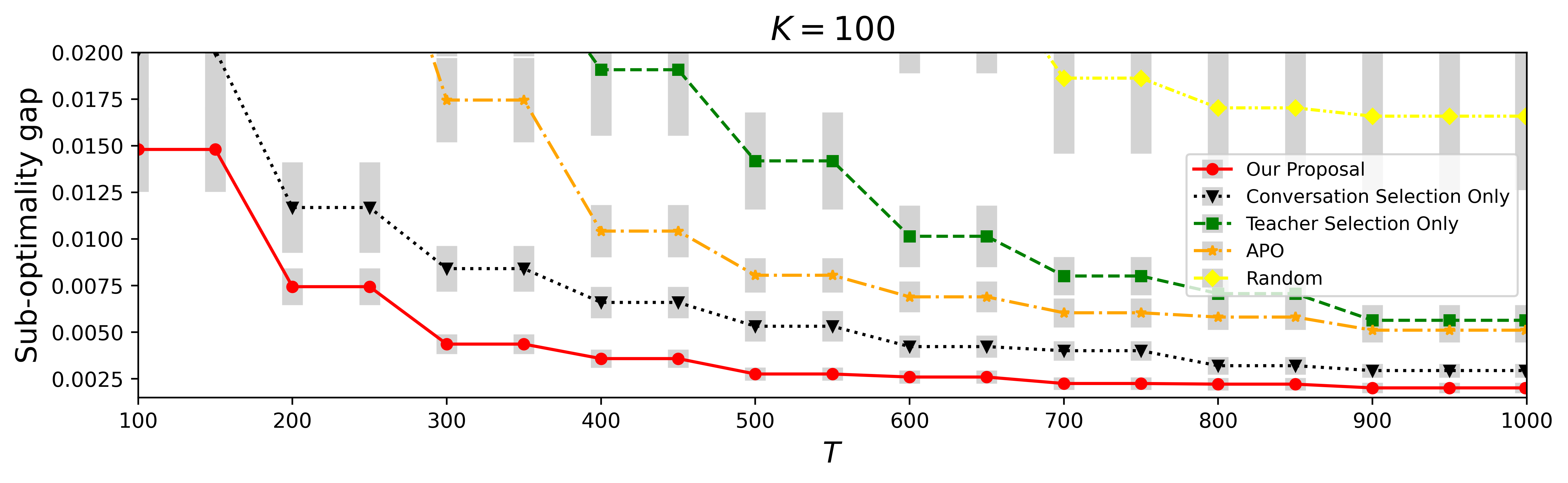}
    \end{tabular}
     \caption{Sub-optimality gaps for all policies. The three subplots show the sub-optimality gap when the batch size $K$ is 1, 50 and 100, respectively.}
         \label{fig1}
\end{figure}

This analysis confirms the superior performance of our proposed policy across different batch sizes $K$. Furthermore, we examine the computational efficiency of the batch version of our approach. The computation times for one repetition of \textbf{Our Proposal} are 1000.94 seconds, 27.69 seconds, and 17.55 seconds for batch sizes $K$ of 1, 50, and 100, respectively, showcasing significant reductions in computation time with increased batch sizes.

\subsubsection{Role of Teachers}\label{6.1.2}
We now examine the influence of teacher rationality on the sub-optimality gaps under varying ranges of rationality. The rationality parameter $\beta$ is i.i.d. chosen from three different uniform distributions: Unif$(0, 3)$, Unif$(0, 2)$, and Unif$(0, 1)$. Figure \ref{fig2} illustrates that a broader range of rationality generally results in a smaller sub-optimality gap across different batch sizes when employing \textbf{Our Proposal}. This phenomenon suggests that a wider range of rationality choices allows for more selective and effective teacher querying, thus reducing the sub-optimality gap. The intuition behind this is that a broader range of rationality leads to a larger  $\det M(\xi_*, \theta_*)$, resulting in a better estimation of the reward parameter.
\begin{figure}[t]
    \centering
    \begin{tabular}{ccc}  
        \includegraphics[scale = 0.3420]{./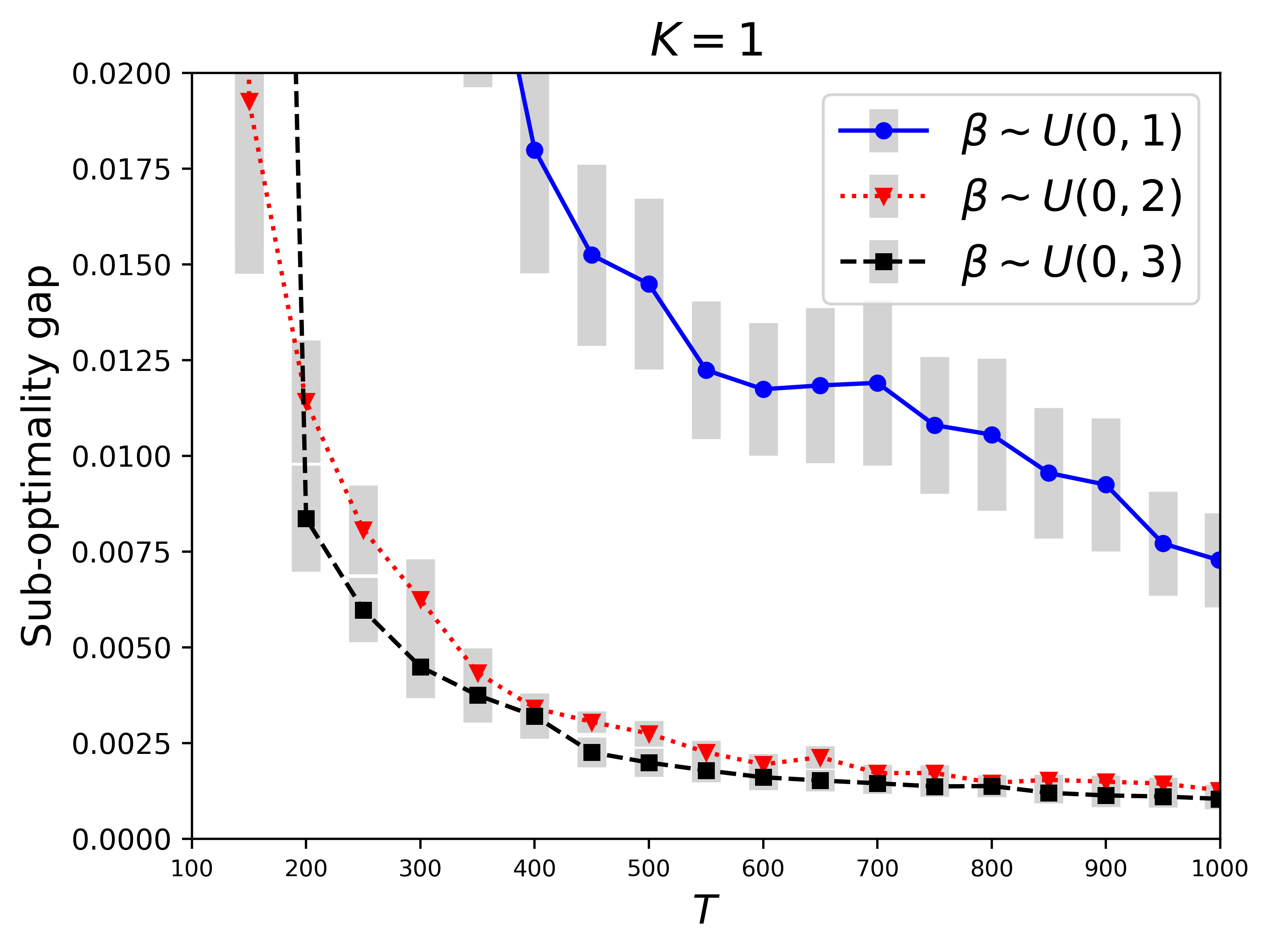}&
        \includegraphics[scale = 0.3420]{./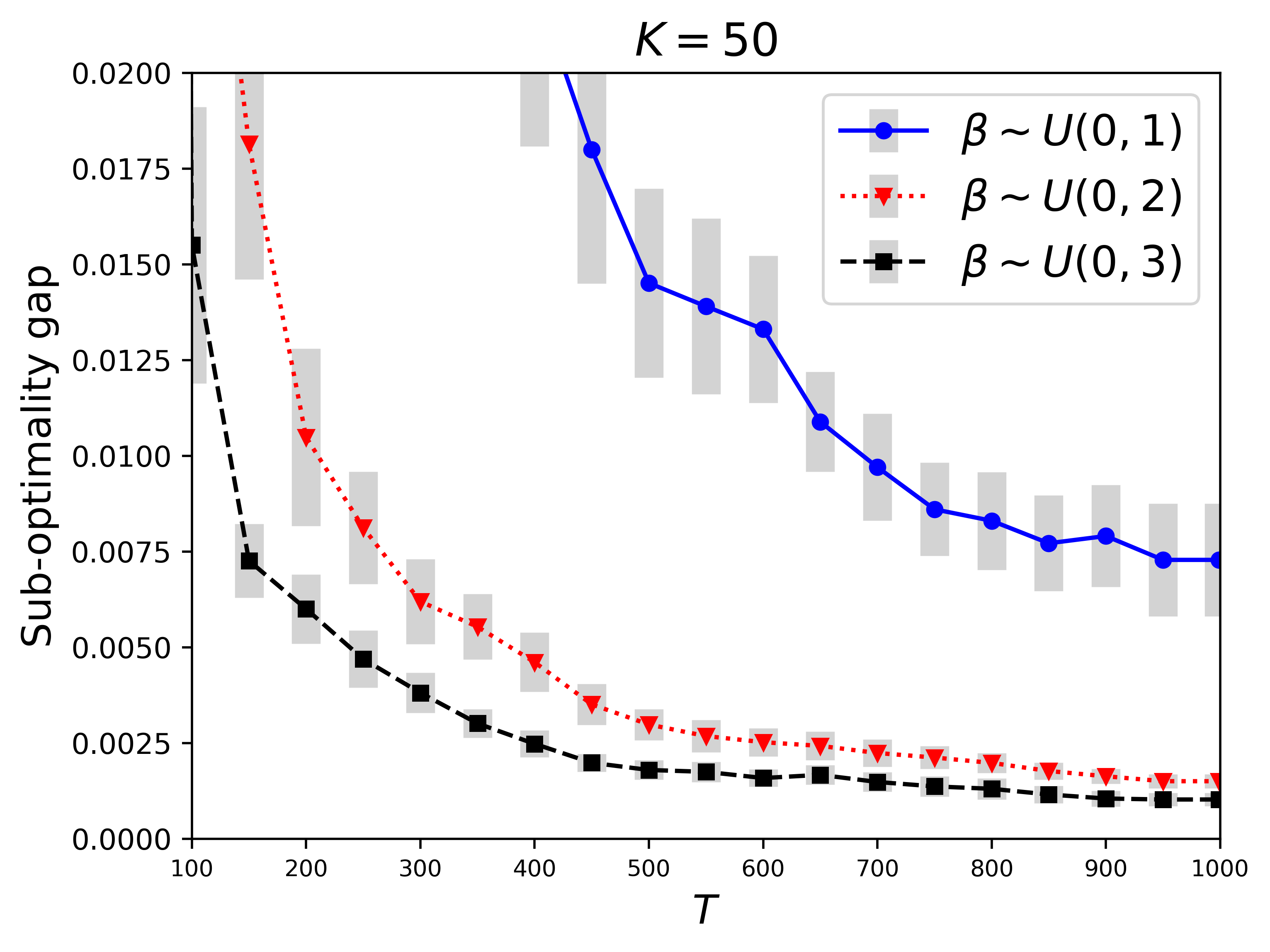}&
         \includegraphics[scale = 0.3420]{./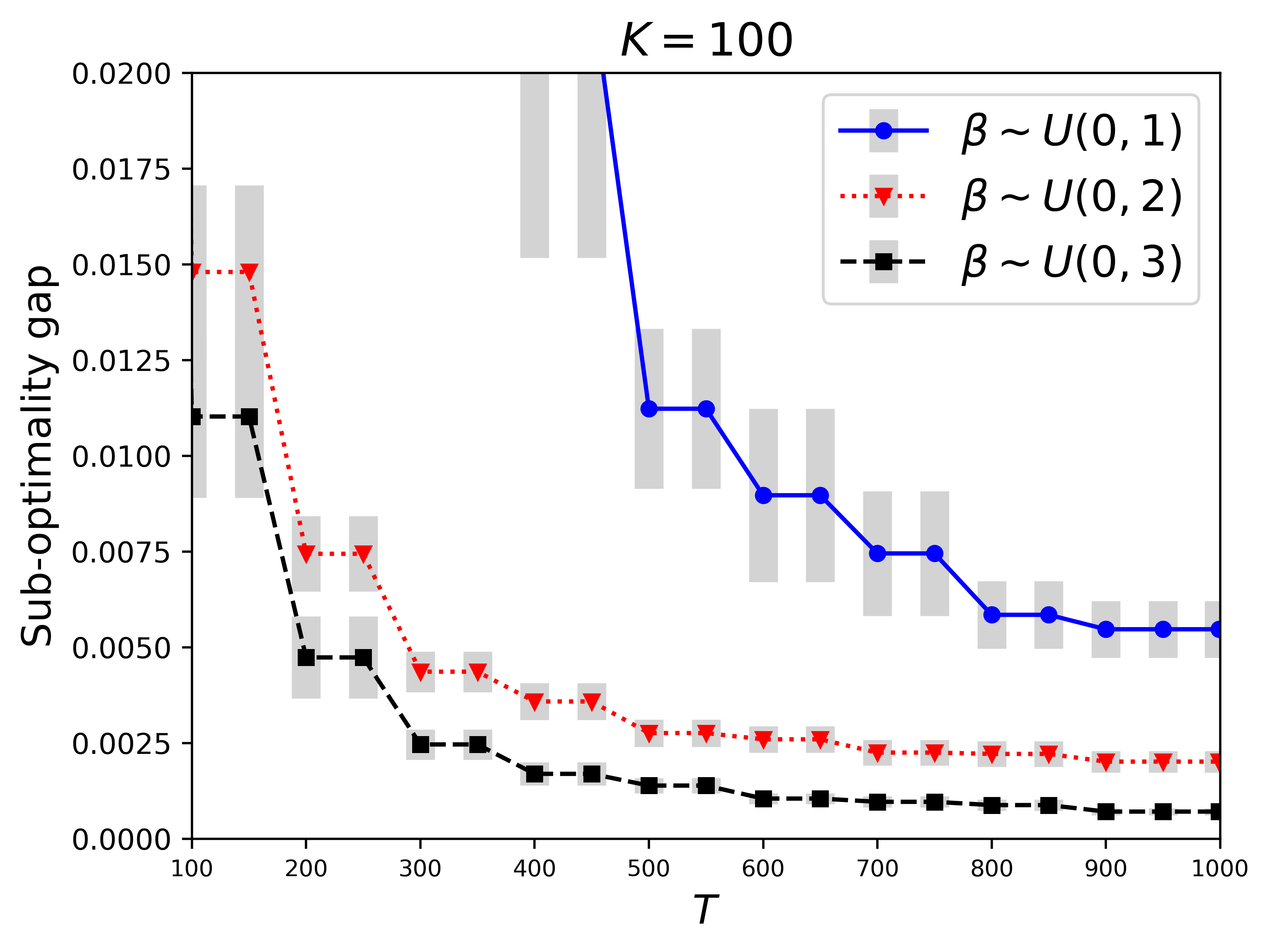}
    \end{tabular}
     \caption{Sub-optimality gap for \textbf{Our Proposal} at different ranges of teacher rationality.  }
         \label{fig2}
\end{figure}
\subsubsection{Effect of Dimension}
We evaluate the impact of dimensionality on the sub-optimality gap for \textbf{Our Proposal} across dimensions $d = 3, 5, 10$ and different batch sizes. The teacher rationality $\beta$ is sampled from Unif$(0, 1)$. The results depicted in Figure \ref{fig6} indicate that the sub-optimality gap increases with the dimension, consistent with the implications of Theorem \ref{thm3}.
\begin{figure}[t]
    \centering
    \begin{tabular}{ccc}  
    \includegraphics[scale = 0.3420]{./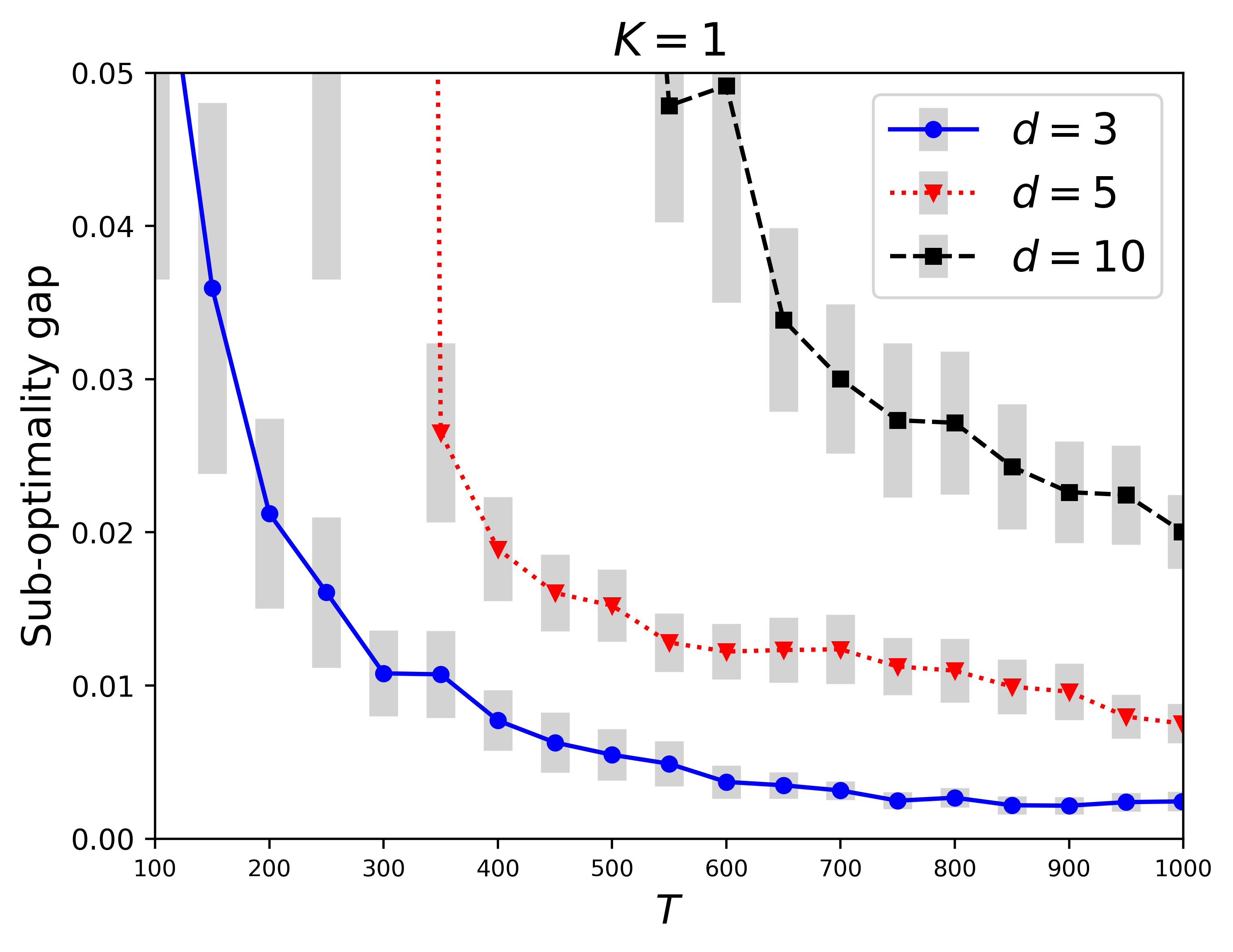}&
         \includegraphics[scale = 0.3420]{./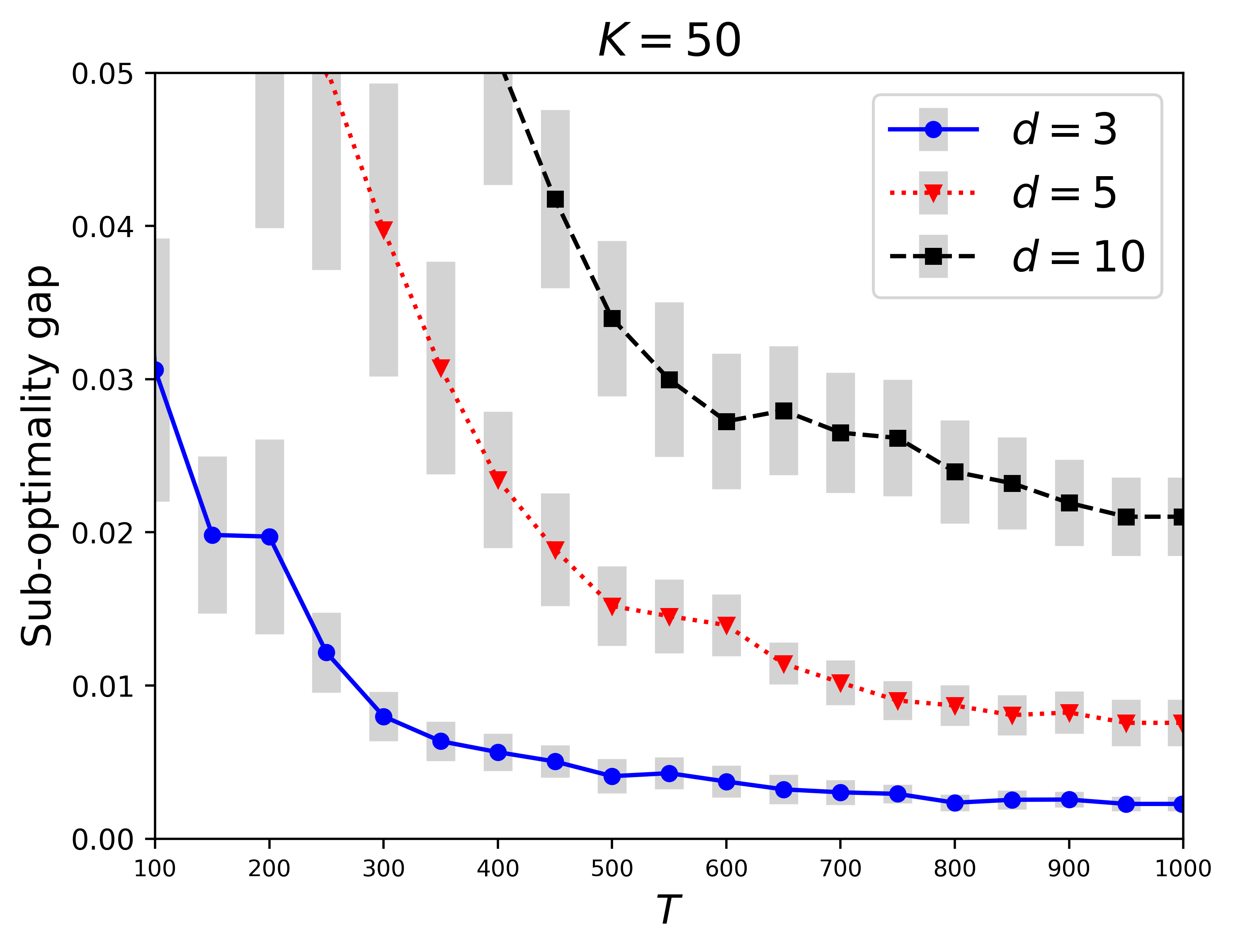}&
        \includegraphics[scale = 0.3420]{./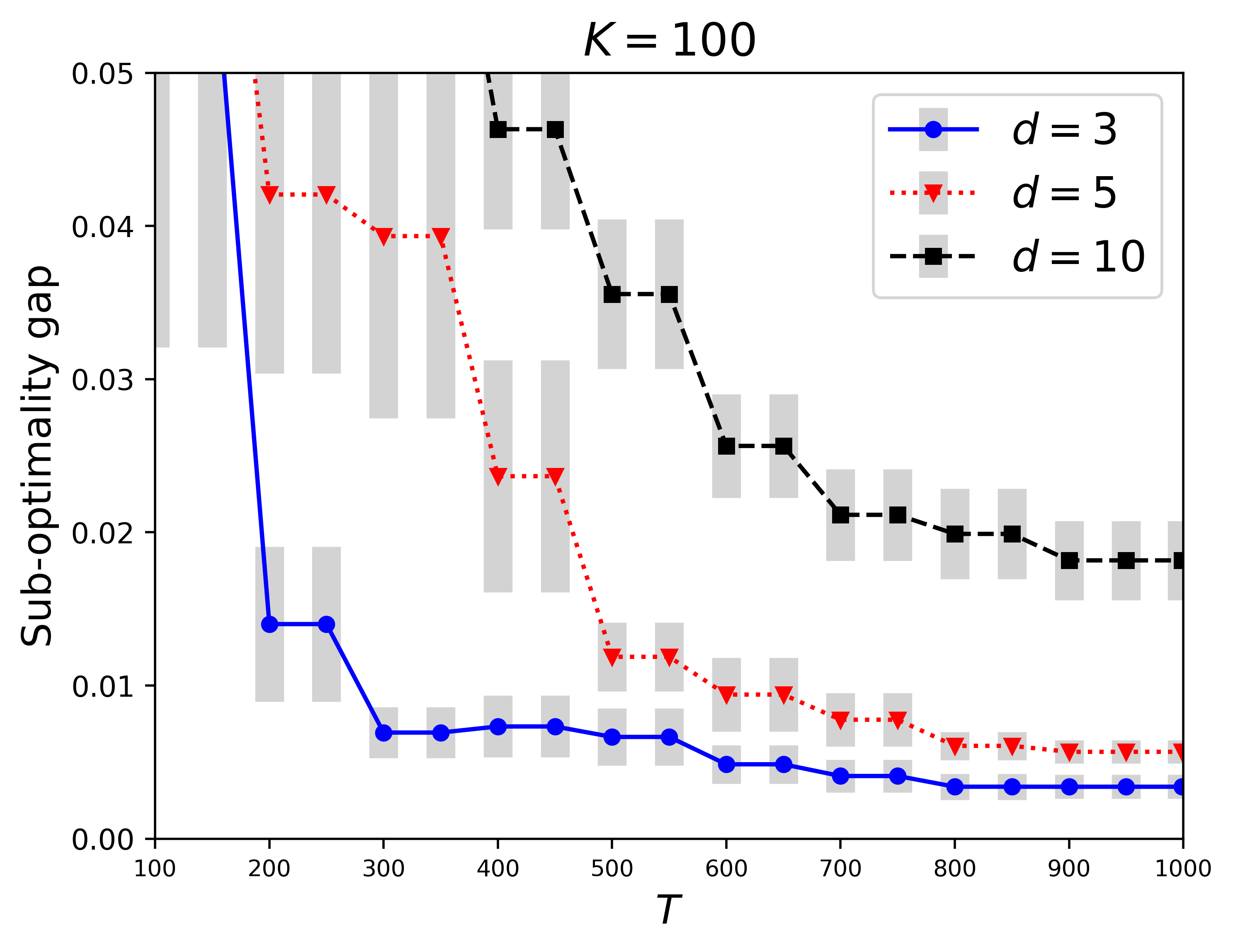}
    \end{tabular}
     \caption{Sub-optimality gap for \textbf{Our Proposal} at different dimensions.  }
         \label{fig6}
\end{figure}
\subsection{Applications to LLMs}\label{sec6.2}
In this experiment, we implement our policy within large language models, utilizing the public datasets \texttt{Anthropic} \citep{bai2022training} and \texttt{UltraFeedback} \citep{cui2023ultrafeedback}. We collect all the prompts with single-turn dialogues from the dataset and process them into a pairwise training format, where each question is paired with two answers. Here, each question serves as context $x$, and the two answers serve as $a^{(0)}$ and $a^{(1)}$. Each answer is given a rating score\footnote{\href{https://huggingface.co/datasets/llm-blender/Unified-Feedback}{https://huggingface.co/datasets/llm-blender/Unified-Feedback}}, and the answer with the higher score is treated as the chosen one. We randomly select $40000$ samples and divide them into a training subset and a test subset with a 4:1 ratio.

The pretrained model employed is \texttt{Gemma-7b-it}\footnote{\href{https://huggingface.co/google/gemma-7b-it}{https://huggingface.co/google/gemma-7b-it}} \citep{team2024gemma}. The feature map $\phi$ in \eqref{reward} is derived by removing the last layer of the pretrained language model, yielding a $d$-dimensional vector, where the dimension $d=3072$ is determined by the \texttt{Gemma-7b-it} model. More details of the pretrained model and the real data are in Appendix \ref{appb}.

Our goal in this experiment is to learn the reward function specified in \eqref{reward} within a sample budget of $T=5000$, i.e., selecting 5000 samples from $n=32000$ training samples. We briefly describe the process of the experiments. The question and two corresponding answers $(x, a^{(0)}, a^{(1)})$ are first input into the \texttt{Gemma-7b-it} model. After processing through the last layer of the \texttt{Gemma-7b-it} model, the triple $(x, a^{(0)}, a^{(1)})$ is transformed into a 3072-dimensional vector $\phi(x, a^{(0)}, a^{(1)})$.  The preference $y$ denoting the preference between $a^{(0)}$ and $a^{(1)}$ follows the Bernoulli distribution as described in \eqref{e1}. Our objective is to estimate the parameter $\theta_*$ in \eqref{e1} using MLE. 

Since no information about the rationality of teachers is available in the dataset,  we engage various LLMs as synthetic teachers. The LLMs considered are \texttt{Qwen2.5-7B-Instruct} \citep{qwen2.5}, \texttt{Yi-1.5-34B-Chat}\footnote{\href{https://huggingface.co/01-ai/Yi-1.5-34B-Chat/}{https://huggingface.co/01-ai/Yi-1.5-34B-Chat}} and \texttt{glm-4-9b-chat}\citep{glm2024chatglm}. These LLMs provide their preference $y$ on two answer options for a single question. Questions are categorized into $g=5$ groups using $k$-means clustering \citep{sklearn_api}. We first derive the MLE $\hat{\theta}$ from the original data. Using $\hat{\theta}$, the information of the type of question, and the preference of each LLM, we estimate the rationality of each LLM $\beta_j^{(k)}$ under the constraints $\sum_{k=1}^{10}\beta_j^{(k)}=10$ and $\beta_j^{(k)}\in[0, 10]$ using MLE. We implement different policies to select conversations and query the synthetic teacher for the preference between the two answers, and estimate $\theta_*$. 

Since the true reward parameter in \eqref{e1} is unknown, we evaluate the effectiveness of different policies using the reward accuracy, which is widely used in assessing reward estimation in large language models \citep{yao2023deepspeed,das2024provably}. Using the estimator $\hat{\theta}$, we can obtain the estimated reward $\hat{\theta}^\top \phi(x, a)$. The reward accuracy is defined as the percentage of instances where the estimated reward of the chosen response exceeds that of the rejected one. A higher reward accuracy signifies a better policy. 


We evaluate reward accuracy on test samples across various $T$ values and batch sizes ($K=50, 100$) using different methods. All experiments were conducted on a single Nvidia A100 GPU. Due to the prohibitive computational cost demonstrated in simulations, the case with $K=1$ is excluded. Figures \ref{fig9} and \ref{fig10} showcase the results, highlighting the superior performance of \textbf{Our Proposal} compared to benchmark policies. 
Additionally, we explore the computational efficiency of the batch version of \textbf{Our Proposal}, observing marked reductions in computation time with increased batch sizes: for \texttt{Anthropic}, times are 6.06 hours ($K=50$), and 2.70 hours ($K=100$); for \texttt{UltraFeedback}, times are 6.05 hours ($K=50$), and 2.68 hours ($K=100$). 
\begin{figure}[t]
    \centering
    \begin{tabular}{ccc}  
        \includegraphics[scale = 0.5]{./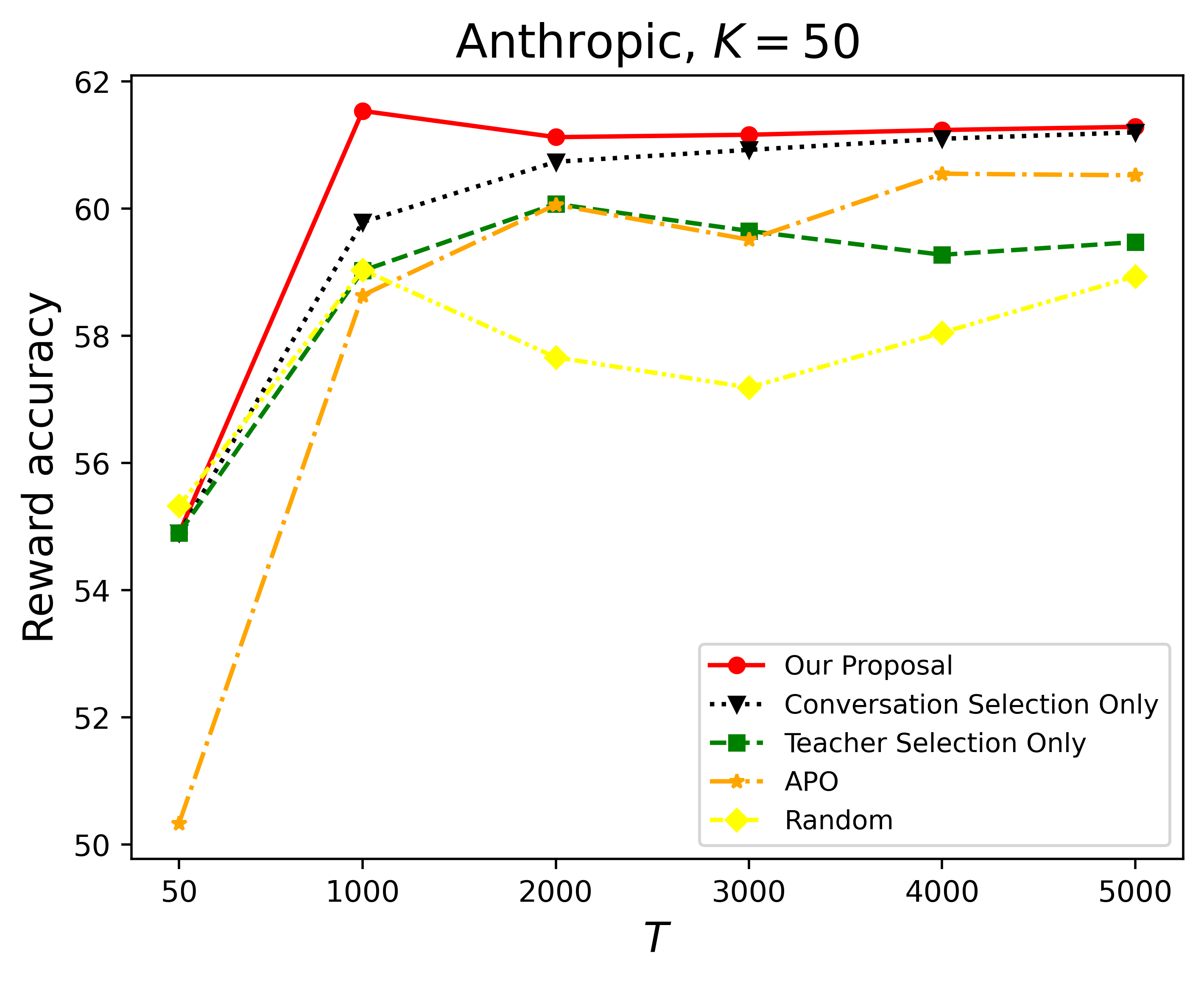}&
         \includegraphics[scale = 0.5]{./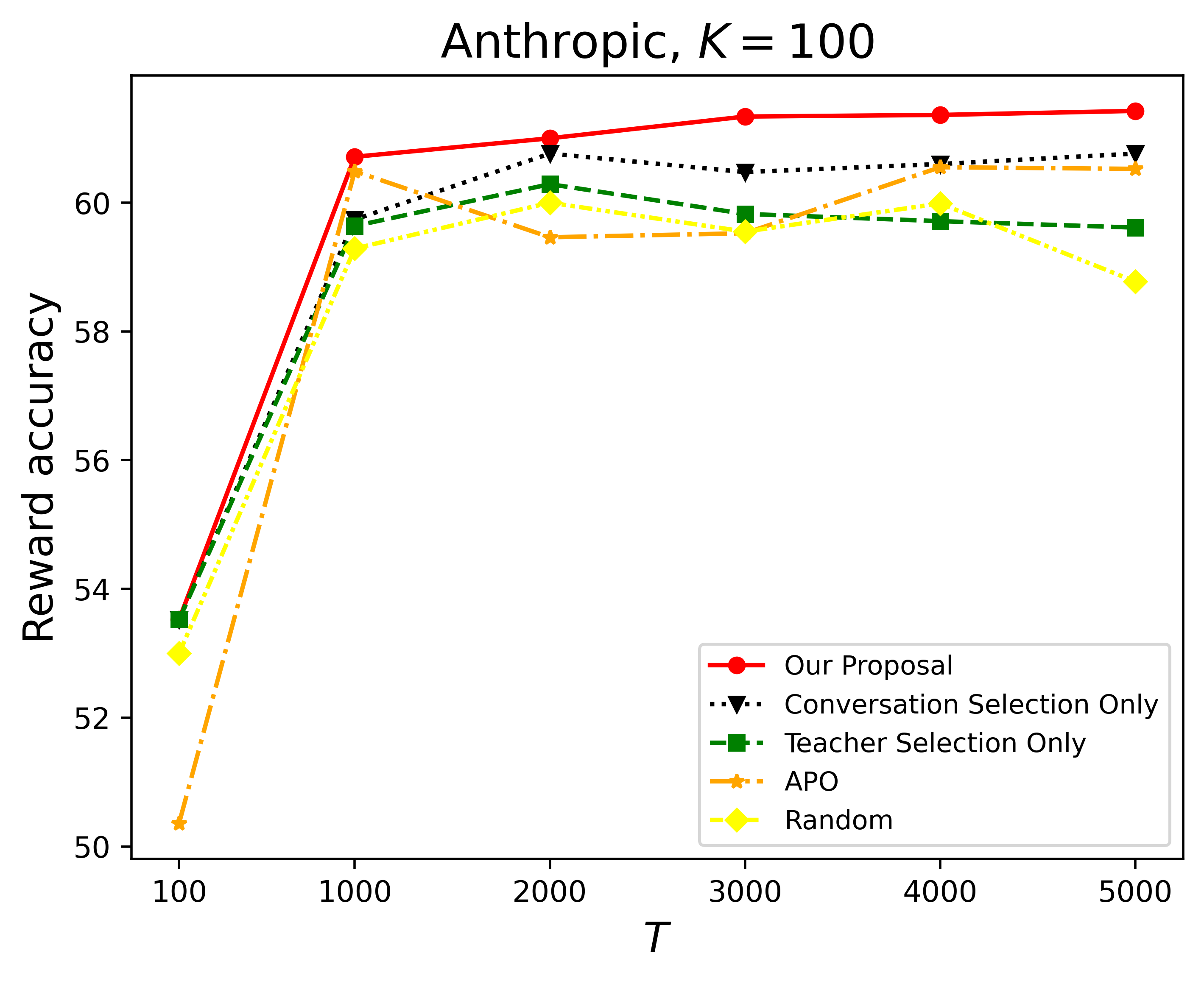}
    \end{tabular}
     \caption{Reward accuracy with different methods using dataset \texttt{Anthropic}.}
     \label{fig9}
\end{figure}
\begin{figure}[t]
    \centering
    \begin{tabular}{ccc}  
        \includegraphics[scale = 0.5]{./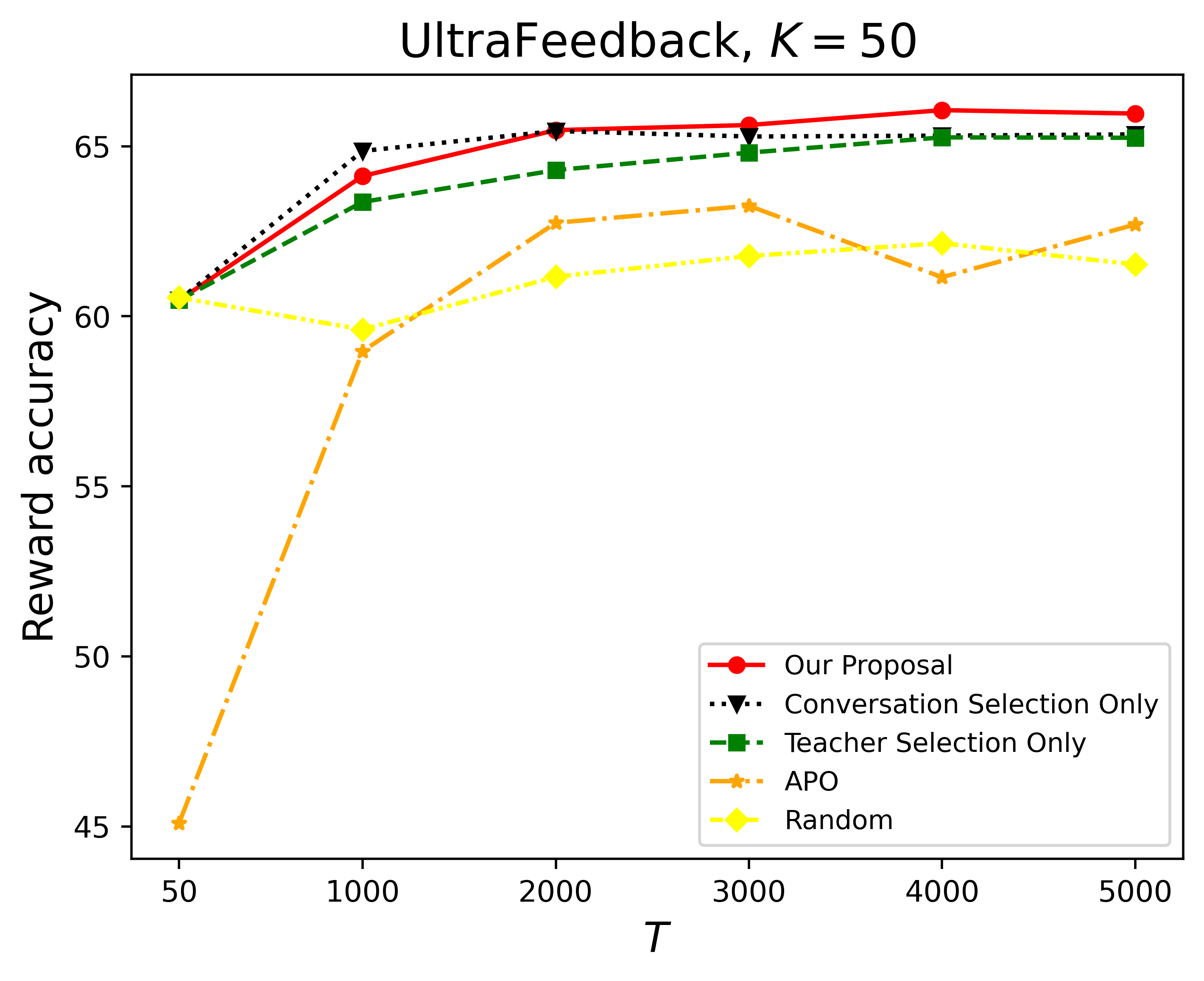}&
         \includegraphics[scale = 0.5]{./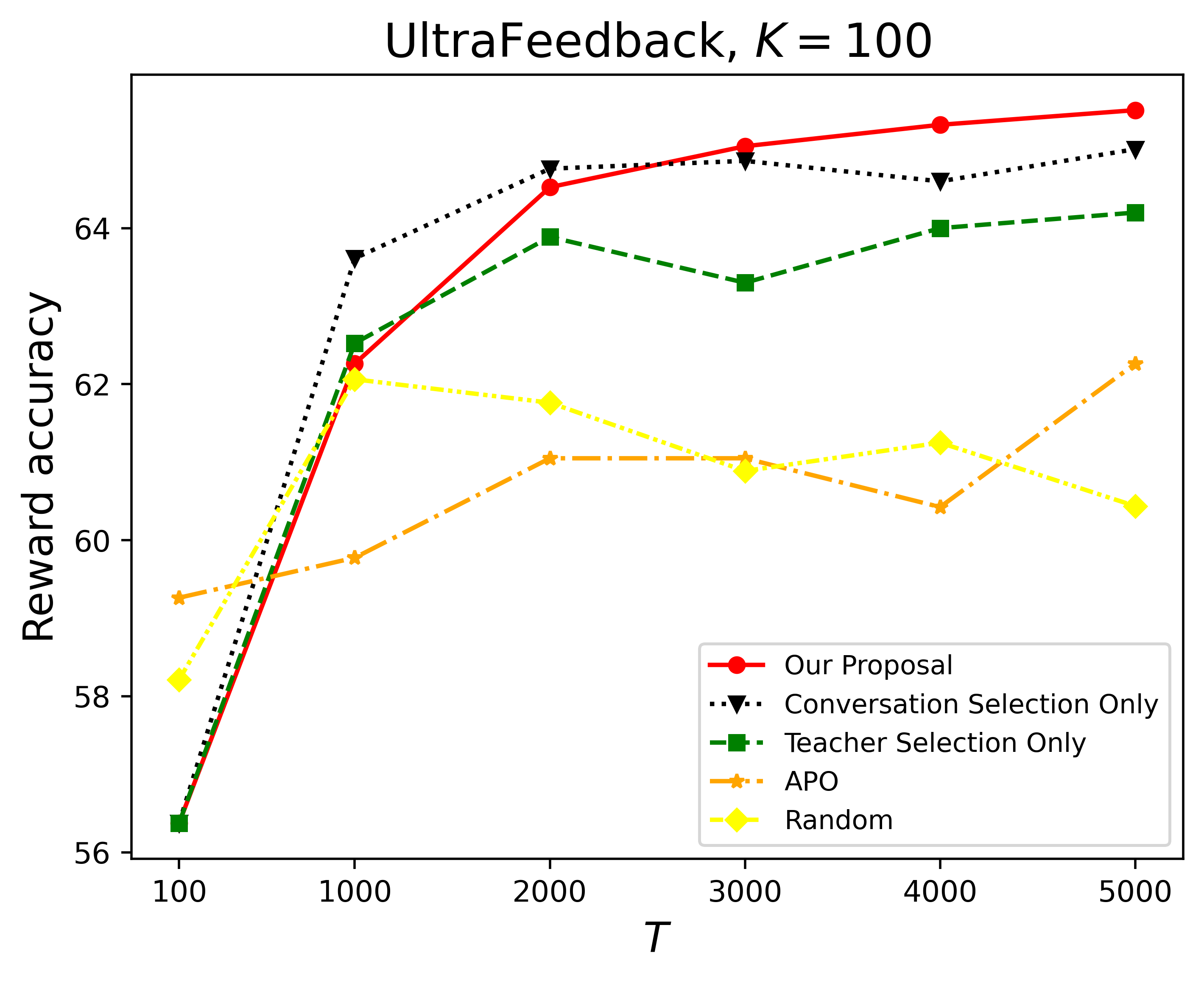}
    \end{tabular}
     \caption{Reward accuracy with different methods using dataset \texttt{UltraFeedback}.}
     \label{fig10}
\end{figure}

\section{Conclusion} \label{sec8}
In this paper, we introduce a comprehensive framework for dual active learning for RLHF, incorporating simultaneous conversation and teacher selection. Our theoretical analysis validates the effectiveness of our proposed algorithm. Furthermore, experimental results consistently demonstrate that our policy outperforms existing state-of-the-art approaches. Based on the adaptively learned reward estimator, we develop a pessimistic policy for the offline RL problem.  This framework not only improves the accuracy of the reward estimation, but also optimizes the efficiency of data usage in the training of large language models, offering significant advancements in the field of RLHF. For future exploration, we can extend our approach to more general ranking problems \citep{fan2024covariate,fan2024ranking}, and investigate how to address infeasible tasks \citep{zhang2024defining} and integrate causal reasoning \citep{cai2023knowledge} into large language models using the dual active learning framework.

{\baselineskip=22pt
\bibliographystyle{asa}
\bibliography{reference}

\begin{thebibliography}{72}
\newcommand{\enquote}[1]{``#1''}
\expandafter\ifx\csname natexlab\endcsname\relax\def\natexlab#1{#1}\fi

\bibitem[{Ai et~al.(2023)Ai, Dette, Liu, and Yu}]{ai2023reinforced}
Ai, M., Dette, H., Liu, Z., and Yu, J. (2023), \enquote{A reinforced learning approach to optimal design under model uncertainty,} \textit{arXiv preprint arXiv:2303.15887}.

\bibitem[{Alsagheer et~al.(2024)Alsagheer, Karanjai, Diallo, Shi, Lu, Beydoun, and Zhang}]{alsagheer2024}
Alsagheer, D., Karanjai, R., Diallo, N., Shi, W., Lu, Y., Beydoun, S., and Zhang, Q. (2024), \enquote{Comparing Rationality Between Large Language Models and Humans: Insights and Open Questions,} \textit{arXiv preprint arXiv:2403.09798}.

\bibitem[{Bai et~al.(2022)Bai, Jones, Ndousse, Askell, Chen, DasSarma, Drain, Fort, Ganguli, Henighan, et~al.}]{bai2022training}
Bai, Y., Jones, A., Ndousse, K., Askell, A., Chen, A., DasSarma, N., Drain, D., Fort, S., Ganguli, D., Henighan, T., et~al. (2022), \enquote{Training a helpful and harmless assistant with reinforcement learning from human feedback,} \textit{arXiv preprint arXiv:2204.05862}.

\bibitem[{Barnett et~al.(2023)Barnett, Freedman, Svegliato, and Russell}]{Peter}
Barnett, P., Freedman, R., Svegliato, J., and Russell, S. (2023), \enquote{Active Reward Learning from Multiple Teachers,} in \textit{The AAAI Workshop on Artificial Intelligence Safety}.

\bibitem[{Billingsley(1995)}]{billingsley1995}
Billingsley, P. (1995), \textit{Probability and Measure}, Wiley Series in Probability and Statistics, Wiley.

\bibitem[{Boyd and Vandenberghe(2004)}]{Boyd2004}
Boyd, S. and Vandenberghe, L. (2004), \textit{Convex Optimization}, Cambridge University Press.

\bibitem[{Bradley and Terry(1952)}]{bradley1952rank}
Bradley, R.~A. and Terry, M.~E. (1952), \enquote{Rank analysis of incomplete block designs: I. The method of paired comparisons,} \textit{Biometrika}, 39, 324--345.

\bibitem[{Buitinck et~al.(2013)Buitinck, Louppe, Blondel, Pedregosa, Mueller, Grisel, Niculae, Prettenhofer, Gramfort, Grobler, Layton, VanderPlas, Joly, Holt, and Varoquaux}]{sklearn_api}
Buitinck, L., Louppe, G., Blondel, M., Pedregosa, F., Mueller, A., Grisel, O., Niculae, V., Prettenhofer, P., Gramfort, A., Grobler, J., Layton, R., VanderPlas, J., Joly, A., Holt, B., and Varoquaux, G. (2013), \enquote{{API} design for machine learning software: experiences from the scikit-learn project,} in \textit{ECML PKDD Workshop: Languages for Data Mining and Machine Learning}, pp. 108--122.

\bibitem[{Cai et~al.(2024)Cai, Liu, and Song}]{cai2023knowledge}
Cai, H., Liu, S., and Song, R. (2024), \enquote{Is Knowledge All Large Language Models Needed for Causal Reasoning?} \textit{arXiv preprint arXiv:2401.00139}.

\bibitem[{Chakraborty et~al.(2024)Chakraborty, Qiu, Yuan, Koppel, Huang, Manocha, Bedi, and Wang}]{chakraborty2024maxmin}
Chakraborty, S., Qiu, J., Yuan, H., Koppel, A., Huang, F., Manocha, D., Bedi, A., and Wang, M. (2024), \enquote{MaxMin-{RLHF}: Towards Equitable Alignment of Large Language Models with Diverse Human Preferences,} in \textit{ICML 2024 Workshop on Models of Human Feedback for AI Alignment}.

\bibitem[{Chang et~al.(2021)Chang, Uehara, Sreenivas, Kidambi, and Sun}]{chang2021mitigating}
Chang, J., Uehara, M., Sreenivas, D., Kidambi, R., and Sun, W. (2021), \enquote{Mitigating covariate shift in imitation learning via offline data with partial coverage,} \textit{Advances in Neural Information Processing Systems}, 34, 965--979.

\bibitem[{Chaudhuri and Mykland(1993)}]{Chaudhuri}
Chaudhuri, P. and Mykland, P.~A. (1993), \enquote{Nonlinear Experiments: Optimal Design and Inference Based on Likelihood,} \textit{Journal of the American Statistical Association}, 88, 538--546.

\bibitem[{Chen et~al.(2023)Chen, Qi, and Wan}]{chen2023steel}
Chen, X., Qi, Z., and Wan, R. (2023), \enquote{STEEL: Singularity-aware Reinforcement Learning,} \textit{arXiv preprint arXiv:2301.13152}.

\bibitem[{Cui et~al.(2024)Cui, Yuan, Ding, Yao, He, Zhu, Ni, Xie, Xie, Lin, Liu, and Sun}]{cui2023ultrafeedback}
Cui, G., Yuan, L., Ding, N., Yao, G., He, B., Zhu, W., Ni, Y., Xie, G., Xie, R., Lin, Y., Liu, Z., and Sun, M. (2024), \enquote{ULTRAFEEDBACK: Boosting Language Models with Scaled AI Feedback,} in \textit{Forty-first International Conference on Machine Learning}.

\bibitem[{Daniels-Koch and Freedman(2022)}]{daniels}
Daniels-Koch, O. and Freedman, R. (2022), \enquote{The Expertise Problem: Learning from Specialized Feedback,} in \textit{NeurIPS ML Safety Workshop}.

\bibitem[{Das et~al.(2024)Das, Chakraborty, Pacchiano, and Chowdhury}]{das2024provably}
Das, N., Chakraborty, S., Pacchiano, A., and Chowdhury, S.~R. (2024), \enquote{Active preference optimization for sample efficient RLHF,} in \textit{ICML 2024 Workshop on Theoretical Foundations of Foundation Models}.

\bibitem[{Dwaracherla et~al.(2024)Dwaracherla, Asghari, Hao, and Van~Roy}]{dwaracherlaefficient}
Dwaracherla, V., Asghari, S.~M., Hao, B., and Van~Roy, B. (2024), \enquote{Efficient Exploration for LLMs,} in \textit{Forty-first International Conference on Machine Learning}.

\bibitem[{Fan et~al.(2024{\natexlab{a}})Fan, Hou, and Yu}]{fan2024covariate}
Fan, J., Hou, J., and Yu, M. (2024{\natexlab{a}}), \enquote{Covariate Assisted Entity Ranking with Sparse Intrinsic Scores,} \textit{arXiv preprint arXiv:2407.08814}.

\bibitem[{Fan et~al.(2024{\natexlab{b}})Fan, Lou, Wang, and Yu}]{fan2024ranking}
Fan, J., Lou, Z., Wang, W., and Yu, M. (2024{\natexlab{b}}), \enquote{Ranking inferences based on the top choice of multiway comparisons,} \textit{Journal of the American Statistical Association}, 1--14.

\bibitem[{Fedorov and Leonov(2013)}]{fedorov2013optimal}
Fedorov, V.~V. and Leonov, S.~L. (2013), \textit{Optimal Design for Nonlinear Response Models}, Chapman \& Hall/CRC Biostatistics Series, Taylor \& Francis.

\bibitem[{Freedman et~al.(2023)Freedman, Svegliato, Wray, and Russell}]{freedman2023}
Freedman, R., Svegliato, J., Wray, K., and Russell, S. (2023), \enquote{Active teacher selection for reinforcement learning from human feedback,} \textit{arXiv preprint arXiv:2310.15288}.

\bibitem[{Freise et~al.(2021)Freise, Gaffke, and Schwabe}]{Freise2021}
Freise, F., Gaffke, N., and Schwabe, R. (2021), \enquote{{The adaptive Wynn algorithm in generalized linear models with univariate response},} \textit{The Annals of Statistics}, 49, 702 -- 722.

\bibitem[{GLM et~al.(2024)GLM, Zeng, Xu, Wang, Zhang, Yin, Zhang, Rojas, Feng, Zhao, et~al.}]{glm2024chatglm}
GLM, T., Zeng, A., Xu, B., Wang, B., Zhang, C., Yin, D., Zhang, D., Rojas, D., Feng, G., Zhao, H., et~al. (2024), \enquote{ChatGLM: A family of large language models from GLM-130b to GLM-4 all tools,} \textit{arXiv preprint arXiv:2406.12793}.

\bibitem[{Hall and Heyde(1980)}]{Martingale}
Hall, P. and Heyde, C.~C. (1980), \textit{Martingale limit theory and its application}, New York: Academic Press, Inc.

\bibitem[{Hao et~al.(2023)Hao, Jain, Lattimore, Van~Roy, and Wen}]{hao2023}
Hao, B., Jain, R., Lattimore, T., Van~Roy, B., and Wen, Z. (2023), \enquote{Leveraging demonstrations to improve online learning: Quality matters,} in \textit{International Conference on Machine Learning}, PMLR, pp. 12527--12545.

\bibitem[{Harville(1997)}]{harville2008matrix}
Harville, D.~A. (1997), \textit{Matrix Algebra From a Statistician's Perspective}, Springer New York.

\bibitem[{Hu et~al.(2015)Hu, Zhu, and Hu}]{hu2015unified}
Hu, J., Zhu, H., and Hu, F. (2015), \enquote{A unified family of covariate-adjusted response-adaptive designs based on efficiency and ethics,} \textit{Journal of the American Statistical Association}, 110, 357--367.

\bibitem[{Huang et~al.(2024)Huang, Li, Yu, Sesia, Hassani, Lee, Bastani, and Dobriban}]{huang2024uncertainty}
Huang, X., Li, S., Yu, M., Sesia, M., Hassani, H., Lee, I., Bastani, O., and Dobriban, E. (2024), \enquote{Uncertainty in language models: Assessment through rank-calibration,} \textit{arXiv preprint arXiv:2404.03163}.

\bibitem[{Jeon et~al.(2020)Jeon, Milli, and Dragan}]{jeon2020}
Jeon, H.~J., Milli, S., and Dragan, A. (2020), \enquote{Reward-rational (implicit) choice: A unifying formalism for reward learning,} \textit{Advances in Neural Information Processing Systems}, 33, 4415--4426.

\bibitem[{Ji et~al.(2024)Ji, He, and Gu}]{ji2024}
Ji, K., He, J., and Gu, Q. (2024), \enquote{Reinforcement Learning from Human Feedback with Active Queries,} \textit{arXiv preprint arXiv:2402.09401}.

\bibitem[{Jin et~al.(2024)Jin, Ren, Yang, and Wang}]{jin2022policy}
Jin, Y., Ren, Z., Yang, Z., and Wang, Z. (2024), \enquote{Policy learning "without" overlap: Pessimism and generalized empirical Bernstein's inequality,} \textit{arXiv preprint arXiv:2212.09900}.

\bibitem[{Jin et~al.(2021)Jin, Yang, and Wang}]{jin2021}
Jin, Y., Yang, Z., and Wang, Z. (2021), \enquote{Is pessimism provably efficient for offline {RL}?} in \textit{International Conference on Machine Learning}, PMLR, pp. 5084--5096.

\bibitem[{Kiefer and Wolfowitz(1960)}]{Kiefer_Wolfowitz_1960}
Kiefer, J. and Wolfowitz, J. (1960), \enquote{The Equivalence of Two Extremum Problems,} \textit{Canadian Journal of Mathematics}, 12, 363–366.

\bibitem[{Lee et~al.(2024)Lee, Yun, and Jun}]{lee2024}
Lee, J., Yun, S.-Y., and Jun, K.-S. (2024), \enquote{Improved Regret Bounds of (Multinomial) Logistic Bandits via Regret-to-Confidence-Set Conversion,} in \textit{International Conference on Artificial Intelligence and Statistics}, PMLR, pp. 4474--4482.

\bibitem[{Lee et~al.(2021)Lee, Smith, and Abbeel}]{lee2021pebble}
Lee, K., Smith, L.~M., and Abbeel, P. (2021), \enquote{PEBBLE: Feedback-Efficient Interactive Reinforcement Learning via Relabeling Experience and Unsupervised Pre-training,} in \textit{International Conference on Machine Learning}, PMLR, pp. 6152--6163.

\bibitem[{Levine et~al.(2020)Levine, Kumar, Tucker, and Fu}]{levine2020offline}
Levine, S., Kumar, A., Tucker, G., and Fu, J. (2020), \enquote{Offline reinforcement learning: Tutorial, review, and perspectives on open problems,} \textit{arXiv preprint arXiv:2005.01643}.

\bibitem[{Li et~al.(2022)Li, Ma, and Srebro}]{li2022pessimism}
Li, G., Ma, C., and Srebro, N. (2022), \enquote{Pessimism for Offline Linear Contextual Bandits using $\ell_p $ Confidence Sets,} \textit{Advances in Neural Information Processing Systems}, 35, 20974--20987.

\bibitem[{Li et~al.(2024)Li, Ruan, Wang, Long, and Su}]{li2024robust}
Li, X., Ruan, F., Wang, H., Long, Q., and Su, W.~J. (2024), \enquote{Robust Detection of Watermarks for Large Language Models Under Human Edits,} \textit{arXiv preprint arXiv:2411.13868}.

\bibitem[{Li et~al.(2023)Li, Yang, and Wang}]{li2023reinforcement}
Li, Z., Yang, Z., and Wang, M. (2023), \enquote{Reinforcement learning with human feedback: Learning dynamic choices via pessimism,} in \textit{ICML 2023 Workshop on Interactive Learning with Implicit Human Feedback}.

\bibitem[{Liu and Hu(2022)}]{liu2022balancing}
Liu, Y. and Hu, F. (2022), \enquote{Balancing unobserved covariates with covariate-adaptive randomized experiments,} \textit{Journal of the American Statistical Association}, 117, 875--886.

\bibitem[{Liu et~al.(2024)Liu, Lu, Zhang, Liu, Guo, Yang, Blanchet, and Wang}]{liu2024provably}
Liu, Z., Lu, M., Zhang, S., Liu, B., Guo, H., Yang, Y., Blanchet, J., and Wang, Z. (2024), \enquote{Provably Mitigating Overoptimization in {RLHF}: Your {SFT} Loss is Implicitly an Adversarial Regularizer,} in \textit{ICML 2024 Workshop: Aligning Reinforcement Learning Experimentalists and Theorists}.

\bibitem[{Ma et~al.(2024)Ma, Li, Zhang, and Hu}]{ma2024new}
Ma, W., Li, P., Zhang, L.-X., and Hu, F. (2024), \enquote{A new and unified family of covariate adaptive randomization procedures and their properties,} \textit{Journal of the American Statistical Association}, 119, 151--162.

\bibitem[{Mehta et~al.(2023)Mehta, Das, Neopane, Dai, Bogunovic, Schneider, and Neiswanger}]{mehta2023sample}
Mehta, V., Das, V., Neopane, O., Dai, Y., Bogunovic, I., Schneider, J., and Neiswanger, W. (2023), \enquote{Sample Efficient Reinforcement Learning from Human Feedback via Active Exploration,} \textit{arXiv preprint arXiv:2312.00267}.

\bibitem[{Melo et~al.(2024)Melo, Tigas, Abate, and Gal}]{melo2024deep}
Melo, L.~C., Tigas, P., Abate, A., and Gal, Y. (2024), \enquote{Deep Bayesian Active Learning for Preference Modeling in Large Language Models,} in \textit{The Thirty-eighth Annual Conference on Neural Information Processing Systems}.

\bibitem[{Mukherjee et~al.(2024)Mukherjee, Lalitha, Kalantari, Deshmukh, Liu, Ma, and Kveton}]{mukherjee2024optimal}
Mukherjee, S., Lalitha, A., Kalantari, K., Deshmukh, A., Liu, G., Ma, Y., and Kveton, B. (2024), \enquote{Optimal Design for Human Preference Elicitation,} in \textit{The Thirty-eighth Annual Conference on Neural Information Processing Systems}.

\bibitem[{Muldrew et~al.(2024)Muldrew, Hayes, Zhang, and Barber}]{muldrewactive}
Muldrew, W., Hayes, P., Zhang, M., and Barber, D. (2024), \enquote{Active Preference Learning for Large Language Models,} in \textit{Forty-first International Conference on Machine Learning}.

\bibitem[{Myrzakhan et~al.(2024)Myrzakhan, Bsharat, and Shen}]{myrzakhan2024open}
Myrzakhan, A., Bsharat, S.~M., and Shen, Z. (2024), \enquote{Open-LLM-Leaderboard: From Multi-choice to Open-style Questions for {LLM}s Evaluation, Benchmark, and Arena,} \textit{arXiv preprint arXiv:2406.07545}.

\bibitem[{Nakada et~al.(2024)Nakada, Xu, Li, and Zhang}]{nakada2024synthetic}
Nakada, R., Xu, Y., Li, L., and Zhang, L. (2024), \enquote{Synthetic Oversampling: Theory and A Practical Approach Using LLMs to Address Data Imbalance,} \textit{arXiv preprint arXiv:2406.03628}.

\bibitem[{Ouyang et~al.(2022)Ouyang, Wu, Jiang, Almeida, Wainwright, Mishkin, Zhang, Agarwal, Slama, Ray, et~al.}]{ouyang2022}
Ouyang, L., Wu, J., Jiang, X., Almeida, D., Wainwright, C., Mishkin, P., Zhang, C., Agarwal, S., Slama, K., Ray, A., et~al. (2022), \enquote{Training language models to follow instructions with human feedback,} \textit{Advances in neural information processing systems}, 35, 27730--27744.

\bibitem[{Park et~al.(2024)Park, Liu, Kong, Zhang, and Ozdaglar}]{park2024rlhf}
Park, C., Liu, M., Kong, D., Zhang, K., and Ozdaglar, A.~E. (2024), \enquote{RLHF from Heterogeneous Feedback via Personalization and Preference Aggregation,} in \textit{ICML 2024 Workshop on Theoretical Foundations of Foundation Models}.

\bibitem[{Pronzato(2010)}]{pronzato2010one}
Pronzato, L. (2010), \enquote{One-step ahead adaptive D-optimal design on a finite design space is asymptotically optimal,} \textit{Metrika}, 71, 219--238.

\bibitem[{Ramamurthy et~al.(2023)Ramamurthy, Ammanabrolu, Brantley, Hessel, Sifa, Bauckhage, Hajishirzi, and Choi}]{ramamurthyreinforcement}
Ramamurthy, R., Ammanabrolu, P., Brantley, K., Hessel, J., Sifa, R., Bauckhage, C., Hajishirzi, H., and Choi, Y. (2023), \enquote{Is Reinforcement Learning (Not) for Natural Language Processing: Benchmarks, Baselines, and Building Blocks for Natural Language Policy Optimization,} in \textit{The Eleventh International Conference on Learning Representations}.

\bibitem[{Rashidinejad et~al.(2021)Rashidinejad, Zhu, Ma, Jiao, and Russell}]{rashidinejad2021}
Rashidinejad, P., Zhu, B., Ma, C., Jiao, J., and Russell, S. (2021), \enquote{Bridging offline reinforcement learning and imitation learning: A tale of pessimism,} \textit{Advances in Neural Information Processing Systems}, 34, 11702--11716.

\bibitem[{Schulman et~al.(2017)Schulman, Wolski, Dhariwal, Radford, and Klimov}]{schulman2017}
Schulman, J., Wolski, F., Dhariwal, P., Radford, A., and Klimov, O. (2017), \enquote{Proximal policy optimization algorithms,} \textit{arXiv preprint arXiv:1707.06347}.

\bibitem[{Team et~al.(2024)Team, Mesnard, Hardin, Dadashi, Bhupatiraju, Pathak, Sifre, Rivi{\`e}re, Kale, Love, et~al.}]{team2024gemma}
Team, G., Mesnard, T., Hardin, C., Dadashi, R., Bhupatiraju, S., Pathak, S., Sifre, L., Rivi{\`e}re, M., Kale, M.~S., Love, J., et~al. (2024), \enquote{Gemma: Open models based on gemini research and technology,} \textit{arXiv preprint arXiv:2403.08295}.

\bibitem[{Team(2024)}]{qwen2.5}
Team, Q. (2024), \enquote{Qwen2.5: A Party of Foundation Models,} .

\bibitem[{Tropp(2012)}]{Tropp2012}
Tropp, J.~A. (2012), \enquote{User-Friendly Tail Bounds for Sums of Random Matrices,} \textit{Foundations of Computational Mathematics}, 12, 389–434.

\bibitem[{White(1973)}]{Lynda}
White, L.~V. (1973), \enquote{An Extension of the General Equivalence Theorem to Nonlinear Models,} \textit{Biometrika}, 60, 345--348.

\bibitem[{Wilks(1932)}]{Wilks}
Wilks, S.~S. (1932), \enquote{Certain Generalizations in the Analysis of Variance,} \textit{Biometrika}, 24, 471--494.

\bibitem[{Wu et~al.(2024)Wu, Jiao, Shen, Yang, and Lu}]{wu2024neural}
Wu, D., Jiao, Y., Shen, L., Yang, H., and Lu, X. (2024), \enquote{Neural Network Approximation for Pessimistic Offline Reinforcement Learning,} in \textit{Proceedings of the AAAI Conference on Artificial Intelligence}, vol.~38, pp. 15868--15877.

\bibitem[{Xiao et~al.(2024)Xiao, Li, Xie, Getzen, Fang, Long, and Su}]{xiao2024algorithmic}
Xiao, J., Li, Z., Xie, X., Getzen, E., Fang, C., Long, Q., and Su, W.~J. (2024), \enquote{On the Algorithmic Bias of Aligning Large Language Models with RLHF: Preference Collapse and Matching Regularization,} \textit{arXiv preprint arXiv:2405.16455}.

\bibitem[{Xie et~al.(2021)Xie, Cheng, Jiang, Mineiro, and Agarwal}]{xie2021bellman}
Xie, T., Cheng, C.-A., Jiang, N., Mineiro, P., and Agarwal, A. (2021), \enquote{Bellman-consistent pessimism for offline reinforcement learning,} \textit{Advances in neural information processing systems}, 34, 6683--6694.

\bibitem[{Yang et~al.(2013)Yang, Biedermann, and Tang}]{yang2013optimal}
Yang, M., Biedermann, S., and Tang, E. (2013), \enquote{On optimal designs for nonlinear models: a general and efficient algorithm,} \textit{Journal of the American Statistical Association}, 108, 1411--1420.

\bibitem[{Yao et~al.(2023)Yao, Aminabadi, Ruwase, Rajbhandari, Wu, Awan, Rasley, Zhang, Li, Holmes, et~al.}]{yao2023deepspeed}
Yao, Z., Aminabadi, R.~Y., Ruwase, O., Rajbhandari, S., Wu, X., Awan, A.~A., Rasley, J., Zhang, M., Li, C., Holmes, C., et~al. (2023), \enquote{Deepspeed-chat: Easy, fast and affordable rlhf training of chatgpt-like models at all scales,} \textit{arXiv preprint arXiv:2308.01320}.

\bibitem[{Yin et~al.(2022)Yin, Duan, Wang, and Wang}]{yin2022near}
Yin, M., Duan, Y., Wang, M., and Wang, Y.-X. (2022), \enquote{Near-optimal Offline Reinforcement Learning with Linear Representation: Leveraging Variance Information with Pessimism,} in \textit{International Conference on Learning Representation}.

\bibitem[{Zeng et~al.(2024)Zeng, Dai, Cheng, Wang, Hu, Chen, Du, and Xu}]{zeng2024}
Zeng, D., Dai, Y., Cheng, P., Wang, L., Hu, T., Chen, W., Du, N., and Xu, Z. (2024), \enquote{On diversified preferences of large language model alignment,} in \textit{Findings of the Association for Computational Linguistics: EMNLP 2024}, pp. 9194--9210.

\bibitem[{Zhan et~al.(2024)Zhan, Uehara, Kallus, Lee, and Sun}]{zhanprovable}
Zhan, W., Uehara, M., Kallus, N., Lee, J.~D., and Sun, W. (2024), \enquote{Provable Offline Preference-Based Reinforcement Learning,} in \textit{The Twelfth International Conference on Learning Representations}.

\bibitem[{Zhan et~al.(2023)Zhan, Uehara, Sun, and Lee}]{zhan2023query}
Zhan, W., Uehara, M., Sun, W., and Lee, J.~D. (2023), \enquote{How to Query Human Feedback Efficiently in {RL}?} in \textit{ICML 2023 Workshop The Many Facets of Preference-Based Learning}.

\bibitem[{Zhang et~al.(2024)Zhang, Xu, and Cai}]{zhang2024defining}
Zhang, W., Xu, Z., and Cai, H. (2024), \enquote{Defining Boundaries: A Spectrum of Task Feasibility for Large Language Models,} \textit{arXiv preprint arXiv:2408.05873}.

\bibitem[{Zhong et~al.(2024)Zhong, Deng, Su, Wu, and Zhang}]{zhong2024provable}
Zhong, H., Deng, Z., Su, W.~J., Wu, Z.~S., and Zhang, L. (2024), \enquote{Provable multi-party reinforcement learning with diverse human feedback,} \textit{arXiv preprint arXiv:2403.05006}.

\bibitem[{Zhou(2024)}]{zhou2024bi}
Zhou, W. (2024), \enquote{Bi-level offline policy optimization with limited exploration,} \textit{Advances in Neural Information Processing Systems}, 36.

\bibitem[{Zhu et~al.(2023)Zhu, Jordan, and Jiao}]{zhu2023principled}
Zhu, B., Jordan, M., and Jiao, J. (2023), \enquote{Principled reinforcement learning with human feedback from pairwise or $K$-wise comparisons,} in \textit{International Conference on Machine Learning}, PMLR, pp. 43037--43067.

\end{thebibliography}
}

\newpage
\appendix 
\baselineskip=24pt
\setcounter{page}{1}
\setcounter{equation}{0}
\setcounter{section}{0}
\renewcommand{\thesection}{S.\arabic{section}}
\renewcommand{\thelemma}{S\arabic{lemma}}
\renewcommand{\theequation}{S\arabic{equation}}

\begin{center}
{\Large\bf Supplementary Materials} \\
\medskip
{\Large\bf ``Dual Active Learning for Reinforcement Learning from Human Feedback’’}  \\
\bigskip
\end{center}
\bigskip
\noindent
In this supplement, we include experimental details for Figure \ref{fig_pess} in Section \ref{appa}, extend our framework to MDPs in Section \ref{sec7}, briefly describe the datasets and the pretrained model in Section \ref{appb}, and provide detailed proofs of the theoretical results, including Lemma \ref{lemma0}, Theorem \ref{cor1}, Theorem \ref{thm1}, Theorem \ref{thm3}, Theorem \ref{thm4} and Corollary \ref{cor2} in Section \ref{appd}. Support lemmas are included in Section \ref{appsu}.

\appendix
\section{Experimental Details for Figure \ref{fig_pess}}\label{appa}
We consider 4 actions with $\phi(x, a_1)=(0.2,0, 0.1)^\top, \phi(x, a_2)=(0.1,-0.9, 0.1)^\top, \phi(x, a_3)=(0.2,0.1, -0.1)^\top$ and $\phi(x, a_4)=(0,0.1, 0)^\top$. The true reward parameter is $\theta_*=(-1,0.1, 1)^\top$. The optimal action is $a_4=\mathop{\arg\max}_{a\in \{a_1, a_2,a_3,a_4\}}\theta_*^\top\phi(x, a)$. For the experiment, $T$ actions are randomly selected from $a_1, a_2$ and $a_3$ with probabilities $0.45, 0.45, 0.1$ to estimate $\theta_*$ as $\hat{\theta}$, intentionally excluding the optimal action $a_4$ from selection. Based on $\hat{\theta}$, we apply both the greedy policy and the pessimistic policy. The simulation results are derived from 100 independent runs.
\section{Extension to Markov Decision Processes}\label{sec7}
 Now, we extend our framework to MDPs. We consider a finite-horizon MDP characterized by the tuple $(\mathcal{S}, \mathcal{A}, N, \{P_i\}_{i=1}^N, \{r_i\}_{i=1}^N, \rho)$.  Here, $\mathcal{S}$ represents the state space, $\mathcal{A}$ is the action space, $N$ denotes the horizon length, $P_i: \mathcal{S}\times\mathcal{A}\mapsto \Delta(\mathcal{S})$ is the probability transition at step $i$, $r_i: \mathcal{S}\times\mathcal{A}\mapsto \mathbb{R}$ is the reward function, $\rho$ is the initial state distribution. At each step $i$, after taking action $a$ in state $s$, the system transitions to a new state $s'$ with probability $P_i(s'|s,a)$, and a reward $r_i(s, a)$ is received.

We assume the availability of two trajectories starting from the same initial state for comparison. Initially, we sample the starting state $s_0$ from a fixed distribution $\rho$, followed by two trajectories $\tau^{(0)}=(s_0^{(0)},a^{(0)}_0,s^{(0)}_1,a^{(0)}_1,\cdots, s^{(0)}_N, a^{(0)}_N)$ and $\tau^{(1)}=(s_0^{(1)},a^{(1)}_0,s^{(1)}_1,a^{(1)}_1,\cdots, s^{(1)}_N, a^{(1)}_N)$, where both start from $s_0$, i.e., $s_0^{(0)}=s_0^{(1)}=s_0$. The preference of a teacher with rationality parameter $\beta$ over the two two trajectories $\tau^{(0)}$ and $\tau^{(1)}$ is given by 
\begin{equation}
\mathbb{P}(Y=1|s_0,\tau^{(0)},\tau^{(1)},\beta,\theta_*)=\frac{e^{\beta\theta_*^T\sum_{i=0}^N\phi(s_i^{(1)},a_i^{(1)})}}{e^{\beta\theta_*^T\sum_{i=0}^N\phi(s_i^{(0)},a_i^{(0)})}+e^{\beta\theta_*^T\sum_{i=0}^N\phi(s_i^{(1)},a_i^{(1)})}}.
\end{equation}
We have a dataset $\{(o^{(i)}, \tau_i^{(0)}, \tau_i^{(1)})\}_{i=1}^n$, where $o^{(i)}$ denotes the type of the trajectory, and define $z^{(i)}=\sum_{i=0}^N[\phi(s_i^{(1)},a_i^{(1)})-\phi(s_i^{(1)},a_i^{(1)})]$ for reward learning with a sample budget constraint. To estimate $\theta_*$, we select $T$ samples from $(z^{(1)}, \cdots, z^{(n)})$ and $T$ teachers from $\{\beta_1^{(k)},\cdots,\beta_m^{(k)}\}_{k=1}^g$ using Algorithm \ref{alg}, by modifying only the calculation of $z$. The conclusions regarding the MLE $\hat{\theta}_T$ derived from the contextual bandit setting using Algorithm \ref{alg} are applicable here.

A deterministic policy $\pi_i: \mathcal{S}\mapsto\mathcal{A}$ is a function that maps a state to an action at step $i$. We use $\pi$ to denote the collection of policies $\{\pi_i\}_{i=1}^N$. The associated value function $V^{\pi}(s)=\mathbb{E}[\sum_{i=0}^Nr_i(s_i,a_i)|s_0=s, a_i=\pi_i(s_i)]$ represents the expected cumulative reward from starting in state $s$ and adhering to $\pi_i$ at each step $i$. We define the state occupancy measure $d^\pi(s)=\sum_{i=1}^N\mathbb{P}_i(s_i=s|\pi)$ and the state-action occupancy measure $d^\pi(s,a)=\sum_{i=1}^N\mathbb{P}_i(s_i=s,a_i=a|\pi)$, where $\mathbb{P}_i(s_i=s|\pi)$ denotes the probability of visiting state $s_i=s$ (similar $s_i=s, a_i=a$) at step $i$ after executing policy $\pi$ and starting from $s_0\sim\rho$. 

For analyzing sub-optimality, we employ a pessimistic estimate of the rewards. When the transition distribution $P$ is known, the occupancy measure $d^\pi$ can be directly computed. If $P$ is unknown, it can be estimated by collecting state-action trajectories through interactions with the environment, as outlined in the method proposed by \cite{zhan2023query}. Given the definition of $d^\pi$ , one has $\mathbb{E}_{s\sim\rho}[V^\pi(s)]=\mathbb{E}_{s, a\sim d^\pi}[r(s,a)]$. The pessimistic expected value function is formulated as
\begin{equation*}
\hat{J}_T(\pi)=\mathop{\min}_{\theta\in \mathcal{C}(\hat{\theta}_T,\delta)} \mathbb{E}_{s\sim d^{\pi}}\theta^\top \phi(s,\pi(s))=\hat{\theta}_T^\top\mathbb{E}_{s\sim d^{\pi}}\phi(s,\pi(s))-\|\mathbb{E}_{s\sim d^{\pi}}\phi(s,\pi(s))\|_{\bar{H}_T^{-1}(\hat{\theta}_T)}\gamma (T,d,\delta).
\end{equation*}
Then, the pessimistic policy is obtained as $\hat{\pi}_T=\mathop{\arg\max}_\pi \hat{J}_T(\pi)$.
\begin{theorem}\label{thm4} 
  Under Assumptions \ref{assu1} and \ref{assu2}, for any $1<\delta<1$, with probability at least $1-\delta$, when $T>T_0$ for some positive constant $T_0$, the sub-optimality of the pessimistic policy $\hat{\pi}_T$ for the offline MDPs is bounded by
   $$\textsf{SubOpt}(\hat{\pi}_T)\leq 2\sqrt{\frac{C_3}{T}\left[ d\log \left(e+\frac{C_4T}{d}\right)+\log \frac{2}{\delta}\right]}\|M^{-1/2}(\xi_*, \theta_*)\mathbb{E}_{s\sim d^{\pi^*}} \phi(s,\pi^*(s))\|_2,$$
   where $C_3$ and $C_4$ are some positive constants.
\end{theorem}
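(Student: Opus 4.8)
The plan is to mirror the contextual-bandit sub-optimality proof (Theorem \ref{thm3}) and adapt it to the MDP setting, where the principal change is that the reward feature aggregated over a trajectory now plays the role that $\phi(x,a)$ played in the bandit case, and expectations over the context distribution $\rho$ are replaced by expectations over the state-action occupancy measure $d^\pi$. First I would establish the MDP analogue of Lemma \ref{lemma0}: because the preference model in the MDP simply replaces the single-step feature difference $z$ by the trajectory-level difference $z^{(i)}=\sum_{i=0}^N[\phi(s_i^{(1)},a_i^{(1)})-\phi(s_i^{(0)},a_i^{(0)})]$, and Assumption \ref{assu2} bounds $\|\phi\|_2$ (hence bounds the aggregated feature by $(N+1)C_\phi$), the same MLE concentration argument applies verbatim with the constants $C_1,C_2$ absorbing the horizon-dependent factor. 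This yields the confidence set $\mathcal{C}(\hat{\theta}_T,\delta)$ exactly as in \eqref{ci}, with $C_1,C_2$ renamed $C_3,C_4$ to record the horizon dependence.

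Next I would exploit the identity $\mathbb{E}_{s\sim\rho}[V^\pi(s)]=\mathbb{E}_{s,a\sim d^\pi}[r(s,a)]=\theta_*^\top\mathbb{E}_{s,a\sim d^\pi}\phi(s,\pi(s))$, which reduces the value-function sub-optimality to a linear functional of $\theta_*$ against the occupancy-weighted feature expectation. This is the step that makes the MDP case structurally identical to the bandit case: once $J(\pi)$ is written as $\theta_*^\top\,\mathbb{E}_{s\sim d^\pi}\phi(s,\pi(s))$, the quantity $\mathbb{E}_{s\sim d^\pi}\phi(s,\pi(s))$ is the exact counterpart of $\mathbb{E}_{x\sim\rho}\phi(x,\pi(x))$ in \eqref{pessj}. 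I would then run the standard pessimism decomposition: writing $\textsf{SubOpt}(\hat{\pi}_T)=J(\pi^*)-J(\hat{\pi}_T)$, I insert the pessimistic estimate $\hat{J}_T$ and split into $[J(\pi^*)-\hat{J}_T(\pi^*)]+[\hat{J}_T(\pi^*)-\hat{J}_T(\hat{\pi}_T)]+[\hat{J}_T(\hat{\pi}_T)-J(\hat{\pi}_T)]$. The middle bracket is nonpositive by the definition of $\hat{\pi}_T$ as the maximizer of $\hat{J}_T$; the third bracket is nonpositive on the event $\theta_*\in\mathcal{C}(\hat{\theta}_T,\delta)$ because pessimism underestimates the true value; and the first bracket is controlled by the confidence-set radius.

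To bound the first bracket I would use that on the $1-\delta$ event, $J(\pi^*)-\hat{J}_T(\pi^*)\le 2\gamma(T,d,\delta)\,\|\mathbb{E}_{s\sim d^{\pi^*}}\phi(s,\pi^*(s))\|_{\bar{H}_T^{-1}(\hat{\theta}_T)}$, following the Cauchy–Schwarz / dual-norm bound in the $\bar H_T^{-1}$ geometry that underlies \eqref{pessj}. The final step, and the one I expect to be the main obstacle, is replacing the empirical, data-dependent matrix $\bar{H}_T^{-1}(\hat{\theta}_T)$ by the population $D$-optimal information matrix $M^{-1}(\xi_*,\theta_*)$ to obtain the clean stated bound with $\|M^{-1/2}(\xi_*,\theta_*)\mathbb{E}_{s\sim d^{\pi^*}}\phi(s,\pi^*(s))\|_2$. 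This requires invoking the asymptotic convergence $\bar{H}_T(\hat{\theta}_T)\to M(\xi_*,\theta_*)$ guaranteed by Theorem \ref{thm1} (which holds in the MDP case since the estimator conclusions carry over), together with the continuity of the matrix inverse and square root, so that for $T>T_0$ the empirical weighted norm is dominated by a constant multiple of the population one; the delicate point is that $\hat{\pi}_T$ and hence the feature expectation may itself depend on the realized data, so I would argue uniformly over the policy class $\Pi$ using the boundedness in Assumption \ref{assu2} and the assumed boundedness of $\|M^{-1/2}(\xi_*,\theta_*)\mathbb{E}_{s\sim d^{\pi^*}}\phi\|_2$ inherited from the offline-RL coverage literature. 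Collecting the three brackets then yields the claimed $O(1/\sqrt{T})$ bound with horizon-dependent constants $C_3,C_4$.
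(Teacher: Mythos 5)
Your proposal is correct and follows essentially the same route as the paper: the paper's proof of Theorem \ref{thm4} simply repeats the three-term pessimism decomposition of Theorem \ref{thm3}, bounds the middle and last brackets by optimality of $\hat{\pi}_T$ and the MDP analogue of Lemma \ref{lemma0}, applies the Cauchy--Schwarz bound in the $\bar{H}_T^{-1}(\hat{\theta}_T)$ geometry to the first bracket, and then invokes the almost-sure convergence $\bar{H}_T(\hat{\theta}_T)\to M(\xi_*,\theta_*)$ to replace the empirical matrix by the population one for $T>T_0$, exactly as you outline. Your final worry about uniformity over $\Pi$ is unnecessary, since the only feature expectation surviving the decomposition is that of the fixed policy $\pi^*$, but this does not affect correctness.
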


\section{The Datasets and the Pretrained Model}\label{appb}
In this section, we give a brief description of the datasets and the pretrained model used in Section \ref{sec6.2}. All the descriptions are adapted from \texttt{Hugging Face}\footnote{\href{https://huggingface.co/}{https://huggingface.co/}}. The dataset \texttt{Anthropic}\footnote{\href{https://huggingface.co/datasets/Anthropic/hh-rlhf}{https://huggingface.co/datasets/Anthropic/hh-rlhf}} \citep{bai2022training} is about helpfulness and harmlessness, and is meant to train preference (or reward) models for subsequent RLHF training. These data are not meant for supervised training of dialogue agents. For helpfulness, the data are grouped into train/test splits in three tranches: from our base models (context-distilled 52B language models), via rejection sampling (mostly with best-of-16 sampling) against an early preference model, and a dataset sampled during our iterated "online" process. For harmlessness, the data are only collected for our base models, but otherwise formatted in the same way. 
The dataset \texttt{UltraFeedback}\footnote{\href{https://huggingface.co/datasets/openbmb/UltraFeedback}{https://huggingface.co/datasets/openbmb/UltraFeedback}}  \citep{cui2023ultrafeedback} is a large-scale, fine-grained, diverse preference dataset, used for training powerful reward models and critic models. About 64k prompts from are collected diverse resources (including UltraChat, ShareGPT, Evol-Instruct, TruthfulQA, FalseQA, and FLAN). These prompts are then used to query multiple LLMs and generate 4 different responses for each prompt, resulting in a total of 256k samples. The \texttt{Gemma-7b-it}\footnote{\href{https://huggingface.co/google/gemma-7b-it}{https://huggingface.co/google/gemma-7b-it}} model is among the 
Gemma \citep{team2024gemma} family, which is a collection of lightweight, state-of-the-art open models from Google, built from the same research and technology used to create the Gemini models. They are text-to-text, decoder-only large language models, available in English, with open weights, pretrained variants, and instruction-tuned variants. Gemma models are well-suited for a variety of text generation tasks, including question answering, summarization, and reasoning. 
\section{Proofs}\label{appd}
\subsection{Proof of Theorem \ref{cor1}}
Recall the definition of $H_t(\hat{\theta}_{t-1})$ in \eqref{sinf}. When $H_{t-1}(\hat{\theta}_{t-1})$ is nonsingular, we have
\begin{equation}\label{tsel}
 \det [H_{t-1}(\hat{\theta}_{t-1})+\Dot{\mu}(\beta \hat{\theta}_{t-1}^\top z)\beta^2zz^\top]
 =\det H_t(\hat{\theta}_{t-1}) \left[1+\dot{\mu}(\beta  z^\top\hat{\theta}_{t-1})\beta^2 z^\top H^{-1}_{t-1}(\hat{\theta}_{t-1})z\right].     
\end{equation}
The above equation follows from Lemma \ref{harville} with $R=H_{t-1}(\hat{\theta}_{t-1})$, $\tilde{T}=1, S=\Dot{\mu}(\beta \hat{\theta}_{t-1}^\top z)\beta^2z, U=z^\top$.
At step $t$, $H_{t-1}(\hat{\theta}_{t-1})$ is fixed. From \eqref{tsel}, the maximization of $ \det[H_{t}(\hat{\theta}_{t-1})+\Dot{\mu}(\beta \hat{\theta}_{t-1}^\top z)\beta^2zz^\top]$
is equivalent to the maximization of $\dot{\mu}(\beta  z^\top\hat{\theta}_{t-1})\beta^2 z^\top H^{-1}_{t-1}(\hat{\theta}_{t-1})z$. For ease of presentation, we denote $h(\beta|z,\hat{\theta}_{t-1})=\dot{\mu}(\beta  z^\top\hat{\theta}_{t-1})\beta^2 z^\top H^{-1}_{t-1}(\hat{\theta}_{t-1})z$. The rationality parameter $\beta$ influences $h(\beta|z,\hat{\theta}_{t-1})$ through two aspects: $\Dot{\mu}(\beta \hat{\theta}_{t-1}^\top z)$ and $\beta^2$. Recall that $\beta>0$. On the one hand, a large $\beta$ leads to a larger $\beta^2$, which contributes to the increase of $h(\beta|z,\hat{\theta}_{t-1})$. On the other hand, $\beta$ affects $h(\beta|z,\hat{\theta}_{t-1})$ through $\Dot{\mu}(\beta \hat{\theta}_{t-1}^\top z)$. By simple calculation, we have $\Dot{\mu}(\beta \hat{\theta}_{t-1}^\top z)=\mu(\beta \hat{\theta}_{t-1}^\top z)[1-\mu(\beta \hat{\theta}_{t-1}^\top z)]$. Clearly, an increase in $\beta$ does not always leads to an increase in $\Dot{\mu}(\beta \hat{\theta}_{t-1}^\top z)$. Thus, a more rational teacher is not always the most informative.

\subsection{Proof of Lemma \ref{lemma0}}
We denote $\mathcal{L}_T(\theta)=-TL_T(\theta)$, where $L_T(\theta)$ is defined in \eqref{lik}. Then, the MLE is $\hat{\theta}_T=\mathop{\arg\min}_{\theta\in\Theta}\mathcal{L}_T(\theta)$.
By the Taylor expansion \citep{lee2024}, we have
\begin{equation}\label{1eq1}
\mathcal{L}_T(\theta)=\mathcal{L}_T(\theta_*)+\nabla\mathcal{L}_T(\theta_*)^\top (\theta-\theta_*)+\|\theta-\theta_*\|^2_{G_T(\theta_*,\theta)},   
\end{equation}
where
$$G_T(\theta_*,\theta)=\sum_{t=1}^T\left[\int_{0}^1(1-v)\dot{\mu}(\beta_tz_t^\top (\theta_*+v(\theta-\theta_*)))dv\right]\beta_t^2z_tz_t^\top.$$ 
By the definition of $H_T(\theta)$ in \eqref{sinf}, we have
\begin{align*}
H_{T}(\theta)&=\sum_{t=1}^{T}\Dot{\mu}(\beta_{t} \theta^\top z_t)\beta_{t}^2z_tz_t^\top\\
&\preceq \sum_{t=1}^T\left[C(2+|\beta_tz_t^\top (\theta-\theta_*)|)^2\int_{0}^1(1-v)\dot{\mu}(\beta_tz_t^\top (\theta_*+v(\theta-\theta_*)))dv\right]\beta_t^2z_tz_t^\top\\
&\preceq \sum_{t=1}^T\left[C(2+2C_\beta C_\theta C_z)^2\int_{0}^1(1-v)\dot{\mu}(\beta_tz_t^\top (\theta_*+v(\theta-\theta_*)))dv\right]\beta_t^2z_tz_t^\top\\
&=C(2+2C_\beta C_\theta C_z)^2G_T(\theta_*,\theta),
\end{align*}
where the first inequality follows from  Lemma \ref{lemma11} with some constant $C>1$, and the second inequality is due to Assumption \ref{assu2}.
Then $H_T(\hat{\theta}_T)\preceq C(2+2C_\beta C_\theta C_z)^2G_T(\theta_*,\hat{\theta}_T)$ for some $C>1$. Together with \eqref{1eq1}, we have
\begin{equation}\label{1eq0}
\begin{aligned}
\|\hat{\theta}_T-\theta_*\|_{H_T(\hat{\theta}_T)}^2&\leq C(2+2C_\beta C_\theta C_z)^2\|\hat{\theta}_T-\theta_*\|_{G_T(\theta_*,\hat{\theta}_T)}^2\\
&=C(2+2C_\beta C_\theta C_z)^2[\mathcal{L}_T(\hat{\theta}_T)-\mathcal{L}_T(\theta_*)+\nabla \mathcal{L}_T(\theta_*)^\top (\theta_*-\hat{\theta}_T)]\\
&\leq C(2+2C_\beta C_\theta C_z)^2\nabla \mathcal{L}_T(\theta_*)^\top (\theta_*-\hat{\theta}_T),
\end{aligned}    
\end{equation}
where the last inequality is from $\mathcal{L}_T(\hat{\theta}_T)\leq \mathcal{L}_T(\theta_*)$.
Now, we bound $\nabla \mathcal{L}_T(\theta_*)^\top (\theta_*-\hat{\theta}_T)$.
We define $\xi_t=\mu (\beta_t z_t^\top \theta_*)-y_t$. Then,
\begin{equation}\label{1eq2}
 \nabla \mathcal{L}_T(\theta_*)^\top (\theta_*-\theta)=\sum_{t=1}^T [\mu (\beta_t z_t^\top \theta_*)-y_t]\beta_t z_t^\top (\theta_*-\theta)=\sum_{t=1}^T \xi_t\beta_t z_t^\top (\theta_*-\theta).   
\end{equation}
Here $\xi_t$ is a martingale difference sequence w.r.t. $\mathcal{F}_{t-1}=\sigma(z_1,\beta_1, y_1,\cdots,z_{t-1}, \beta_{t-1}, y_{t-1}, z_t, \beta_t)$. Then $\xi_t\beta_t z_t^\top (\theta_*-\theta)$ is a martingale difference sequence. Since $|\xi_t\beta_t z_t^\top (\theta_*-\theta)|\leq 2 C_\beta C_z C_\theta$ and $\mathbb{E}[\xi_t\beta_t z_t^\top (\theta_*-\theta)]^2|\mathcal{F}_{t-1}]=\dot{\mu} (\beta_t z_t^\top \theta_*)[\beta_t z_t^\top (\theta_*-\theta)]^2$, by Lemma \ref{lemma10}, for any $\eta\in (0, \frac{1}{2 C_\beta C_z C_\theta}]$, with probability at least $1-\frac{\delta}{2}$, we have
\begin{equation}\label{1eq3}
\begin{aligned}
\sum_{t=1}^T \xi_t\beta_t z_t^\top (\theta_*-\theta)&\leq (e-2)\eta \sum_{t=1}^T \dot{\mu} (\beta_t z_t^\top \theta_*)[\beta_t z_t^\top (\theta_*-\theta)]^2+\frac{1}{\eta}\log \frac{2}{\delta}\\
&=(e-2)\eta\|\theta_*-\theta\|_{H_T(\theta_*)}^2+\frac{1}{\eta}\log \frac{2}{\delta}.
\end{aligned}
\end{equation}
By \eqref{1eq2} and \eqref{1eq3}, replacing $\theta$ with $\hat{\theta}_T$, with probability at least $1-\frac{\delta}{2}$, we have
\begin{equation}\label{1eq02}
 \nabla \mathcal{L}_T(\theta_*)^\top (\theta_*-\hat{\theta}_T)\leq (e-2)\eta\|\theta_*-\hat{\theta}_T\|_{H_T(\theta_*)}^2+\frac{1}{\eta}\log \frac{2}{\delta}.   
\end{equation}
By setting $\eta=\frac{1}{(e-2)(4+4C_\beta C_z C_\theta)}$, similar to the arguments in Lemma 6 of \cite{lee2024}, with probability lat east $1-\frac{\delta}{2}$, we can obtain
\begin{equation}\label{1eq4}
\|\theta_*-\hat{\theta}_T\|_{H_T(\theta_*)}^2\leq C' (C_\beta C_z C_\theta)^2 \left[d\log \left(e+\frac{C_\beta C_z C_\theta T}{d}\right)+\log \frac{2}{\delta}\right]
\end{equation}
for some positive constant $C'$. By \eqref{1eq02} and \eqref{1eq4}, with with probability at least $1-\delta$, we have
\begin{equation}\label{1eq5}
\nabla \mathcal{L}_T(\theta_*)^\top (\theta_*-\hat{\theta}_T)\leq \frac{C' (C_\beta C_z C_\theta)^2 }{4+4C_\beta C_z C_\theta} \left[d\log \left(e+\frac{C_\beta C_z C_\theta T}{d}\right)+\log \frac{2}{\delta}\right]+(e-2)(4+4C_\beta C_z C_\theta)\log \frac{2}{\delta}.    
\end{equation}
We define $C_1=\frac{CC' (C_\beta C_z C_\theta)^2(2+2C_\beta C_z C_\theta) }{2}+2C(e-2)(2+2C_\beta C_z C_\theta)^3$ and $C_2=C_\beta C_z C_\theta$.
By \eqref{1eq0} and \eqref{1eq5}, with probability at least $1-\delta$, we have
\begin{align*}
\|\hat{\theta}_T-\theta_*\|_{H_T(\hat{\theta}_T)}&\leq \sqrt{C_1 \left[d\log \left(e+\frac{C_2 T}{d}\right)+\log \frac{2}{\delta}\right]}.
\end{align*}
We define $\bar{H}_T(\hat{\theta}_T)=\frac{1}{T}H_T(\hat{\theta}_T)$. Then, with probability at least $1-\delta$, it follows
\begin{align*}
\|\hat{\theta}_T-\theta_*\|_{\bar{H}_T(\hat{\theta}_T)}&\leq \sqrt{\frac{C_1}{T}\left[ d\log \left(e+\frac{C_2 T}{d}\right)+\log \frac{2}{\delta}\right]}.
\end{align*}

\subsection{Proof of Theorem \ref{thm1}}
The convergence of this adaptively generated information matrix $M(\xi_T, \hat{\theta}_T)$ is established in the following theorem.
\begin{theorem}\label{thm2}
Assuming that Assumptions \ref{assu1} and \ref{assu2} are satisfied and $\hat{\theta}_T$ is the estimator derived from Algorithm \ref{alg}, let $M(\xi_T,\hat{\theta}_T)$ be as defined in \eqref{infor} and $M(\xi_*,\theta_*)$ as in \eqref{d-opti}. It follows that
$$M(\xi_T,\hat{\theta}_T)\xrightarrow{a.s.}M(\xi_*,\theta_*),\ as \ T\rightarrow \infty,$$
    where $\xrightarrow{a.s.}$ denotes convergence almost surely.
\end{theorem}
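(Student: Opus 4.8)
The plan is to prove the almost-sure convergence in three stages: strong consistency of the adaptive estimator $\hat\theta_T\to\theta_*$, asymptotic $D$-optimality of the sequential design, and a continuity/uniqueness argument that upgrades these to convergence of the information matrix. Throughout I would exploit that the feasible design space $\mathcal{Z}\times\bigcup_k\mathcal{B}_k$ is finite (at most $nm$ pairs) and that $\dot\mu(\cdot)=\mu(\cdot)[1-\mu(\cdot)]>0$ everywhere. The starting point is the matrix-determinant identity already used for Theorem~\ref{cor1}: the greedy step that maximizes $\det[H_{t-1}(\hat\theta_{t-1})+\dot\mu(\beta\hat\theta_{t-1}^\top z)\beta^2 zz^\top]$ is equivalent to maximizing the sensitivity function $s_{t-1}(z,\beta)=\dot\mu(\beta\hat\theta_{t-1}^\top z)\beta^2 z^\top H_{t-1}^{-1}(\hat\theta_{t-1})z$, which is exactly the directional derivative of $\log\det$ at the current design. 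Writing $\bar H_T=T^{-1}H_T$, the information matrix obeys the update $M(\xi_t,\theta)=\tfrac{t-1}{t}M(\xi_{t-1},\theta)+\tfrac1t\,\dot\mu(\beta_t\theta^\top z_t)\beta_t^2 z_tz_t^\top$ for any fixed $\theta$, which (read at $\theta=\theta_*$) identifies the algorithm as a Frank--Wolfe vertex-addition scheme with diminishing step size $1/t$, in which the vertex appended at step $t$ is the sensitivity-maximizer computed at $\hat\theta_{t-1}$.

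First I would establish a persistent-excitation bound, $\liminf_{T}\lambda_{\min}(\bar H_T(\hat\theta_T))>0$ almost surely. Because $M(\xi_*,\theta_*)$ is positive definite (Assumption~\ref{assu1}), the support of $\xi_*$ spans $\mathbb{R}^d$; the greedy rule cannot permanently starve any direction, since whenever some eigenvalue of $H_{t-1}$ lags, the factor $z^\top H_{t-1}^{-1}z$ in the sensitivity blows up along that direction and forces the selection of informative points there. Formalizing this (e.g.\ by a starvation/contradiction argument on the finite design space) gives the uniform lower bound on $\lambda_{\min}(\bar H_T)$. Combined with Lemma~\ref{lemma0} this yields $\|\hat\theta_T-\theta_*\|_2\le\lambda_{\min}^{-1/2}(\bar H_T(\hat\theta_T))\,\|\hat\theta_T-\theta_*\|_{\bar H_T(\hat\theta_T)}\to0$; choosing the confidence level $\delta_T=T^{-2}$ in Lemma~\ref{lemma0} makes the failure probabilities summable, so Borel--Cantelli upgrades the high-probability bound to almost-sure consistency.

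Strong consistency is precisely what neutralizes the adaptivity. Since $\hat\theta_{t-1}\to\theta_*$ and $\dot\mu$ is continuous and bounded, the sensitivity $s_{t-1}(\cdot,\cdot)$ converges uniformly over the finite design space to the ``oracle'' sensitivity computed at $\theta_*$, so the vertex chosen at each step maximizes the correct linear functional up to a vanishing error. I would then invoke the convergence theory for sequential $D$-optimal design (the Wynn--Fedorov argument, equivalently Frank--Wolfe with a vanishing-error linear-maximization oracle, as in the adaptive-design literature): the running maximal sensitivity is driven down to $d$, which by the Kiefer--Wolfowitz equivalence theorem characterizes $D$-optimality, whence $\det M(\xi_T,\theta_*)\to\det M(\xi_*,\theta_*)$. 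The set of achievable information matrices is compact and convex and $\log\det$ is strictly concave on it, so its maximizer $M(\xi_*,\theta_*)$ is unique even if the optimal design measure is not; thus every limit point of $M(\xi_T,\theta_*)$ is a maximizer and equals $M(\xi_*,\theta_*)$, giving matrix convergence. Finally, continuity of $\theta\mapsto M(\xi,\theta)$ (uniform over designs, by Assumption~\ref{assu2}) together with $\hat\theta_T\to\theta_*$ controls $M(\xi_T,\hat\theta_T)-M(\xi_T,\theta_*)\to0$, completing the argument.

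I expect the persistent-excitation step to be the main obstacle. It is the keystone on which consistency rests, yet it must be argued directly from the greedy rule while the selection criterion itself depends on the not-yet-consistent estimates $\hat\theta_{t-1}$, creating a chicken-and-egg coupling between estimation and design that cannot be resolved by simply assuming one side. Breaking this coupling --- showing that the sensitivity-driven exploration keeps $\lambda_{\min}(\bar H_T)$ bounded away from zero without presupposing $\hat\theta_{t-1}\approx\theta_*$ --- is the delicate part; the subsequent Frank--Wolfe/equivalence-theorem and strict-concavity arguments are comparatively standard once consistency is in hand.
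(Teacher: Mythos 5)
Your three-stage architecture --- strong consistency of $\hat\theta_T$, asymptotic $D$-optimality of the greedy design via the sensitivity function and the Kiefer--Wolfowitz equivalence theorem, then strict concavity of $\log\det$ plus continuity of $\theta\mapsto M(\xi,\theta)$ to conclude --- is exactly the architecture of the paper's proof, and your stages 2--3 correspond closely to what the paper proves by hand: its log-det recursion and Lemma \ref{state} are precisely the ``Wynn--Fedorov / Frank--Wolfe with vanishing oracle error'' argument that you propose to import from the literature. Where you genuinely diverge is stage 1. The paper proves consistency (Lemma \ref{thecon}) by a likelihood-ratio argument, lower-bounding $L_T(\theta_*)-\sup_{\|\theta-\theta_*\|\ge\delta}L_T(\theta)$ via a design-separation property borrowed from \cite{Freise2021} (their Theorem 2.6 and Lemma A.1) together with Pronzato's lemma (Lemma \ref{pron}); you instead propose persistent excitation of $\bar H_T$ combined with the paper's own Lemma \ref{lemma0} at levels $\delta_T=T^{-2}$ and Borel--Cantelli. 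Your route has the appeal of recycling Lemma \ref{lemma0}; the paper's route sidesteps eigenvalue growth rates entirely by outsourcing the adaptivity issue to a separation property tailored to sequentially generated designs.

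The one genuine soft spot is the persistent-excitation claim itself, and you are right to flag it as the keystone. As stated, $\liminf_T\lambda_{\min}(\bar H_T(\hat\theta_T))>0$ (linear growth of $\lambda_{\min}(H_T)$) is essentially as strong as the theorem: it follows a posteriori from $M(\xi_T,\theta_*)\to M(\xi_*,\theta_*)\succ0$, but it is not what your starvation sketch delivers directly. What the blow-up argument does give is this: since $\dot\mu$ is bounded below on the compact domain of Assumption \ref{assu2}, one has $M(\xi_*,\theta)\succeq c\,I$ uniformly over $\theta\in\Theta$ (no consistency needed), hence the maximal sensitivity at step $t$ is at least $\mathrm{tr}\bigl(H_{t-1}^{-1}M(\xi_*,\hat\theta_{t-1})\bigr)\ge c/\lambda_{\min}(H_{t-1})$; feeding this into $\det H_t=\det H_{t-1}\bigl(1+s_{t-1}(z_t,\beta_t)\bigr)$, comparing with the polynomial bound $\det H_t\le(\mathrm{tr}(H_t)/d)^d\lesssim t^d$, and using monotonicity of $\lambda_{\min}(H_t)$ yields $\lambda_{\min}(H_T)\gtrsim T/\log T$ --- weaker than linear, but amply sufficient for your Borel--Cantelli step, which only requires $\lambda_{\min}(H_T)\gg\log T$. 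So you should weaken the claim to what this argument proves. The second place you invoke persistent excitation --- uniform closeness of $M^{-1}(\xi_t,\hat\theta_t)$ and $M^{-1}(\xi_t,\theta_*)$, which by an additive perturbation bound would need $\lambda_{\min}$ bounded below --- can be avoided altogether: the two matrices differ only in the weights $\dot\mu(\beta\hat\theta_t^\top z)$ versus $\dot\mu(\beta\theta_*^\top z)$, whose ratios tend to $1$ uniformly over the finite design space, so $(1-\eta_t)M(\xi_t,\theta_*)\preceq M(\xi_t,\hat\theta_t)\preceq(1+\eta_t)M(\xi_t,\theta_*)$ with $\eta_t\to0$, and the two sensitivity functions agree up to a factor $1+o(1)$ no matter how small $\lambda_{\min}$ is. (The paper is itself cavalier on this exact point: it posits a uniform lower bound $\lambda_0$ on $\lambda_{\min}(M(\xi_T,\hat\theta_T))$ with only a nod to the initialization.) With these two repairs your proof goes through.
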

Theorem \ref{thm2}  asserts that the information matrix $M(\xi_T,\hat{\theta}_T)$ converges almost surely to $M(\xi_*,\theta_*)$, which maximizes $\det M(\xi,\theta_*)$ over the set of all designs. The proof of Theorem \ref{thm2} is deferred to Section \ref{appt2}.

We take the gradient of $L_T(\theta)$ with respect to $\theta$ as follows,
 \begin{equation}\label{t1e0}
\frac{\partial L_T(\theta)}{\partial \theta}=\frac{1}{T}\sum_{t=1}^T\Dot{\mu}(\beta_{t} \theta^\top z_t)\left[\frac{y_t\beta_{t} z_t}{\mu(\beta_{t} \theta^\top z_t)}-\frac{(1-y_t)\beta_{t}z_t}{1-\mu(\beta_{t} \theta^\top z_t)}\right]
=\frac{1}{T}\sum_{t=1}^T[y_t-\mu(\beta_{t} \theta^\top z_t)]\beta_{t} z_t.
\end{equation}
We denote $S_T(\theta)=\frac{\partial L_T(\theta)}{\partial \theta}$ as the score function.
Since $S_T(\hat{\theta}_T)=0$, by the Taylor expansion, we have
\begin{align*}
S_T(\theta^*)&=S_T(\theta^*)-S_T(\hat{\theta}_T)\\
&=\frac{1}{T}\sum_{t=1}^T[\mu(\beta_{t} \hat{\theta}_T^\top z_t)-\mu(\beta_{t} z_t^\top\theta_* )]\beta_{t} z_t\\
&=\frac{1}{T}\sum_{t=1}^T\dot{\mu}(\beta_{t} \tilde{\theta}_t^\top z_t)\beta^2_{t} z_tz_t^\top ( \hat{\theta}_T-\theta_*)\\
&=\frac{1}{T}\left[\sum_{t=1}^t\dot{\mu}(\beta_{t} \tilde{\theta}_t^\top z_t)\beta^2_{t} z_tz_t^\top-\sum_{t=1}^t\dot{\mu}(\beta_{t} z_t^\top\theta_* )\beta^2_{t} z_tz_t^\top+\sum_{t=1}^T\dot{\mu}(\beta_{t} z_t^\top\theta_* )\beta^2_{t} z_tz_t^\top\right] ( \hat{\theta}_T-\theta_*)\\
&=\left\{\frac{1}{T}\sum_{t=1}^T[\dot{\mu}(\beta_{t} \tilde{\theta}_t^\top z_t)-\dot{\mu}(\beta_{t} z_t^\top\theta_* )]\beta^2_{t} z_tz_t^\top+M(\xi_T,\theta_*)\right\}( \hat{\theta}_T-\theta_*),
\end{align*}
where $\tilde{\theta}_t$ is on the line segment joining $\theta_*$ and $\hat{\theta}_T$. We denote $M_*=M(\xi_*,\theta_*)$. Therefore,
\begin{equation}\label{t1e8}
\sqrt{T}M_*^{-1/2}S_T(\theta_*)=M_*^{-1/2}\left\{\frac{1}{T}\sum_{t=1}^T[\dot{\mu}(\beta_{t} \tilde{\theta}_t^\top z_t)-\dot{\mu}(\beta_{t} z_t^\top\theta_* )]\beta^2_{t} z_tz_t^\top+M(\xi_T,\theta_*)\right\}\sqrt{T}( \hat{\theta}_T-\theta_*).
\end{equation}
We propose a lemma to show that the left side of \eqref{t1e8} converges to a multivariate normal distribution.
\begin{lemma} \label{t1e7}
Let $M_*=M(\xi_*,\theta_*)$ be defined in \eqref{d-opti} and $S_T(\theta_*)$ be the score function defined in \eqref{t1e0}. Under Assumption \ref{assu2}, we have
$$\sqrt{T}M_*^{-1/2}S_T(\theta_*)\xrightarrow{d}N(0, I_d).$$
\end{lemma}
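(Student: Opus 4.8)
The plan is to recognize $\sqrt{T}\,S_T(\theta_*)$ as a normalized sum of a vector-valued martingale difference sequence and to apply a martingale central limit theorem (CLT) through the Cram\'er--Wold device. First I would set $D_t = [y_t-\mu(\beta_t z_t^\top\theta_*)]\beta_t z_t$, so that from \eqref{t1e0} we have $\sqrt{T}\,S_T(\theta_*)=T^{-1/2}\sum_{t=1}^T D_t$. With respect to the filtration $\mathcal{F}_{t-1}=\sigma(z_1,\beta_1,y_1,\dots,z_{t-1},\beta_{t-1},y_{t-1},z_t,\beta_t)$, the selected pair $(z_t,\beta_t)$ is $\mathcal{F}_{t-1}$-measurable, while under Model III the label satisfies $\mathbb{E}[y_t\mid\mathcal{F}_{t-1}]=\mu(\beta_t z_t^\top\theta_*)$. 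Hence $\mathbb{E}[D_t\mid\mathcal{F}_{t-1}]=0$, so $\{D_t,\mathcal{F}_t\}$ is a martingale difference sequence; and since $y_t$ is conditionally Bernoulli with variance $\dot{\mu}(\beta_t z_t^\top\theta_*)$, its conditional covariance is $\mathbb{E}[D_tD_t^\top\mid\mathcal{F}_{t-1}]=\dot{\mu}(\beta_t z_t^\top\theta_*)\beta_t^2 z_tz_t^\top$. Averaging then yields the exact identity $\tfrac{1}{T}\sum_{t=1}^T\mathbb{E}[D_tD_t^\top\mid\mathcal{F}_{t-1}]=M(\xi_T,\theta_*)$.

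Next I would verify the two hypotheses of the martingale CLT. For the conditional-variance condition, Theorem \ref{thm2} gives $M(\xi_T,\hat\theta_T)\xrightarrow{a.s.}M_*$; combining this with the consistency $\hat\theta_T\to\theta_*$ (which follows from Lemma \ref{lemma0} together with the positive-definiteness in Assumption \ref{assu1}) and the Lipschitz continuity of $\dot\mu$ on the bounded set guaranteed by Assumption \ref{assu2}, the difference $M(\xi_T,\hat\theta_T)-M(\xi_T,\theta_*)$ vanishes almost surely, whence $M(\xi_T,\theta_*)\xrightarrow{a.s.}M_*$. To reduce to a scalar statement I would fix an arbitrary $\lambda\in\mathbb{R}^d$ and set $W_t=\lambda^\top M_*^{-1/2}D_t$, a scalar martingale difference sequence whose averaged conditional variance equals $\lambda^\top M_*^{-1/2}M(\xi_T,\theta_*)M_*^{-1/2}\lambda\xrightarrow{a.s.}\lambda^\top\lambda$. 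The conditional Lindeberg condition is immediate from boundedness: $|y_t-\mu(\beta_t z_t^\top\theta_*)|\le 1$, $|\beta_t|\le C_\beta$ and $\|z_t\|_2\le 2C_\phi$ give $\|D_t\|_2\le 2C_\beta C_\phi$, so $|W_t|$ is bounded by a deterministic constant $B_\lambda$, and for $T$ large enough $\mathbf{1}(|W_t|>\epsilon\sqrt{T})\equiv 0$, forcing the truncated second moments to vanish. The scalar martingale CLT then gives $T^{-1/2}\sum_{t=1}^T W_t\xrightarrow{d}N(0,\|\lambda\|_2^2)$, and since $\lambda$ is arbitrary, the Cram\'er--Wold device delivers $\sqrt{T}\,M_*^{-1/2}S_T(\theta_*)\xrightarrow{d}N(0,I_d)$.

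The main obstacle is not the Lindeberg step, which is trivial here because the summands are uniformly bounded, but rather the conditional-variance step: since the design points $(z_t,\beta_t)$ are chosen adaptively, the summands $D_t$ are dependent, so the classical Lindeberg--Feller CLT is unavailable and the martingale structure is essential. The delicate part is justifying $M(\xi_T,\theta_*)\xrightarrow{a.s.}M_*$, which rests on the deferred Theorem \ref{thm2} and on transferring its conclusion from the plug-in estimator $\hat\theta_T$ to the true $\theta_*$; carrying this out cleanly requires the uniform control of $\dot\mu$ afforded by the bounded-parameter Assumption \ref{assu2} and the nondegeneracy of $M_*$ from Assumption \ref{assu1} to ensure a well-defined limiting covariance $I_d$.
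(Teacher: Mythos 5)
Your proof follows essentially the same route as the paper's: both recognize the score as a sum of martingale differences with respect to the adaptive filtration, compute the conditional covariance to recover $M(\xi_T,\theta_*)$, verify the conditional-variance and conditional-Lindeberg hypotheses of the Hall--Heyde martingale CLT (the paper's Lemma \ref{lemma9}), and finish with the Cram\'er--Wold device (Lemma \ref{billingsley}). Your Lindeberg step is in fact slightly cleaner: since $|W_t|\le B_\lambda$ deterministically, the indicator $\mathbf{1}(|W_t|>\epsilon\sqrt{T})$ vanishes identically once $T>B_\lambda^2/\epsilon^2$, whereas the paper bounds the truncated moments by fourth moments over $\epsilon^2 T$; both are valid. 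The one step you should repair is the claim that almost-sure consistency $\hat\theta_T\to\theta_*$ ``follows from Lemma \ref{lemma0} together with Assumption \ref{assu1}.'' Lemma \ref{lemma0} controls $\|\hat\theta_T-\theta_*\|$ only in the data-dependent norm $\bar H_T(\hat\theta_T)$ and only with high probability for each fixed $T$; extracting almost-sure Euclidean convergence would require a uniform lower bound on $\lambda_{\min}(\bar H_T(\hat\theta_T))$ together with a Borel--Cantelli argument, neither of which is immediate (Assumption \ref{assu1} concerns the population matrix $M(\xi_*,\theta_*)$, not the sample matrix). The paper avoids this by proving strong consistency separately (Lemma \ref{thecon}) via a likelihood-difference argument, and the convergence $M(\xi_T,\theta_*)\xrightarrow{a.s.}M_*$ that you re-derive is already an intermediate conclusion inside the paper's proof of Theorem \ref{thm2}; citing Lemma \ref{thecon} (or that intermediate step) in place of Lemma \ref{lemma0} closes the gap. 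Alternatively, since the Hall--Heyde CLT only needs the conditional variance to converge in probability, a weaker, in-probability consistency statement would also suffice for your argument.
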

\begin{proof}
Let $\tilde{v}\in\mathbb{R}^d$ and $v=\tilde{v}/\|\tilde{v}\|$ Then, $\|v\|=1$. Recall that $e_i=y_i-\mu(\beta_iz_i^\top\theta^*)$ defined in \eqref{error}. By \eqref{t1e0}, we have
\begin{equation}\label{t1e4}
\sqrt{T}v^\top M_*^{-1/2}S_T(\theta_*)=\frac{1}{\sqrt{T}}\sum_{t=1}^Te_t\beta_tv^\top M_*^{-1/2}z_t.  
\end{equation}
We define the $\sigma$-field generated by the historical data as follows,
\begin{equation}\label{field}
 \mathcal{F}_t=\sigma(z_1,\dots,z_t;\beta_1,\cdots,\beta_t; y_1,\dots,y_t)   
\end{equation}
Under Assumption \ref{assu2}, $|\sum_{t=1}^Te_t\beta_tv^\top M_*^{-1/2}z_t|$ is bounded. Since $z_t$ and $\beta_t$ in Algorithm \ref{alg} are determined by $\mathcal{F}_{t-1}$, $z_t$ and $\beta_t$ are measurable with respect to $\mathcal{F}_{t-1}$. Therefore, we have
\begin{align*}
 \mathbb{E}\left(\sum_{t=1}^Te_t\beta_tv^\top M_*^{-1/2}z_t|\mathcal{F}_{T-1}\right)&=  \mathbb{E}\left(\sum_{t=1}^{T-1}e_t\beta_tv^\top M_*^{-1/2}z_t|\mathcal{F}_{T-1}\right)+\mathbb{E}(e_T\beta_Tv^\top M_*^{-1/2}z_T|\mathcal{F}_{T-1})\\
 &=\sum_{t=1}^{T-1}e_t\beta_tv^\top M_*^{-1/2}z_t+\mathbb{E}(e_T|\mathcal{F}_{T-1})\beta_Tv^\top M_*^{-1/2}z_T\\
 &=\sum_{t=1}^{T-1}e_t\beta_tv^\top M_*^{-1/2}z_t,
\end{align*}
Thus, the sequence of partial sums $\sum_{t=1}^Te_t\beta_tv^\top M_*^{-1/2}z_t$ is a martingale with respect to $\mathcal{F}_T$. Since
\begin{align*}
\mathbb{E}(e_t^2|\mathcal{F}_{t-1})&=\mathbb{E}\{[y_t-\mu(\beta_t\theta_*^\top z_t)]^2|\mathcal{F}_{t-1}\}\\
&=\mathbb{E}(y_t^2|\mathcal{F}_{t-1})+\mu^2(\beta_t\theta_*^\top z_t)-2\mathbb{E}(y_t|\mathcal{F}_{t-1})\mu(\beta_t\theta_*^\top z_t)\\
&=\mu(\beta_t\theta_*^\top z_t)-\mu^2(\beta_t\theta_*^\top z_t)\\
&=\dot{\mu}(\beta_t\theta_*^\top z_t),
\end{align*}
we have
\begin{equation}\label{t1e2}
\begin{aligned}
\frac{1}{T}\sum_{t=1}^T\mathbb{E}[(e_t\beta_tv^\top M_*^{-1/2}z_t)^2|\mathcal{F}_{t-1}]&=\frac{1}{T}\sum_{t=1}^T\mathbb{E}[e_t^2|\mathcal{F}_{t-1}](\beta_tv^\top M_*^{-1/2}z_t)^2\\
&=\frac{1}{T}\sum_{t=1}^Tv^\top M_*^{-1/2}\dot{\mu}(\beta_t\theta_*^\top z_t)\beta_t^2z_tz_t^\top M_*^{-1/2}v\\
&=v^\top M_*^{-1/2}M(\xi_T,\theta_*) M_*^{-1/2}v\\
&\xrightarrow{a.s.}1,
\end{aligned}
\end{equation}
where the convergence follows from Theorem \ref{thm2}. For all $\epsilon>0$, we have
\begin{equation}\label{t1e3}
\begin{aligned}
&~~~\frac{1}{T}\sum_{t=1}^T\mathbb{E}[(e_t\beta_tv^\top M_*^{-1/2}z_t)^2\mathbb{I}(|e_t\beta_tv^\top M_*^{-1/2}z_t|>\sqrt{T}\epsilon)|\mathcal{F}_{t-1}]\\
&\leq \frac{1}{\epsilon^2T^2}\sum_{i=1}^t\mathbb{E}[(e_t\beta_tv^\top M_*^{-1/2}z_t)^4|\mathcal{F}_{t-1}]\\
&=\frac{1}{\epsilon^2T^2}\sum_{t=1}^T(\beta_tv^\top M_*^{-1/2}z_t)^4\mathbb{E}(e_t^4|\mathcal{F}_{t-1})\\
&\xrightarrow{a.s.}0,
\end{aligned}
\end{equation}
where the first inequality follows from $$(e_t\beta_tv^\top M_*^{-1/2}z_t)^2\mathbb{I}(|e_t\beta_tv^\top M_*^{-1/2}z_t|>\sqrt{T}\epsilon)\leq \frac{(e_t\beta_tv^\top M_*^{-1/2}z_t)^4}{\epsilon^2T},$$
and the convergence is from the fact that $(\beta_tv^\top M_*^{-1/2}z_t)^4\mathbb{E}(e_t^4|\mathcal{F}_{t-1})$ is bounded under Assumption \ref{assu2}.
By \eqref{t1e2}, \eqref{t1e3} and Lemma \ref{lemma9}, we have 
$$\frac{1}{\sqrt{T}}\sum_{t=1}^Te_t\beta_tv^\top M_*^{-1/2}z_t\xrightarrow{d}N(0,1).$$
Combining \eqref{t1e4}, we have
\begin{equation*}
\sqrt{T}v^\top M_*^{-1/2}S_t(\theta^*)\xrightarrow{d}N(0,1).
\end{equation*}
Let $\tilde{Z}$ be a normal vector with $\tilde{Z}\sim N(0, I_d)$. Then, $v^\top \tilde{Z}\sim N(0, 1)$ because of $\|v\|=1$. Therefore, for any $\tilde{v}\in\mathbb{R}^d$, we have
\begin{equation*}
\sqrt{T}\frac{\tilde{v}^\top}{\|\tilde{v}\|} M_*^{-1/2}S_t(\theta^*)\xrightarrow{d}\frac{\tilde{v}^\top}{\|\tilde{v}\|}\tilde{Z}.
\end{equation*}
Thus,
$$\sqrt{T}\tilde{v}^\top M_*^{-1/2}S_t(\theta^*)\xrightarrow{d}\tilde{v}^\top\tilde{Z}. $$
By Lemma \ref{billingsley}, we have
\begin{equation*}
\sqrt{T} M_*^{-1/2}S_T(\theta^*)\xrightarrow{d}N(0,I_d).   
\end{equation*}
\end{proof}
We now return to the proof of Theorem \ref{thm1}. By \eqref{t1e8} and Lemma \ref{t1e7}, we have
\begin{equation}
M_*^{-1/2}\left\{\frac{1}{T}\sum_{t=1}^T[\dot{\mu}(\beta_{t} \tilde{\theta}_t^\top z_t)-\dot{\mu}(\beta_{t} z_t^\top\theta_* )]\beta^2_{t} z_tz_t^\top+M(\xi_T,\theta_*)\right\}\sqrt{T}( \hat{\theta}_T-\theta_*)\xrightarrow{d}N(0,I_d).
\end{equation}
Under Assumption \ref{assu2}, we have
\begin{equation}\label{t1e5}
    \begin{aligned}
\frac{1}{T}\sum_{i=1}^T[\dot{\mu}(\beta_{t} \tilde{\theta}_t^\top z_t)-\dot{\mu}(\beta_{T} z_t^\top\theta_* )]\beta^2_{t} z_tz_t^\top &\leq \mathop{\max}_{1\leq t\leq T}|\dot{\mu}(\beta_{t} \tilde{\theta}_t^\top z_t)-\dot{\mu}(\beta_{t} \theta_*^\top z_t)|\frac{1}{T}\sum_{t=1}^T\beta_{t}^2\|z_t\|^2\\
&\leq \mathop{\max}_{1\leq t\leq T}|\dot{\mu}(\beta_{t} \tilde{\theta}_t^\top z_t)-\dot{\mu}(\beta_{t} \theta_*^\top z_t)|C_\beta^2C_z^2.
    \end{aligned}
\end{equation}
We denote $\ddot{\mu}(w)=\frac{d \dot{\mu}(w)}{d w}=\frac{d \mu(w)[1-\mu(w)]}{d w}=\mu(w)[1-\mu(w)][1-2\mu(w)]$ for $w\in\mathbb{R}$. Under Assumption \ref{assu2}, there exists a positive constant $C_\mu$ such that $\ddot{\mu}(\beta \theta^\top z )\leq C_\mu$ for any $\beta\in\mathcal{B}$, $z\in\mathcal{Z}$ and $\theta\in \Theta$. By the Taylor expansion, we have
\begin{equation}\label{t1e6}
\begin{aligned}
 \mathop{\max}_{1\leq t\leq T}|\dot{\mu}(\beta_{t} \tilde{\theta}_t^\top z_t)-\dot{\mu}(\beta_{t} \theta_*^\top z_t )|&=\mathop{\max}_{1\leq t\leq T}|\ddot{\mu}(\beta_{t} \bar{\theta}_t^\top z_t)\beta_tz_t^\top(\tilde{\theta}_t-\theta_*)| &\leq  C_\mu C_\beta C_z\mathop{\max}_{1\leq t\leq T}\|\tilde{\theta}_t- \theta_* \|^2.
 \end{aligned}
\end{equation}
Recall that $\tilde{\theta}_t$ is between $\hat{\theta}_t$ and $\theta_*$. We have 
\begin{equation}\label{t1e10}
\mathop{\max}_{1\leq t\leq T}\|\tilde{\theta}_t- \theta_* \|^2\leq  \mathop{\max}_{1\leq t\leq T}\|\hat{\theta}_t- \theta_* \|^2 \xrightarrow{a.s.} 0,
\end{equation}
where the convergence follows from Lemma \ref{thecon}. By \eqref{t1e5}, \eqref{t1e6} and \eqref{t1e10}, we have
\begin{align*}
  \frac{1}{T}\sum_{i=1}^T[\dot{\mu}(\beta_{t} \tilde{\theta}_t^\top z_t)-\dot{\mu}(\beta_{T} z_t^\top\theta_* )]\beta^2_{t} z_tz_t^\top\xrightarrow{a.s.}0.  
\end{align*}
Together with Theorem \ref{thm2}, we have
$$M_*^{-1/2}\left\{\frac{1}{T}\sum_{t=1}^T[\dot{\mu}(\beta_{t} \tilde{\theta}_t^\top z_t)-\dot{\mu}(\beta_{t} z_t^\top\theta_* )]\beta^2_{t} z_tz_t^\top+M(\xi_T,\theta_*)\right\}\xrightarrow{a.s.} M_*^{1/2}.$$
By \eqref{t1e8} and Lemma \ref{t1e7}, we have
$$M_*^{1/2}\sqrt{T}( \hat{\theta}_T-\theta^*)\xrightarrow{a.s.}N(0, I_d).$$ 
It follows
$$ \sqrt{T}(\hat{\theta}_T-\theta^*)\xrightarrow{a.s.}N(0, M_*^{-1}).$$
The proof is completed.

\subsection{Proof of Theorem \ref{thm2}}\label{appt2}
In this section, we first propose Lemma \ref{thecon} to show the strong consistency of the adaptive MLE $\hat{\theta}_T$. This lemma plays a pivotal role as a fundamental
component in the proof of Theorem \ref{thm2}. Under Assumption \ref{assu1}, the initial information matrix $M(\xi_{t_0}, \theta)$ is constructed as positive definite for any $\theta\in\Theta$ for a theoretical requirement.

\begin{lemma}\label{thecon}
Denote $\hat{\theta}_T$ as the estimator from Algorithm \ref{alg}. We have
$$\hat{\theta}_T\xrightarrow{a.s.} \theta_*.$$
\end{lemma}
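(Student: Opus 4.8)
The plan is to treat $\hat{\theta}_T$ as an M-estimator and combine the basic optimization inequality with a martingale strong law, in the spirit of strong-consistency results for adaptive generalized linear models. Writing $\ell_t(\theta)=y_t\log\mu(\beta_t\theta^\top z_t)+(1-y_t)\log[1-\mu(\beta_t\theta^\top z_t)]$, so that $L_T(\theta)=T^{-1}\sum_{t=1}^T\ell_t(\theta)$, I would first decompose the centered log-likelihood difference. Since $y_t$ is conditionally Bernoulli with mean $p_t^*:=\mu(\beta_t z_t^\top\theta_*)$ given $\mathcal{F}_{t-1}$, taking conditional expectations gives
\begin{equation*}
\mathbb{E}[\ell_t(\theta)-\ell_t(\theta_*)\mid\mathcal{F}_{t-1}]=-\mathrm{KL}\big(p_t^*\,\|\,\mu(\beta_t z_t^\top\theta)\big)\le 0,
\end{equation*}
while the remaining fluctuation is $e_t\,w_t(\theta)$ with $e_t=y_t-p_t^*$ and $w_t(\theta)=\log\frac{\mu(\beta_t z_t^\top\theta)[1-p_t^*]}{p_t^*[1-\mu(\beta_t z_t^\top\theta)]}$. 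Hence
\begin{equation*}
L_T(\theta)-L_T(\theta_*)=\underbrace{-\frac{1}{T}\sum_{t=1}^T\mathrm{KL}\big(p_t^*\,\|\,\mu(\beta_t z_t^\top\theta)\big)}_{A_T(\theta)\le 0}+\underbrace{\frac{1}{T}\sum_{t=1}^T e_t\,w_t(\theta)}_{B_T(\theta)}.
\end{equation*}

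Since $\hat{\theta}_T$ maximizes $L_T$, we have $L_T(\hat\theta_T)\ge L_T(\theta_*)$, giving $-A_T(\hat\theta_T)\le B_T(\hat\theta_T)\le\sup_{\theta\in\Theta}|B_T(\theta)|$. The second step is to control the noise, i.e.\ to show $\sup_{\theta\in\Theta}|B_T(\theta)|\xrightarrow{a.s.}0$. For fixed $\theta$, $\{e_t w_t(\theta)\}$ is a bounded martingale difference sequence with respect to $\mathcal{F}_t$ (as in Lemma \ref{t1e7}, $\mathbb{E}[e_t\mid\mathcal{F}_{t-1}]=0$), and $|e_t w_t(\theta)|$ is uniformly bounded by Assumption \ref{assu2} because $\beta_t$, $z_t$ and $\theta$ are bounded and $\mu$ is bounded away from $0$ and $1$ on the resulting compact range; a martingale strong law then yields $B_T(\theta)\to 0$ a.s. To upgrade to a uniform statement over the compact set $\Theta$, I would use that $\theta\mapsto w_t(\theta)$ is Lipschitz with a deterministic bounded constant, cover $\Theta$ by finitely many $\epsilon$-balls, apply the pointwise result at the centers, and bound the oscillation between centers, which is a standard covering argument.

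The third step converts the drift bound into parameter convergence. By Pinsker's inequality and the fact that $\mu$ is bi-Lipschitz on the bounded range of its argument (Assumption \ref{assu2}, with $\dot\mu$ bounded below by a positive constant on that range), one has $\mathrm{KL}(p_t^*\,\|\,\mu(\beta_t z_t^\top\theta))\ge c\,\beta_t^2(z_t^\top(\theta-\theta_*))^2$ for some $c>0$, so that
\begin{equation*}
-A_T(\hat\theta_T)\ge c\,(\hat\theta_T-\theta_*)^\top\Lambda_T(\hat\theta_T-\theta_*)\ge c\,\lambda_{\min}(\Lambda_T)\,\|\hat\theta_T-\theta_*\|_2^2,
\end{equation*}
where $\Lambda_T=T^{-1}\sum_{t=1}^T\beta_t^2 z_t z_t^\top$. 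Combining the three steps, $c\,\lambda_{\min}(\Lambda_T)\|\hat\theta_T-\theta_*\|_2^2\le\sup_\theta|B_T(\theta)|\to0$ a.s., so consistency follows provided $\liminf_{T}\lambda_{\min}(\Lambda_T)>0$ almost surely.

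The hard part will be this last requirement: lower-bounding the smallest eigenvalue of the \emph{adaptively and greedily} generated design matrix $\Lambda_T$, which cannot invoke Theorem \ref{thm2} (that theorem depends on the present lemma). Since $\lambda_{\max}(\Lambda_T)\le C_\beta^2 C_\phi^2$ is bounded by Assumption \ref{assu2}, it suffices to bound $\det\Lambda_T$ away from $0$, because then $\lambda_{\min}(\Lambda_T)\ge\det\Lambda_T/\lambda_{\max}(\Lambda_T)^{d-1}$. To this end I would exploit the positive-definite initialization $M(\xi_{t_0},\theta)\succ0$ (guaranteed under Assumption \ref{assu1}) together with the determinant-greedy selection rule: via the matrix determinant identity \eqref{tsel}, any direction in which the current information is small yields a large multiplicative gain, so the greedy rule cannot permanently starve an eigendirection, forcing $\det H_T(\hat\theta_{T})$ to grow on the order of $T^d$ and hence $\det\Lambda_T$ to stay bounded below. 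Making this self-correcting property rigorous, uniformly in the changing plug-in estimate $\hat\theta_{t-1}$, is the crux of the argument. An alternative finishing route is to feed the high-probability bound of Lemma \ref{lemma0} with $\delta=\delta_T=T^{-2}$ into Borel--Cantelli to obtain $\|\hat\theta_T-\theta_*\|_{\bar H_T(\hat\theta_T)}\to0$ a.s., after which the same eigenvalue lower bound converts this into $\|\hat\theta_T-\theta_*\|_2\to0$.
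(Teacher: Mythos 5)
Your overall architecture matches the paper's own proof: the paper likewise decomposes $L_T(\theta_*)-L_T(\theta)$ into a martingale noise term $\tfrac{1}{T}\sum_t e_t\beta_t z_t^\top(\theta_*-\theta)$ plus a nonnegative quadratic drift $\tfrac{1}{2T}\sum_t \dot{\mu}(\beta_t z_t^\top\tilde{\theta}_t)[\beta_t z_t^\top(\theta-\theta_*)]^2$ (your conditional-KL term is exactly this drift, written via conditional expectations, and Pinsker plus bi-Lipschitzness of $\mu$ recovers the same quadratic lower bound), controls the noise uniformly over the compact $\Theta$, and then must bound the drift below uniformly over $\{\|\theta-\theta_*\|\ge\delta\}$ --- which, as you correctly observe, amounts to a lower bound on $\lambda_{\min}$ of the adaptively generated design matrix. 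Your noise control (pointwise martingale SLLN plus a covering argument) is sound and corresponds to the paper's invocation of Lemma A.1 of Freise et al.\ (2021); the paper closes with Lemma \ref{pron} (Pronzato's sufficient condition) rather than your direct inequality chain, but that difference is cosmetic.

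The genuine gap is exactly where you flag it: you never establish $\liminf_T\lambda_{\min}(\Lambda_T)>0$ almost surely, and this is the decisive step. Your heuristic --- the greedy rule "cannot permanently starve an eigendirection," so $\det H_T$ grows like $T^d$ --- is not an argument: the selection rule maximizes the determinant increment evaluated at the plug-in $\hat{\theta}_{t-1}$, not at $\theta_*$, and before consistency is available nothing prevents, without substantial further work, the evolving estimates and the evolving design from conspiring so that the selected points are informative at $\hat{\theta}_{t-1}$ yet degenerate in some direction relevant at $\theta_*$. Untangling this feedback loop is the whole difficulty, and the determinant identity \eqref{tsel} alone does not do it; making your sketch rigorous would essentially amount to reproving the external theorem the paper relies on. Concretely, the paper imports Theorem 2.6 of Freise et al.\ (2021), which establishes precisely this non-degeneracy for adaptive determinant-greedy (adaptive Wynn) schemes: there exist $t_0>0$, $\epsilon>0$, $\alpha\in(0,1)$ such that for all $T\ge t_0$ and every unit direction $c_\theta$, the design $\xi_T$ places mass at least $1-\alpha$ on pairs with $|\beta c_\theta^\top z|>2\epsilon$, which yields $\inf_{\|\theta-\theta_*\|\ge\delta}\tfrac{1}{T}\sum_t[\beta_t z_t^\top(\theta-\theta_*)]^2\ge 4\epsilon^2\delta^2(1-\alpha)$, i.e.\ exactly your missing eigenvalue bound (see \eqref{t0e30}). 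Note also that your fallback route --- Lemma \ref{lemma0} with $\delta_T=T^{-2}$ and Borel--Cantelli --- does not escape the problem: converting the $\bar{H}_T(\hat{\theta}_T)$-norm into the Euclidean norm requires the same $\lambda_{\min}$ bound, so it hits the identical wall. Without this ingredient (or an independent proof of it), the proposal is incomplete at its crux.
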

\begin{proof}
According to \eqref{lik}, we calculate the log-likelihood difference between $\theta_*$ and $\theta\in \Theta$ as
\begin{equation}\label{let1}
\begin{aligned}
L_T(\theta_*)-L_T(\theta)&=\frac{1}{T}\sum_{t=1}^T\left\{y_t\log \frac{\mu(\beta_t z_t^\top\theta_*)}{\mu(\beta_t z_t^\top\theta)}+(1-y_t)\log \frac{1-\mu(\beta z_t^\top\theta_*)}{1-\mu(\beta_t z_t^\top\theta)}\right\}\\
&=\frac{1}{T}\sum_{t=1}^T\left\{y_t\left[\log \frac{\mu(\beta_t z_t^\top\theta_*)}{1-\mu(\beta z_t^\top\theta_*)}-\log \frac{\mu(\beta_t z_t^\top\theta)}{1-\mu(\beta_t z_t^\top\theta)}\right]+\log \frac{1-\mu(\beta z_t^\top\theta_*)}{1-\mu(\beta_t z_t^\top\theta)}\right\}\\
&=\frac{1}{T}\sum_{t=1}^T\{y_t\beta_tz_t^\top (\theta^*-\theta)+\log[1-\mu(\beta z_t^\top\theta^*)]-\log[1-\mu(\beta z_t^\top\theta)]\},
\end{aligned}
\end{equation}
where the last equality is from 
\begin{equation*}
 \log \frac{\mu(\beta_t z_t^\top\theta_*)}{1-\mu(\beta z_t^\top\theta_*)}-\log \frac{\mu(\beta_t z_t^\top\theta)}{1-\mu(\beta_t z_t^\top\theta)}=\log e^{\beta_t z_t^\top\theta_*}-   \log e^{\beta_t z_t^\top\theta}=\beta_t z_t^\top(\theta_*-\theta).
\end{equation*}
Taking the first-order derivative of $\log [1-\mu(w)]$ with respect to $w$, we obtain
$$ \frac{d\log [1-\mu(w)]}{d w}=-\frac{\dot{\mu}(w)}{1-\mu(w)}=-\frac{\mu(w)[1-\mu(w)]}{1-\mu(x)}=-\mu(w),$$
and the second-order derivative is 
$$\frac{d^2\log [1-\mu(w)]}{d w^2}=-\dot{\mu}(w).$$
Therefore, by the second-order Taylor expansion of $\log[1-\mu(\beta z^\top\theta)]$ at $\beta z^\top\theta_*$, we have
\begin{equation*}
 \log[1-\mu(\beta z^\top\theta)]=\log[1-\mu(\beta z^\top\theta_*)]-\mu(\beta z^\top\theta_*)\beta z^\top(\theta-\theta_*)-\frac{1}{2}\dot{\mu}(\beta z^\top\tilde{\theta})[\beta z^\top(\theta-\theta_*)]^2,   
\end{equation*}
where $\tilde{\theta}$ is between $\theta$ and $\theta_*$.
Therefore,
\begin{equation}\label{let2}
\log[1-\mu(\beta z^\top\theta_*)]-\log[1-\mu(\beta z^\top\theta)]=\mu(\beta z^\top\theta_*)\beta z^\top(\theta-\theta_*)+\frac{1}{2}\dot{\mu}(\beta z^\top\tilde{\theta})[\beta z^\top(\theta-\theta_*)]^2.
\end{equation}
Now, we define the error terms as 
\begin{equation}\label{error}
 e_t=y_t-\mu(\beta_tz_t\theta_*).   
\end{equation}
Combining \eqref{let1} and \eqref{let2}, we obtain
\begin{equation}\label{let3}
\begin{aligned}
L_T(\theta_*)-L_T(\theta)&=\frac{1}{T}\sum_{t=1}^T\{[\mu(\beta_tz_t^\top \theta_*)+e_t]\beta_tz_t^\top (\theta_*-\theta)+\log[1-\mu(\beta z_t^\top\theta_*)]-\log[1-\mu(\beta z_t^\top\theta)]\}\\
&=\frac{1}{T}\sum_{t=1}^Te_t\beta_tz_t^\top (\theta_*-\theta)+\frac{1}{2T}\sum_{t=1}^T\dot{\mu}(\beta_t z_t^\top\tilde{\theta}_t)[\beta_t z_t^\top(\theta-\theta_*)]^2\\
&\geq \frac{1}{T}\sum_{t=1}^Te_t\beta_tz_t^\top (\theta_*-\theta)+\frac{\kappa }{2T}\sum_{t=1}^T[\beta_t z_t^\top(\theta-\theta_*)]^2.
\end{aligned}    
\end{equation}
For any $\delta>0$, we define the parameter subset $C(\theta_*, \delta)=\{\theta\in\Theta: \|\theta-\theta_*\|\geq \delta\}$.
Then, for any $\delta>0$,  by \eqref{let3}, we have
\begin{equation}\label{t0e0}
L_T(\theta^*)-\mathop{\sup}_{\theta\in C(\theta_*, \delta)}L_T(\theta)\geq -\frac{1}{T}\mathop{\sup}_{\theta\in \Theta}\left|\sum_{t=1}^Te_t\beta_tz_t^\top (\theta^*-\theta)\right|+\frac{\kappa}{2T}\mathop{\inf}_{\theta\in C(\theta_*, \delta)}\sum_{t=1}^T[\beta_t z_t^\top(\theta-\theta_*)]^2.
\end{equation}
Let $n_{i,j}^{(k)}$ be the number of observations taken at $(z^{(i)}, \beta^{(k)}_j)$ under the generated design $\xi_T$, we have
\begin{equation}\label{t0e1}
\begin{aligned}
\frac{1}{T}\mathop{\inf}_{\theta\in C(\theta_*, \delta)}\sum_{t=1}^T[\beta_t z_t^\top(\theta-\theta_*)]^2&=\frac{1}{T}\mathop{\inf}_{\theta\in C(\theta_*, \delta)}\sum_{i=1}^{n}\sum_{j=1}^m\sum_{k=1}^gn_{i,j}^{(k)}[\beta_{j}^{(k)} (\theta-\theta_*)^\top z^{(i)}]^2\\
&=\mathop{\inf}_{\theta\in C(\theta_*, \delta)}\sum_{i=1}^{n}\sum_{j=1}^m\sum_{k=1}^g\xi_T(\beta_{j}^{(k)},z^{(i)})[\beta_{j}^{(k)} (\theta-\theta_*)^\top z^{(i)}]^2.
\end{aligned}
\end{equation}
For any $\theta\in\Theta$ and $\theta\neq \theta_*$, we define $c_\theta=(\theta-\theta_*)/\|\theta-\theta_*\|$.
By Theorem 2.6 in \cite{Freise2021}, there exist $t_0>0, \epsilon>0$ and $\alpha\in(0,1)$ such that for all $ T\geq t_0$ and $ \theta\neq \theta_*$, $$\sum_{\beta\in \mathcal{B}, z\in\mathcal{Z}}\xi_T(\beta, z)\mathbb{I}(| \sqrt{\dot{\mu}(\beta z^\top \theta)}\beta c_\theta^\top z|\leq \epsilon)\leq \alpha.$$  Noting that $\dot{\mu}(\beta z^\top \theta)\leq 1/4$, we have $| \sqrt{\dot{\mu}(\beta z^\top \theta)}\beta c_\theta^\top z|\leq | \beta c_\theta^\top z|/2$. Therefore, 
$| \beta c_\theta^\top z|\leq 2\epsilon$ implies $| \sqrt{\dot{\mu}(\beta z^\top \theta)}\beta c_\theta^\top z|\leq \epsilon$. Then, $\mathbb{I}(|\beta c_\theta^\top z|\leq 2\epsilon)\leq \mathbb{I}(| \sqrt{\dot{\mu}(\beta z^\top \theta)}\beta c_\theta^\top z|\leq \epsilon)$. Thus, there exist $t_0>0, \epsilon>0$ and $\alpha\in(0,1)$ such that for all $ T\geq t_0$ and $ \theta\neq \theta_*$, $$\sum_{\beta\in \mathcal{B}, z\in\mathcal{Z}}\xi_T(\beta, z)\mathbb{I}(\ |\beta c_\theta^\top z|\leq 2\epsilon)\leq \alpha.$$ Because $\sum_{\beta\in \mathcal{B}, z\in\mathcal{Z}}\xi_T(\beta,z)=1$, we have 
\begin{equation}\label{t0e5}
\sum_{\beta\in \mathcal{B}, z\in\mathcal{Z}}\xi_T(\beta, z)\mathbb{I}(|\beta c_\theta^\top z|> 2\epsilon)\geq 1-\alpha.    
\end{equation} 
Then,
\begin{equation}\label{t0e2}
\begin{aligned}
~~~~&\mathop{\inf}_{\theta\in C(\theta_*, \delta)}\sum_{i=1}^{n}\sum_{j=1}^m\sum_{k=1}^g\xi_T(\beta_{j}^{(k)},z^{(i)})[\beta_{jk} (\theta-\theta_*)^\top z^{(i)}]^2\\
&\geq \mathop{\inf}_{\theta\in C(\theta_*, \delta)}\sum_{i=1}^{n}\sum_{j=1}^m\sum_{k=1}^g\xi_T(\beta_{j}^{(k)},z^{(i)})[\beta_{jk} (\theta-\theta_*)^\top z^{(i)}]^2\mathbb{I}(|\beta_{jk} (\theta-\theta^*)^\top z^{(i)}|> 2\epsilon \|\theta-\theta^*\|)\\
&\geq 4\mathop{\inf}_{\theta\in C(\theta_*, \delta)}\sum_{i=1}^{n}\sum_{j=1}^m\sum_{k=1}^g\xi_T(\beta_{j}^{(k)},z^{(i)})\epsilon^2 \|\theta-\theta^*\|^2\mathbb{I}(|\beta_{jk} (\theta-\theta^*)^\top z^{(i)}|> 2\epsilon \|\theta-\theta^*\|)\\
&\geq 4\epsilon^2\delta^2(1-\alpha),
\end{aligned}
\end{equation}
where the last equality is from \eqref{t0e5} and the fact that $|\beta c_\theta^\top z|> 2\epsilon$ is equivalent to $|\beta (\theta-\theta_*)^\top z|> 2\epsilon \|\theta-\theta_*\|$. 
By \eqref{t0e1} and \eqref{t0e2} we have
\begin{equation}\label{t0e30}
\mathop{\inf}_{\theta\in C(\theta_*, \delta)}\frac{1}{T}\sum_{t=1}^T[\beta_t z_t^\top(\theta-\theta_*)]^2\geq    4\epsilon^2\delta^2(1-\alpha).
\end{equation}
By Lemma A.1 in \cite{Freise2021}, we have
\begin{equation}\label{t0e4}
\mathop{\sup}_{\theta\in \Theta}\left|\sum_{t=1}^Te_t\beta_tz_t^\top (\theta_*-\theta)\right|\xrightarrow{a.s.}0. 
\end{equation}
By \eqref{t0e0}, \eqref{t0e30} and \eqref{t0e4}, we have
$$L_T(\theta_*)-\mathop{\sup}_{\theta\in C(\theta_*, \delta)}L_T(\theta)\geq 2\kappa \epsilon^2\delta^2(1-\alpha)\ a.s.$$
 It follows
$$\mathop{\lim\inf}_{T\rightarrow\infty}[L_T(\theta_*)-\mathop{\sup}_{\theta\in C(\theta_*, \delta)}L_T(\theta)]>0\
 a.s.$$
Combining Lemma \ref{pron}, we have $\hat{\theta}_T\xrightarrow{a.s.}\theta_*.$
\end{proof}
Now we proceed with proof of Theorem \ref{thm2}. By the definition of $M(\xi_T,\theta)$ as shown in \eqref{infor}, we calculate the difference of the Fisher matrices between $\hat{\theta}_T$ and $\theta_*$ at the design $\xi_T$ as follows,
\begin{equation}\label{bdM}
 \begin{aligned}
\|M(\xi_T,\hat{\theta}_T)-M(\xi_T,\theta_*)\|&=\left\|\frac{1}{T}\sum_{t=1}^T\dot{\mu}(\beta_t\hat{\theta}_T^\top z_t)\beta_i^2z_tz_t^\top-\frac{1}{T}\sum_{t=1}^T\dot{\mu}(\beta_t \theta_*^\top z_t)\beta_t^2z_tz_t^\top\right\|\\
&\leq \frac{1}{T}\sum_{t=1}^T\|\dot{\mu}(\beta_t\hat{\theta}_T^\top z_t)\beta_t^2z_tz_t^\top-\dot{\mu}(\beta_t \theta_*^\top z_t)\beta_t^2z_tz_t^\top\|\\
&\leq \frac{1}{T}\sum_{t=1}^T\|[\dot{\mu}(\beta_t\hat{\theta}_T^\top z_t)-\dot{\mu}(\beta_t\theta_*^\top z_t )]\beta_t^2z_tz_t^\top\|\\
&\leq \frac{C_\beta^2 C_z^2}{T}\sum_{t=1}^T\|\dot{\mu}(\beta_t\hat{\theta}_T^\top z_t)-\dot{\mu}(\beta_t \theta_*^\top z_t)\|\\
&\leq C_\beta^2 C_z^2\mathop{\max}_{(z, \beta)\in\mathcal{Z}\times \mathcal{B}} \|\dot{\mu}(\beta\hat{\theta}_T^\top z)-\dot{\mu}(\beta \theta_*^\top z)\|
\end{aligned}   
\end{equation}
By Lemma \ref{thecon}, we know $\hat{\theta}_T\xrightarrow{a.s.} \theta_*$. Since the real-valued function $(z,\beta, \theta)\mapsto \dot{\mu}(\beta\theta^\top z)$  is uniformly continuous on its compact domain $\mathcal{Z}\times \mathcal{B} \times \Theta$, we have 
$$\|\dot{\mu}(\beta\hat{\theta}_T^\top z)-\dot{\mu}(\beta \theta_*^\top z)\|\xrightarrow{a.s.} 0.$$ 
Combining \eqref{bdM}, we have
\begin{equation}\label{conM}
    \|M(\xi_T,\hat{\theta}_T)-M(\xi_T,\theta)\|\xrightarrow{a.s.} 0.
\end{equation}
Under Assumption \ref{assu1} and the design in the initialization of Algorithm \ref{alg}, we have $\lambda_0:=\lambda_{min}(M(\xi_T,\hat{\theta}_T))>0$. On the other hand, by Assumption \ref{assu2}, the trace of $M(\xi_T,\hat{\theta}_T)$ is 
$$\text{tr}(M(\xi_T,\hat{\theta}_T))=\frac{1}{T}\sum_{t=1}^T\dot{\mu}(\beta_t\hat{\theta}_T^\top z_t)\beta_t^2z_t^\top z_t\leq \frac{C_\beta^2C_z^2}{4}.$$
Let $\mathcal{M}$ be the set of all non-negative definite $d\times d$ matrices $M$ such that $\lambda_{min}(M)\geq \lambda_0$ and $\text{tr}(M)\leq C_\beta^2C_z^2/4$. Obviously, $\mathcal{M}$ is compact.  We define a real-valued function $G$ on $\mathcal{Z}\times \mathcal{B}\times \Theta \times \mathcal{M}$ by
$$G(z,\beta,\theta, A)=\dot{\mu}(\beta  z^\top \theta)\beta^2z^\top M^{-1}z,$$ 
which is uniformly continuous on its compact domain $\mathcal{Z}\times \mathcal{B}\times \Theta \times \mathcal{M}$. Since $M(\xi_T,\hat{\theta}_T)\in\mathcal{M}$ and $M(\xi_T, \theta_*)\in\mathcal{M}$, by \eqref{conM}, we have
$$\mathop{\max}_{(z,\beta)\in\mathcal{Z}\times \mathcal{B}}|G(z,\beta,\hat{\theta}_T, M(\xi_T,\hat{\theta}_T))-G(z,\beta,\theta_*, M(\xi_t,\theta_*))|\xrightarrow{a.s.}0.$$
Therefore, for a given $\epsilon\in(0,1)$, there exists $t_1$ such that for all $(z,\beta)\in\mathcal{Z}\times \mathcal{B}$
\begin{equation}\label{mub}
 |\dot{\mu}(\beta \hat{\theta}_T^\top z)\beta^2z^\top M^{-1}(\xi_T,\hat{\theta}_T)z-\dot{\mu}(\beta\theta_*^\top z)\beta^2z^\top M^{-1}(\xi_{T},\theta_*)z|<\frac{\epsilon}{2}\ \text{for all}\ T\geq t_1.   
\end{equation}
 Since $H_{t-1}(\hat{\theta}_{t-1})=(t-1)M(\xi_{t-1},\hat{\theta}_{t-1})$, by the generation process of Algorithm \ref{alg} and \eqref{tsel}, equivalently, we have
 $z_t, \beta_t=\mathop{\arg \max}_{z\in \mathcal{Z}}\mathop{\max}_{\beta\in \mathcal{B}_{k}}\det [H_{t-1}(\hat{\theta}_{t-1})+\Dot{\mu}(\beta \hat{\theta}_{t-1}^\top z)\beta^2zz^\top]$ with $k$ being the type of $z$.
\begin{equation}\label{zmax}
z_{t+1},\beta_{t+1}=\mathop{\arg \max}_{z\in \mathcal{Z}}\mathop{\max}_{\beta\in \mathcal{B}_{k}}\dot{\mu}(\beta \hat{\theta}_t^\top z)\beta^2z^\top M^{-1}(\xi_t,\hat{\theta}_t)z \text{ with}\ k\ \text{being the type of}\ z.
\end{equation}
We define
\begin{equation}\label{zmax1}
z^*_{t+1},\beta^*_{t+1}=\mathop{\arg \max}_{z\in \mathcal{Z}}\mathop{\max}_{\beta\in \mathcal{B}_{k}}\dot{\mu}(\beta z^\top \theta_*)\beta^2z^\top M^{-1}(\xi_t,\theta_*)z \text{ with}\ k\ \text{being the type of}\ z.     
\end{equation} 
Then, for all $t\geq t_1$, we have
\begin{equation}\label{e11}
 \begin{aligned}
\dot{\mu}(\beta_{t+1}  z_{t+1}^\top \theta_*)\beta_{t+1}^2z_{t+1}^\top M^{-1}(\xi_{t},\theta_*)z_{t+1}&\geq \dot{\mu}(\beta_{t+1}  z_{t+1}^\top \hat{\theta}_t)\beta_{t+1}^2z_{t+1}^\top M^{-1}(\xi_{t},\hat{\theta}_t)z_{t+1}-\frac{\epsilon}{2}\\
&\geq\dot{\mu}(\beta^*_{t+1}  \hat{\theta}_t^\top z^*_{t+1})\beta_{t+1}^{*2}z^{*\top}_{t+1} M^{-1}(\xi_{t},\hat{\theta}_t)z^*_{t+1}-\frac{\epsilon}{2}\\
&\geq \dot{\mu}(\beta^*_{t+1}  \theta_*^{\top} z^*_{t+1})\beta_{t+1}^{*2}z^{*\top}_{t+1} M^{-1}(\xi_{t},\theta_*)z^*_{t+1}-\epsilon\\
&\geq d-\epsilon,
\end{aligned}   
\end{equation}
where the first and third inequalities are from \eqref{mub}, the second equality is due to \eqref{zmax}, and the last inequality is from \eqref{zmax1} and the Kiefer–Wolfowitz equivalence theorem \citep{Kiefer_Wolfowitz_1960,Lynda,Freise2021}.
By the definition of $M(\xi_{t},\theta_*)$, we have
$$(t+1)M(\xi_{t+1},\theta_*)=tM(\xi_{t},\theta_*)+\dot{\mu}(\beta_{t+1}  \theta_*^\top z_{t+1})\beta_{t+1}^2z_{t+1}z_{t+1}^\top.$$
Then, by Lemma \ref{harville} with $R=tM(\xi_{t},\theta_*)$, $\tilde{T}=1, S=\dot{\mu}(\beta_{t+1}  \theta_*^\top z_{t+1})\beta_{t+1}^2z_{t+1}, U=z_{t+1}^\top$, we obtain
\begin{align*}
 \det[(t+1)M(\xi_{t+1},\theta_*)]
 &=\det[tM(\xi_{t},\theta_*)]\left[1+\frac{\dot{\mu}(\beta_{t+1}  z_{t+1}^\top\theta_*) \beta_{t+1}^2z_{t+1}^\top M^{-1}(\xi_{t},\theta_*)z_{t+1}}{t}\right]. 
\end{align*}
Therefore,
\begin{align*}
 \det M(\xi_{t+1},\theta_*)=\left(\frac{t}{t+1}\right)^d\det M(\xi_{t},\theta_*)\left[1+\frac{\dot{\mu}(\beta_{t+1}  z_{t+1}^\top\theta_*)\beta_{t+1}^2z_{t+1}^\top M^{-1}(\xi_{t},\theta_*)z_{t+1}}{t}\right]. 
\end{align*}
Then,
\begin{equation}\label{e2}
\begin{aligned}
&~~~\log\det(M(\xi_{t+1},\theta_*))-\log\det(M(\xi_{t},\theta_*))\\
&=\log \frac{\det(M(\xi_{t+1},\theta_*))}{\det(M(\xi_{t},\theta_*))}\\
&=\log\frac{\left(\frac{t}{t+1}\right)^d\det M(\xi_{t},\theta_*)\left[1+\frac{\dot{\mu}(\beta_{t+1}  z_{t+1}^\top \theta_*)\beta_{t+1}^2z_{t+1}^\top M^{-1}(\xi_{t},\theta_*) z_{t+1}}{t}\right]}{\det(M(\xi_{t},\theta_*))}\\
&=\log \left[1+\frac{\dot{\mu}(\beta_{t+1}  z_{t+1}^\top \theta_*)\beta_{t+1}^2z_{t+1}^\top M^{-1}(\xi_{t},\theta_*) z_{t+1}}{t}\right]-d\log \left(1+ \frac{1}{t}\right).
\end{aligned}
\end{equation}
By \eqref{e11} and \eqref{e2}, for all $t\geq t_1$, we have
\begin{equation}\label{diff1}
 \begin{aligned}
  \log\det(M(\xi_{t+1},\theta^*))-\log\det(M(\xi_{t},\theta^*))&\geq \log \left(1+\frac{d-\epsilon}{t}\right)-d\log \left(1+ \frac{1}{t}\right) \\
  &=\log\frac{1+(d-\epsilon)/t}{(1+1/t)^d}.
\end{aligned}   
\end{equation}
On the other hand, we have
\begin{equation}\label{diff2}
 \log\frac{1+(d-\epsilon)/t}{(1+1/t)^d}=\log\frac{1+(d-\epsilon)/t}{1+(d+c_t)/t},
\end{equation}
where we have used that $(1+1/t)^d=1+(d+c_t)/t$ with $c_t\geq 0, c_t\rightarrow 0$ as $t\rightarrow \infty$. We choose $t_2\geq t_1$ such that $c_t\leq (d-\epsilon)\epsilon$ for all $t\geq t_2$. Then for all $t\geq t_2$, we have
\begin{equation}\label{diff3}
\begin{aligned}
\log\frac{1+(d-\epsilon)/t}{1+(d+c_t)/t}   &\geq  \log\frac{1+(d-\epsilon)/t}{1+[d+(d-\epsilon)\epsilon]/t} \\
&=-\log \left\{1+\frac{1+[d+(d-\epsilon)\epsilon]/t-1-(d-\epsilon)/t}{1+(d-\epsilon)/t}\right\}\\
&=-\log \left[1+\frac{\epsilon(1+d-\epsilon)/t}{1+(d-\epsilon)/t}\right]\\
&\geq -\frac{1}{1+(d-\epsilon)/t}\frac{\epsilon(1+d-\epsilon)}{t}\\
&=-\frac{\epsilon(1+d-\epsilon)}{t+d-\epsilon}\\
&\geq -\epsilon,
\end{aligned}    
\end{equation}
where the second inequality is due to the fact $\log(1+x)\leq x$ for $x\geq 0$.
By \eqref{diff1}, \eqref{diff2} and \eqref{diff3},  for all $t\geq t_2$, we conclude
\begin{equation}\label{222}
\log \det M(\xi_{t+1},\theta_*)-\log \det M(\xi_{t},\theta_*)\geq -\epsilon.
\end{equation}
Now we choose $t_3\geq t_2$ such that for all $t\geq t_3$,
\begin{equation}\label{ee0}
\begin{aligned}
 \log \left(1+\frac{d+\epsilon}{t}\right)-d\log \left(1+ \frac{1}{t}\right)&=\log \left(1+\frac{d+\epsilon}{t}\right)-\log \left(1+ \frac{d+c_t}{t}\right)\\
 &\geq \log \left(1+\frac{d+\epsilon}{t}\right)-\log \left[1+ \frac{d+\epsilon(1-\frac{t+d+\epsilon}{2t})}{t}\right]\\
 &=\log \frac{t+d+\epsilon}{t+d+\epsilon(1-\frac{t+d+\epsilon}{2t})}\\
 &=\log \frac{1}{1-\frac{\epsilon}{2t}}\\
 &\geq \frac{\epsilon}{2t},   
\end{aligned}
\end{equation}
where the first equality follows from $(1+1/t)^d=1+(d+c_t)/t$ with $c_t\geq 0, c_t\rightarrow 0$ as $n\rightarrow\infty$, and the first inequality is achieved by choosing $t_3\geq t_2$ such that $c_t\leq \epsilon(1-\frac{t+p+\epsilon}{2t})$ for all $t\geq t_3$, and the last inequality is due to the fact $\log(1-x)\leq -x$ for $x<1$. Now, we propose the following lemma. 
\begin{lemma}\label{state}
Let $t\geq t_3$ and $\epsilon\in(0,1)$. If $\log \det M(\xi_t,\theta_*) \leq  \log \det M(\xi_*,\theta_*)-2\epsilon$, then, $\log \det M(\xi_{t+1},\theta_*)-\log \det M(\xi_t,\theta_*)\geq \frac{\epsilon}{2t}.$ 
\end{lemma}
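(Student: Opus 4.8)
The plan is to reduce the claimed one-step increment bound to a lower bound on a single scalar quantity, and then establish that lower bound via the standard concavity characterization of $D$-optimality. Write $g_{t+1}=\dot\mu(\beta_{t+1}z_{t+1}^\top\theta_*)\beta_{t+1}^2z_{t+1}^\top M^{-1}(\xi_t,\theta_*)z_{t+1}$ for the sensitivity of the point actually selected at step $t+1$. By the increment identity \eqref{e2}, the quantity to be bounded below equals $\log(1+g_{t+1}/t)-d\log(1+1/t)$. Since $x\mapsto\log(1+x/t)$ is increasing and \eqref{ee0} already shows $\log(1+(d+\epsilon)/t)-d\log(1+1/t)\geq\frac{\epsilon}{2t}$ for all $t\geq t_3$, it suffices to prove that the hypothesis $\log\det M(\xi_t,\theta_*)\leq\log\det M(\xi_*,\theta_*)-2\epsilon$ forces $g_{t+1}\geq d+\epsilon$.

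The first task is to relate $g_{t+1}$ to the maximal sensitivity function at the true parameter. Let $\Psi(\xi_t)=\max_{z\in\mathcal{Z},\,\beta\in\mathcal{B}_k}\dot\mu(\beta z^\top\theta_*)\beta^2z^\top M^{-1}(\xi_t,\theta_*)z$ denote this maximum (attained at $(z^*_{t+1},\beta^*_{t+1})$ from \eqref{zmax1}). The first three inequalities in the chain \eqref{e11} — which use the uniform approximation \eqref{mub} valid for $t\geq t_1$ together with the selection rule \eqref{zmax} — already establish $g_{t+1}\geq\Psi(\xi_t)-\epsilon$ without invoking the Kiefer–Wolfowitz equivalence theorem. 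Hence it remains only to show $\Psi(\xi_t)\geq d+2\epsilon$, after which $g_{t+1}\geq (d+2\epsilon)-\epsilon=d+\epsilon$ and the reduction above closes the argument.

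The core step is the inequality $\Psi(\xi_t)\geq d+\bigl[\log\det M(\xi_*,\theta_*)-\log\det M(\xi_t,\theta_*)\bigr]$. I would obtain it from concavity of $\log\det$ on positive definite matrices (recall $M(\xi_t,\theta_*)$ is invertible for $t\geq t_0$ under Assumption \ref{assu1} and the initialization). Using the gradient $M^{-1}$, concavity gives $\log\det M(\xi_*,\theta_*)\leq\log\det M(\xi_t,\theta_*)+\tr\bigl[M^{-1}(\xi_t,\theta_*)(M(\xi_*,\theta_*)-M(\xi_t,\theta_*))\bigr]$. Since $\tr[M^{-1}(\xi_t,\theta_*)M(\xi_t,\theta_*)]=d$ and, writing $M(\xi_*,\theta_*)=\sum_{(z,\beta)}\xi_*(z,\beta)\dot\mu(\beta z^\top\theta_*)\beta^2 zz^\top$, one has $\tr[M^{-1}(\xi_t,\theta_*)M(\xi_*,\theta_*)]=\sum_{(z,\beta)}\xi_*(z,\beta)\dot\mu(\beta z^\top\theta_*)\beta^2z^\top M^{-1}(\xi_t,\theta_*)z\leq\Psi(\xi_t)$ because $\xi_*$ is a probability measure whose support respects the type constraints. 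Rearranging yields the displayed inequality, and the hypothesis then gives $\Psi(\xi_t)\geq d+2\epsilon$, as required.

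I expect the main obstacle to be the third step: recognizing and justifying the concavity/trace bound that converts the $\log\det$ optimality gap into a lower bound on the maximal sensitivity. The two delicate points are (i) verifying that the weighted average $\tr[M^{-1}(\xi_t,\theta_*)M(\xi_*,\theta_*)]$ is dominated by the pointwise maximum $\Psi(\xi_t)$ over the \emph{correct} design space (with each $\beta$ matched to the type of its $z$), and (ii) carefully propagating the discrepancy between $\hat\theta_t$ and $\theta_*$ through the selection rule so that only an $O(\epsilon)$ loss is incurred, which is exactly what \eqref{mub} supplies for $t\geq t_1$ and hence for $t\geq t_3\geq t_2\geq t_1$. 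The remaining manipulations are the elementary monotonicity and logarithmic estimates already recorded in \eqref{e2} and \eqref{ee0}.
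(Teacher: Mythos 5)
Your proposal is correct and takes essentially the same route as the paper's own proof: reduce the claim via the increment identity \eqref{e2} and the elementary estimate \eqref{ee0} to showing $\dot{\mu}(\beta_{t+1}z_{t+1}^\top\theta_*)\beta_{t+1}^2 z_{t+1}^\top M^{-1}(\xi_t,\theta_*)z_{t+1}\geq d+\epsilon$, and obtain this from the first-order concavity condition for $\log\det$ (with gradient $M^{-1}$), the trace decomposition over the design measure, and the approximation chain \eqref{e11}/\eqref{mub}. If anything, your bookkeeping is slightly cleaner than the paper's displayed chain, which momentarily drops the term $-d=-\tr\bigl[M^{-1}(\xi_t,\theta_*)M(\xi_t,\theta_*)\bigr]$ before reintroducing it, whereas you carry it explicitly and also note correctly that the Kiefer--Wolfowitz step in \eqref{e11} is not needed for this lemma.
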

\begin{proof}
Since the log-determinant function $\log \det (\cdot)$ is concave on the space of symmetric positive definite matrices \citep{Boyd2004}, by the first-order condition for the concave function, we have
\begin{align*}
 \log \det M(\xi_*,\theta_*)&\leq \log \det M(\xi_t,\theta_*)+ \langle M^{-1}(\xi_t,\theta_*), M(\xi_*,\theta_*)-M(\xi_t,\theta_*)\rangle\\
 &=\log \det M(\xi_t,\theta_*)+ \langle M^{-1}(\xi_t,\theta_*), \sum_{(z, \beta)\in \mathcal{Z}\times\mathcal{B}}[\xi_*(z, \beta)-\xi_t(z, \beta)]\dot{\mu}(\beta z^\top \theta_*)\beta^2z z^\top \rangle\\
 &=\log \det M(\xi_t,\theta_*)+  \sum_{(z, \beta)\in \mathcal{Z}\times\mathcal{B}}[\xi_*(z, \beta)-\xi_t(z, \beta)]\dot{\mu}(\beta z^\top \theta_*)\beta^2z^\top M^{-1}(\xi_t,\theta_*) z \\
 &\leq \log \det M(\xi_t,\theta_*)+\mathop{\max}_{(z, \beta)\in\mathcal{Z}\times\mathcal{B}}\dot{\mu}(\beta z^\top \theta_*)\beta^2z^\top M^{-1}(\xi_t,\theta_*)z,   
\end{align*}
where the first equality is from the fact that $\frac{\partial \log \det M}{\partial M}=(M^{-1})^\top$ for a invertible matrix $M$ \citep{harville2008matrix}, and the last inequality is because $\sum_{(z, \beta)\in \mathcal{Z}\times\mathcal{B}}\xi(z, \beta)=1$ with $\xi(z, \beta)\geq 0$. Therefore,
\begin{align*}
\log \det M(\xi_*,\theta_*)-\log \det M(\xi_t,\theta_*)
&\leq \mathop{\max}_{(z, \beta)\in\mathcal{Z}\times\mathcal{B}}\dot{\mu}(\beta z^\top \theta_*)\beta^2z^\top M^{-1}(\xi_t,\theta_*)z   \\
&\leq \dot{\mu}(\beta_{t+1} z_{t+1}^\top \theta_*)\beta_{t+1}^2z_{t+1}^\top M^{-1}(\xi_t,\theta_*)z_{t+1}-d+\epsilon,
\end{align*}
where the last inequality is from \eqref{e11}. Combining the condition $\log \det M(\xi_t,\theta_*) \leq  \log \det M(\xi_*,\theta_*)-2\epsilon$ in Lemma \ref{state}, we obtain $$\dot{\mu}(\beta_{t+1} z_{t+1}^\top \theta^*)\beta_{t+1}^2z_{t+1}^\top M^{-1}(\xi_t,\theta^*)z_{t+1}\geq d+\epsilon.$$ Together with \eqref{e2}, we have
\begin{equation*}
\log \det M(\xi_{t+1},\theta^*)-\log \det M(\xi_t,\theta^*)\geq\log \left(1+\frac{d+\epsilon}{t}\right)-d\log\left(1+\frac{1}{t}\right)\geq \frac{\epsilon}{2t},
\end{equation*}
where the last inequality follows from \eqref{ee0}.
\end{proof}
There is some $t_4\geq t_3$ such that for all $t\geq t_4$
\begin{equation}\label{t0e3}
 \log\det M(\xi_t,\theta_*)>\log\det M(\xi_*,\theta_*)-2\epsilon   
\end{equation}
since otherwise $\log\det M(\xi_t,\theta^*)\rightarrow\infty$ from Lemma \ref{state}, which contradicts with the fact that $\log\det M(\xi_t,\theta^*)$ is a bounded value, which follows from
\begin{align*}
\det M(\xi_t,\theta_*)&\leq \left[\frac{\text{tr}(M(\xi_t,\theta_*))}{d}\right]^{1/d}\\
&\leq \left[\frac{\sum_{(z, \beta)\in\mathcal{Z}\times\mathcal{B}}\xi_t(z, \beta) \dot{\mu}(\beta z^\top  \theta_*)\beta^2 \|z\|^2}{d}\right]^{1/d}\\
&\leq \left(\frac{C_\beta^2C_z^2}{4d}\right)^{1/d},
\end{align*}
where the last inequality is from Assumption \ref{assu2} and the facts $0\leq \dot{\mu}(\cdot)\leq 1/4$ and $\sum_{(z, \beta)\in\mathcal{Z}\times\mathcal{B}}=1$.
Combining \eqref{222} and \eqref{t0e3}, we have
\begin{equation}\label{t31}
 \log\det M(\xi_{t_4+1},\theta_*)\geq \log\det M(\xi_{t_4},\theta_*)-\epsilon> \log\det M(\xi_*,\theta^*)-3\epsilon.   
\end{equation}
If $\log\det M(\xi_{t_4+1},\theta_*)\leq \log\det M(\xi_*,\theta_*)-2\epsilon$, by Lemma \ref{state} and \eqref{t31}, we have
$$\log\det M(\xi_{t_4+2},\theta^*)\geq \log\det M(\xi_{t_4+1},\theta^*)>\log\det M(\xi_*,\theta^*)-3\epsilon.$$
If $\log\det M(\xi_{t_4+1},\theta_*)> \log\det M(\xi_*,\theta_*)-2\epsilon$, by \eqref{222} and \eqref{t31}, we have
$$\log\det M(\xi_{t_4+2},\theta_*)\geq \log\det M(\xi_{t_4+1},\theta_*)-\epsilon> \log\det M(\xi_*,\theta_*)-3\epsilon.$$
Continuously, we can find for all $t\geq t_4$,
$$\log\det M(\xi_{t},\theta_*)> \log\det M(\xi_*,\theta_*)-3\epsilon.$$
Therefore,
$$\mathop{\lim\inf}_{t\rightarrow \infty}\log\det M(\xi_t,\theta_*)\geq \log\det M(\xi_*,\theta_*)-3\epsilon.$$ Since $\epsilon\in(0,1)$ is arbitrary, we have
$$\mathop{\lim\inf}_{t\rightarrow \infty}\log\det M(\xi_t,\theta_*)\geq \log\det M(\xi_*,\theta_*).$$ Since $\xi_*=\mathop{\arg\max}_{\xi\in\mathcal{D}(\mathcal{Z},\mathcal{B})}M(\xi,\theta_*)$, we have
$$\mathop{\lim}_{t\rightarrow \infty}\log\det M(\xi_t,\theta_*)=\log\det M(\xi_*,\theta_*).$$ Since the strict concavity of the criterion $\log\det(\cdot)$, the information matrix at $\theta_*$ of a locally $D$-optimal design at $\theta_*$ is unique. Therefore, $\mathop{\lim}_{t\rightarrow \infty} M(\xi_t,\theta_*)= M(\xi_*,\theta_*)$. By \eqref{conM}, we have 
$$\mathop{\lim}_{T\rightarrow \infty} M(\xi_T,\hat{\theta}_T)\xrightarrow{a.s.}M(\xi_*,\theta_*).$$
\subsection{Proof of Corollary \ref{cor2}}
The proof of Corollary \ref{cor2} follows similar steps to those of Theorem \ref{thm1}, and is therefore omitted.

\subsection{Proof of Theorem \ref{thm3}}
By the definition of the sub-optimality \eqref{subopt}, we have
\begin{equation}\label{sub1}
 \textsf{SubOpt}(\pi_T)=J(\pi^*)-J(\pi_T)
=[J(\pi^*)-\hat{J}(\pi^*)]+[\hat{J}(\pi^*)-\hat{J}(\pi_T)]+[\hat{J}(\pi_T)-J(\pi_T)].   
\end{equation}
Since $\pi_T$ is the optimal policy under $\hat{J}(\pi)$, we have 
\begin{equation}\label{sub2}
\hat{J}(\pi^*)-\hat{J}(\pi_T)\leq 0.    
\end{equation}
 By the definition of the pessimistic expected value function \eqref{pessj}, we obtain
\begin{align*}
\hat{J}(\pi_T)-J(\pi_T)=\mathop{\min}_{\theta\in \mathcal{C}(\hat{\theta}_T,\delta)} \mathbb{E}\theta^\top \phi(x,\pi_T(x))-\mathbb{E}\theta_*^\top \phi(x,\pi_T(x)).
\end{align*}
By Lemma \ref{lemma0}, we know that $\theta_*\in \mathcal{C}(\hat{\theta}_T,\delta)$ with probability at least $1-\delta$. Therefore, with probability at least $1-\delta$, we have
\begin{equation}\label{sub3}
\hat{J}(\pi_T)-J(\pi_T)\leq 0.    
\end{equation}
Combining \eqref{sub1}, \eqref{sub2} and \eqref{sub3},  with probability at least $1-\delta$, we have
\begin{align*}
 \textsf{SubOpt}(\pi_T)&\leq J(\pi^*)-\hat{J}(\pi^*)\\
 &=\mathbb{E}\theta_*^\top \phi(x,\pi^*(x))-\mathop{\min}_{\theta\in \mathcal{C}(\hat{\theta}_T,\delta)} \mathbb{E}\theta^\top \phi(x,\pi^*(x))\\
 &=\mathop{\max}_{\theta\in \mathcal{C}(\hat{\theta}_T,\delta)} \mathbb{E}(\theta_*-\theta)^\top \phi(x,\pi^*(x))\\
 &=\mathop{\max}_{\theta\in \mathcal{C}(\hat{\theta}_T,\delta)} \mathbb{E}(\theta_*-\hat{\theta}_T+\hat{\theta}_T-\theta)^\top \phi(x,\pi^*(x))\\
 &=\mathbb{E}(\theta_*-\hat{\theta}_T)^\top \phi(x,\pi^*(x))+\mathop{\max}_{\theta\in \mathcal{C}(\hat{\theta}_T,\delta)} \mathbb{E}(\hat{\theta}_T-\theta)^\top \phi(x,\pi^*(x)).
\end{align*}
By the definition of $\mathcal{C}(\hat{\theta}_T,\delta)$ in \eqref{ci}, we obtain 
\begin{equation*}
\begin{aligned}
\mathop{\max}_{\theta\in \mathcal{C}(\hat{\theta}_T,\delta)} \mathbb{E}(\hat{\theta}_T-\theta)^\top \phi(x,\pi^*(x))&\leq \mathop{\max}_{\theta\in \mathcal{C}\hat{\theta}_T,\delta)} \mathbb{E}\|\hat{\theta}_T-\theta\|_{\bar{H}_T(\hat{\theta}_T)} \|\phi(x,\pi^*(x))\|_{\bar{H}^{-1}_T(\hat{\theta}_T)}\\
&\leq \gamma(T, d, \delta) \mathbb{E} \|\bar{H}^{-1/2}_T(\hat{\theta}_T)\phi(x,\pi^*(x))\|
\end{aligned}
\end{equation*}
By Lemma \ref{lemma0}, we know that $\theta_*\in \mathcal{C}(\hat{\theta}_T,\delta)$ with probability at least $1-\delta$. Therefore,
$$ \textsf{SubOpt}(\pi_T)\leq 2\gamma(T, d, \delta)  \mathbb{E} \|\bar{H}^{-1/2}_T(\hat{\theta}_T)\phi(x,\pi^*(x))\|.$$
By Theorem \ref{thm2}, we have $\bar{H}^{-1/2}_T(\hat{\theta}_T)=M(\xi_T, \hat{\theta}_T)\xrightarrow{a.s.}M(\xi_*, \theta_*)$. Therefore, there exists a constant $T_0$ such that $\mathbb{E} \|\bar{H}^{-1/2}_T(\hat{\theta}_T)\phi(x,\pi^*(x))\|\leq 2\|M^{-1/2}(\xi_*, \theta_*)\mathbb{E} \phi(x,\pi^*(x))\|$ for all $T>T_0$ with probability 1. Thus, when $T>T_0$, with probability at least $1-\delta$, we have 
\begin{align*}
 \textsf{SubOpt}(\pi_T)&\leq 2\gamma(T, d, \delta) \|M^{-1/2}(\xi_*, \theta_*)\mathbb{E} \phi(x,\pi^*(x))\|\\
 &=2\sqrt{\frac{C_1}{T} \left[d\log \left(e+\frac{C_2 T}{d}\right)+\log \frac{2}{\delta}\right]}\|M^{-1/2}(\xi_*, \theta_*)\mathbb{E} \phi(x,\pi^*(x))\|.
\end{align*}
\subsection{Proof of Theorem \ref{thm4}}\label{appj}
The proof of Theorem \ref{thm4} follows a similar strategy to that of Theorem \ref{thm3}. For completeness, we provide the proof here.
By the definition of the sub-optimality \eqref{subopt}, we have
\begin{equation}\label{4sub1}
 \textsf{SubOpt}(\hat{\pi}_T)=J(\pi^*)-J(\hat{\pi}_T)
=[J(\pi^*)-\hat{J}(\pi^*)]+[\hat{J}(\pi^*)-\hat{J}(\hat{\pi}_T)]+[\hat{J}(\hat{\pi}_T)-J(\hat{\pi}_T)].   
\end{equation}
Since $\hat{\pi}_T$ is the optimal policy under $\hat{J}(\pi)$, we have 
\begin{equation}\label{4sub2}
\hat{J}(\pi^*)-\hat{J}(\hat{\pi}_T)\leq 0.    
\end{equation}
 By the definition of the pessimistic expected value function \eqref{pessj}, we obtain
\begin{align*}
\hat{J}(\hat{\pi}_T)-J(\pi_T)=\mathop{\min}_{\theta\in \mathcal{C}(\hat{\theta}_T,\delta)} \mathbb{E}\theta^\top \phi(x,\pi_T(x))-\mathbb{E}\theta_*^\top \phi(x,\pi_T(x)).
\end{align*}
Similar to Lemma \ref{lemma0}, we can show that $\theta_*\in \mathcal{C}(\hat{\theta}_T,\delta)$ with probability at least $1-\delta$. Therefore, with probability at least $1-\delta$, we have
\begin{equation}\label{4sub3}
\hat{J}(\pi_T)-J(\pi_T)\leq 0.    
\end{equation}
Combining \eqref{4sub1}, \eqref{4sub2} and \eqref{4sub3},  with probability at least $1-\delta$, we have
\begin{align*}
 \textsf{SubOpt}(\pi_T)&\leq J(\pi^*)-\hat{J}(\pi^*)\\
 &=\mathbb{E}\theta_*^\top \phi(x,\pi^*(x))-\mathop{\min}_{\theta\in \mathcal{C}(\hat{\theta}_T,\delta)} \mathbb{E}\theta^\top \phi(x,\pi^*(x))\\
 &=\mathop{\max}_{\theta\in \mathcal{C}(\hat{\theta}_T,\delta)} \mathbb{E}(\theta_*-\theta)^\top \phi(x,\pi^*(x))\\
 &=\mathop{\max}_{\theta\in \mathcal{C}(\hat{\theta}_T,\delta)} \mathbb{E}(\theta_*-\hat{\theta}_T+\hat{\theta}_T-\theta)^\top \phi(x,\pi^*(x))\\
 &=\mathbb{E}(\theta_*-\hat{\theta}_T)^\top \phi(x,\pi^*(x))+\mathop{\max}_{\theta\in \mathcal{C}(\hat{\theta}_T,\delta)} \mathbb{E}(\hat{\theta}_T-\theta)^\top \phi(x,\pi^*(x)).
\end{align*}
By the definition of $\mathcal{C}(\hat{\theta}_T,\delta)$ in \eqref{ci}, we obtain 
\begin{equation*}
\begin{aligned}
\mathop{\max}_{\theta\in \mathcal{C}(\hat{\theta}_T,\delta)} \mathbb{E}(\hat{\theta}_T-\theta)^\top \phi(x,\pi^*(x))&\leq \mathop{\max}_{\theta\in \mathcal{C}\hat{\theta}_T,\delta)} \mathbb{E}\|\hat{\theta}_T-\theta\|_{\bar{H}_T(\hat{\theta}_T)} \|\phi(x,\pi^*(x))\|_{\bar{H}^{-1}_T(\hat{\theta}_T)}\\
&\leq \gamma(T, d, \delta) \mathbb{E} \|\bar{H}^{-1/2}_T(\hat{\theta}_T)\phi(x,\pi^*(x))\|
\end{aligned}
\end{equation*}
Similar to Lemma \ref{lemma0}, we can show that $\theta_*\in \mathcal{C}(\hat{\theta}_T,\delta)$ with probability at least $1-\delta$. Therefore,
$$ \textsf{SubOpt}(\pi_T)\leq 2\gamma(T, d, \delta)  \mathbb{E} \|\bar{H}^{-1/2}_T(\hat{\theta}_T)\phi(x,\pi^*(x))\|.$$
Similar to Theorem \ref{thm2}, we can show $\bar{H}^{-1/2}_T(\hat{\theta}_T)=M(\xi_T, \hat{\theta}_T)\xrightarrow{a.s.}M(\xi_*, \theta_*)$. Therefore, there exists a constant $T_0$ such that $\mathbb{E} \|\bar{H}^{-1/2}_T(\hat{\theta}_T)\phi(x,\pi^*(x))\|\leq 2\|M^{-1/2}(\xi_*, \theta_*)\mathbb{E} \phi(x,\pi^*(x))\|$ for all $T>T_0$ with probability 1. Thus, when $T>T_0$, with probability at least $1-\delta$, we have 
\begin{align*}
 \textsf{SubOpt}(\hat{\pi}_T)&\leq 2\gamma(T, d, \delta) \|M^{-1/2}(\xi_*, \theta_*)\mathbb{E} \phi(x,\pi^*(x))\|\\
 &=2\sqrt{\frac{C_3}{T} \left[d\log \left(e+\frac{C_4 T}{d}\right)+\log \frac{2}{\delta}\right]}\|M^{-1/2}(\xi_*, \theta_*)\mathbb{E} \phi(x,\pi^*(x))\|,
\end{align*}
for some positive constants $C_3$ and $C_4$.

\section{Support Lemmas}\label{appsu}
\begin{lemma}(Theorem 18.1.1. \citep{harville2008matrix})\label{harville}
Let $R$ represent an $n\times n$ matrix, $S$ an $n\times m$ matrix, $\tilde{T}$ an $m\times m$ matrix, and $U$ an $m\times n$ matrix. If $R$ and $\tilde{T}$ are nonsingular, then
$$\det (R+S\tilde{T}U)=\det R\det \tilde{T}\det(\tilde{T}^{-1}+UR^{-1}S).$$
\end{lemma}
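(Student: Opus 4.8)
The plan is to prove this generalized matrix determinant lemma as an instance of the Schur complement determinant identity, applied to a single auxiliary block matrix that is evaluated in two different ways. Because $R$ and $\tilde{T}$ are nonsingular, both $R^{-1}$ and $\tilde{T}^{-1}$ exist, so I can form the $(n+m)\times(n+m)$ block matrix
$$B=\begin{pmatrix} R & -S \\ U & \tilde{T}^{-1}\end{pmatrix},$$
and the entire argument amounts to computing $\det B$ once by eliminating the $(1,1)$ block and once by eliminating the $(2,2)$ block, then equating the two expressions.

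First I would record the block-triangular factorization that underlies the Schur complement. Since $R$ is nonsingular, direct multiplication verifies
$$B=\begin{pmatrix} I_n & 0 \\ UR^{-1} & I_m\end{pmatrix}\begin{pmatrix} R & 0 \\ 0 & \tilde{T}^{-1}+UR^{-1}S\end{pmatrix}\begin{pmatrix} I_n & -R^{-1}S \\ 0 & I_m\end{pmatrix}.$$
The two unit-triangular factors have determinant $1$, so taking determinants gives
$$\det B=\det(R)\,\det(\tilde{T}^{-1}+UR^{-1}S).$$
Symmetrically, since the $(2,2)$ block $\tilde{T}^{-1}$ is nonsingular, the analogous factorization that eliminates it yields the companion evaluation
$$\det B=\det(\tilde{T}^{-1})\,\det\!\left(R-(-S)(\tilde{T}^{-1})^{-1}U\right)=\det(\tilde{T}^{-1})\,\det(R+S\tilde{T}U),$$
where I used $(\tilde{T}^{-1})^{-1}=\tilde{T}$ to simplify the Schur complement.

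Equating the two evaluations of $\det B$ and using $\det(\tilde{T}^{-1})=1/\det(\tilde{T})$ produces
$$\det(R)\,\det(\tilde{T}^{-1}+UR^{-1}S)=\frac{1}{\det(\tilde{T})}\,\det(R+S\tilde{T}U).$$
Multiplying through by $\det(\tilde{T})$ gives precisely $\det(R+S\tilde{T}U)=\det(R)\det(\tilde{T})\det(\tilde{T}^{-1}+UR^{-1}S)$, which is the claim. I do not anticipate any substantive obstacle; the only points requiring care are the sign bookkeeping (the $-S$ placed in the off-diagonal block is exactly what turns the Schur complement $\tilde{T}^{-1}-UR^{-1}(-S)$ into the desired $\tilde{T}^{-1}+UR^{-1}S$), and checking that each triangular factor is genuinely unit-triangular so that it contributes no determinant. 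The nonsingularity of $R$ and $\tilde{T}$ is used only to license the two block eliminations, matching exactly the hypotheses of the statement.
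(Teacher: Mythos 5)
Your proof is correct: both block factorizations multiply out exactly as you claim, the Schur complement of the $(2,2)$ block correctly picks up the sign so that $R-(-S)\tilde{T}U=R+S\tilde{T}U$, and equating the two evaluations of $\det B$ yields the identity. Note that the paper offers no proof of this lemma at all --- it simply cites Theorem 18.1.1 of Harville --- and your two-sided Schur-complement evaluation of the bordered matrix is essentially the standard partitioned-matrix argument used in that reference, so your write-up is a faithful, self-contained reconstruction of the cited result rather than a genuinely different route.
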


\begin{lemma} (Theorem 1.1 \citep{Tropp2012}) \label{tropp1}
Consider a finite sequence $\{\mathbf{X}_k\}$ of independent, random, self-adjoint matrices with dimension $d$. Assume that each random matrix satisfies
$$\mathbf{X}_k\succeq \boldsymbol{0}\ \textit{and}\ \lambda_{max}(\mathbf{X}_k)\leq R\ \textit{almost surely.}$$
Define
$$\mu_{min}:=\lambda_{min}\bigg(\sum_{k}\mathbb{E}\mathbf{X}_k\bigg)\ \text{and}\ \mu_{max}:=\lambda_{max}\bigg(\sum_{k}\mathbb{E}\mathbf{X}_k\bigg).$$ Then for $\zeta\in [0,1]$,
\begin{align*}
&\mathbb{P}\bigg\{\lambda_{min}\bigg(\sum_{k}\mathbf{X}_k\bigg)\leq (1-\delta)\mu_{min}\bigg\}\leq d\bigg[\frac{e^{-\delta}}{(1-\delta)^{1-\delta}}\bigg]^{\mu_{min}/R}\ \text{for}\ \delta\in[0,1],\ \text{and}\\
&\mathbb{P}\bigg\{\lambda_{max}\bigg(\sum_{k}\mathbf{X}_k\bigg)\leq (1+\delta)\mu_{max}\bigg\}\leq d\bigg[\frac{e^{\delta}}{(1+\delta)^{1+\delta}}\bigg]^{\mu_{max}/R}\ \text{for}\ \delta\geq 0.
\end{align*}
\end{lemma}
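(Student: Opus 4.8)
The statement is a restatement of Tropp's matrix Chernoff inequality, so the paper merely quotes it; the plan below is how I would establish it from scratch via the \emph{matrix Laplace transform method}, the semidefinite analogue of the scalar Chernoff argument. Two ingredients drive everything: (i) a tail bound converting control of an extreme eigenvalue of $\sum_k \mathbf{X}_k$ into control of $\mathbb{E}\,\tr\exp(\theta\sum_k\mathbf{X}_k)$, and (ii) subadditivity of the matrix cumulant generating function, which dominates $\mathbb{E}\,\tr\exp(\theta\sum_k\mathbf{X}_k)$ by $\tr\exp(\sum_k\log\mathbb{E}\,e^{\theta\mathbf{X}_k})$. I would prove the upper-tail ($\lambda_{\max}$) bound in full and obtain the lower-tail ($\lambda_{\min}$) bound by repeating the argument with a negative parameter $\theta$, using $\lambda_{\min}(Y)=-\lambda_{\max}(-Y)$.

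First I would fix $\theta>0$ and apply Markov's inequality to the monotone transform $e^{\theta\lambda_{\max}(\sum_k\mathbf{X}_k)}=\lambda_{\max}\big(\exp(\theta\sum_k\mathbf{X}_k)\big)\le\tr\exp(\theta\sum_k\mathbf{X}_k)$, giving $$\mathbb{P}\Big\{\lambda_{\max}\big(\textstyle\sum_k\mathbf{X}_k\big)\ge t\Big\}\le e^{-\theta t}\,\mathbb{E}\,\tr\exp\big(\textstyle\theta\sum_k\mathbf{X}_k\big).$$ Next I would invoke ingredient (ii) to pass to $\tr\exp(\sum_k\log\mathbb{E}\,e^{\theta\mathbf{X}_k})$, and then bound each summand's log-MGF: since $\mathbf{0}\preceq\mathbf{X}_k\preceq R\,\mathbf{I}$ almost surely, the scalar estimate $e^{\theta x}\le 1+(e^{\theta R}-1)x/R$ valid on $[0,R]$ transfers to the semidefinite order, so that with $g(\theta):=(e^{\theta R}-1)/R$ one gets $\mathbb{E}\,e^{\theta\mathbf{X}_k}\preceq\mathbf{I}+g(\theta)\,\mathbb{E}\mathbf{X}_k\preceq\exp(g(\theta)\,\mathbb{E}\mathbf{X}_k)$ and hence $\log\mathbb{E}\,e^{\theta\mathbf{X}_k}\preceq g(\theta)\,\mathbb{E}\mathbf{X}_k$. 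Summing, using monotonicity of $A\mapsto\tr\exp(A)$ under the semidefinite order, together with $\tr\exp(B)\le d\,e^{\lambda_{\max}(B)}$, I would reach $$\mathbb{P}\Big\{\lambda_{\max}\big(\textstyle\sum_k\mathbf{X}_k\big)\ge t\Big\}\le d\,\exp\big(g(\theta)\mu_{\max}-\theta t\big).$$ Setting $t=(1+\delta)\mu_{\max}$ and optimizing over $\theta$ with the choice $\theta R=\log(1+\delta)$ collapses the exponent to $\tfrac{\mu_{\max}}{R}\big(\delta-(1+\delta)\log(1+\delta)\big)$, which is exactly $[e^{\delta}/(1+\delta)^{1+\delta}]^{\mu_{\max}/R}$; the lower-tail optimization with $\theta R=\log(1-\delta)$ yields $[e^{-\delta}/(1-\delta)^{1-\delta}]^{\mu_{\min}/R}$.

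The main obstacle is ingredient (ii): the subadditivity $\mathbb{E}\,\tr\exp(\sum_k\theta\mathbf{X}_k)\le\tr\exp(\sum_k\log\mathbb{E}\,e^{\theta\mathbf{X}_k})$ is not elementary, because matrix exponentials do not factor across sums of non-commuting matrices. The standard route is Lieb's concavity theorem---that $A\mapsto\tr\exp(H+\log A)$ is concave on positive-definite $A$---combined with Jensen's inequality to peel off one independent summand at a time inside the expectation. Once this is granted, the remaining pieces (the semidefinite transfer rule for the scalar MGF estimate, monotonicity of the trace exponential, and the one-variable Chernoff optimization) are routine, and the two bounds follow. Since the paper only uses the result as a black box, I would in practice cite Lieb's theorem rather than reprove it.
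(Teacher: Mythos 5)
The paper gives no proof of this lemma at all: it is quoted as a black box (Theorem 1.1 of Tropp, 2012, the matrix Chernoff inequality) in the Support Lemmas section. Your sketch is precisely the argument of the cited source --- the matrix Laplace transform method (Markov's inequality applied to $\tr\exp\left(\theta\sum_k\mathbf{X}_k\right)$), subadditivity of the matrix cumulant generating function via Lieb's concavity theorem plus Jensen to peel off independent summands, the semidefinite transfer of the scalar chord bound $e^{\theta x}\le 1+(e^{\theta R}-1)x/R$ valid on $[0,R]$, and the Chernoff optimizations $\theta R=\log(1+\delta)$ and $\theta R=\log(1-\delta)$ --- and every step you outline is correct, including your identification of the one genuinely non-elementary ingredient (Lieb's theorem) and your decision to cite rather than reprove it. So the proposal is correct and takes essentially the same route as the reference the paper relies on. One incidental remark: the statement as transcribed in the paper contains typos --- the stray quantifier ``for $\zeta\in[0,1]$'' (the variable $\zeta$ never recurs), and the event in the second display should be $\lambda_{\max}\left(\sum_k\mathbf{X}_k\right)\ge(1+\delta)\mu_{\max}$ rather than $\le$; your argument proves the correct ($\ge$) version, which is what Tropp's Theorem 1.1 actually asserts.
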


\begin{lemma}(Lemma 4 \citep{pronzato2010one})\label{pron}
If for any $\delta>0$
$$\mathop{\lim\inf}_{N\rightarrow\infty}\mathop{\inf}_{\|\theta-\theta_*\|\geq \delta}[L_N(\theta_*)-L_N(\theta)]>0 \ \text{almost surely},$$
then $\hat{\theta}_{ML}^N\xrightarrow{a.s.}\theta_*.$
\end{lemma}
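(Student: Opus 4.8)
The plan is to exploit the defining variational property of the maximum-likelihood estimator, namely that $L_N(\hat{\theta}_{ML}^N) \geq L_N(\theta)$ for every admissible $\theta$, and in particular $L_N(\hat{\theta}_{ML}^N) \geq L_N(\theta_*)$. The strategy is to show that the separation hypothesis forces $\hat{\theta}_{ML}^N$ to remain arbitrarily close to $\theta_*$ for all sufficiently large $N$, and then to upgrade this to almost-sure convergence by a countable-intersection argument.

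First I would fix $\delta > 0$ and work on the almost-sure event on which $\liminf_{N\to\infty} \inf_{\|\theta - \theta_*\| \geq \delta}[L_N(\theta_*) - L_N(\theta)] > 0$. By the definition of the limit inferior, on this event there exist a constant $\epsilon_\delta > 0$ and a (possibly random) index $N_\delta$ such that
$$\inf_{\|\theta - \theta_*\| \geq \delta}[L_N(\theta_*) - L_N(\theta)] \geq \epsilon_\delta \quad \text{for all } N \geq N_\delta.$$
Since $\hat{\theta}_{ML}^N$ is a candidate point, this inequality applies directly to it whenever it lies outside the $\delta$-ball: if $\|\hat{\theta}_{ML}^N - \theta_*\| \geq \delta$ for some $N \geq N_\delta$, then $L_N(\theta_*) - L_N(\hat{\theta}_{ML}^N) \geq \epsilon_\delta > 0$, hence $L_N(\hat{\theta}_{ML}^N) < L_N(\theta_*)$. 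This contradicts the optimality inequality $L_N(\hat{\theta}_{ML}^N) \geq L_N(\theta_*)$. Therefore $\|\hat{\theta}_{ML}^N - \theta_*\| < \delta$ for every $N \geq N_\delta$, which yields $\limsup_{N\to\infty} \|\hat{\theta}_{ML}^N - \theta_*\| \leq \delta$ on this event.

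Finally I would handle the quantifier over $\delta$. The hypothesis is assumed to hold almost surely for each fixed $\delta > 0$; intersecting the corresponding full-measure events over the countable family $\delta = 1/k$, $k \in \mathbb{N}$, produces a single full-measure event on which $\limsup_{N\to\infty} \|\hat{\theta}_{ML}^N - \theta_*\| \leq 1/k$ holds simultaneously for every $k$. Letting $k \to \infty$ on this event forces $\limsup_{N\to\infty} \|\hat{\theta}_{ML}^N - \theta_*\| = 0$, i.e. $\hat{\theta}_{ML}^N \xrightarrow{a.s.} \theta_*$.

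The main — indeed the only — delicate point is precisely this exchange of the ``for all $\delta$'' quantifier with the almost-sure statement: one cannot naively assert that a single null set works for every $\delta$ at once. It is resolved by restricting to the countable sequence $\delta_k = 1/k \downarrow 0$, which suffices because convergence of a sequence to $\theta_*$ can be certified by checking that its $\limsup$ distance falls below each tolerance $1/k$. Everything else is an immediate consequence of the defining inequality $L_N(\hat{\theta}_{ML}^N) \geq L_N(\theta_*)$ together with the extraction of a strictly positive lower bound $\epsilon_\delta$ from the limit inferior, so no additional compactness, continuity, or measurability input beyond the stated hypothesis is required.
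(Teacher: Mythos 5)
Your proof is correct, but there is nothing in the paper to compare it against: Lemma \ref{pron} is a support lemma imported verbatim from Lemma 4 of \cite{pronzato2010one}, and the paper gives no proof of it (it is used as a black box in the proof of Lemma \ref{thecon}). What you have written is the standard Wald-type consistency argument, and you correctly identify and resolve the only delicate point, namely that the exceptional null set in the hypothesis may depend on $\delta$, which you handle by intersecting over the countable family $\delta = 1/k$ before letting $k\to\infty$. Two minor points of precision. First, on the fixed-$\delta$ event you assert the existence of ``a constant $\epsilon_\delta>0$''; since the limit inferior is a random variable, $\epsilon_\delta$ must in general be $\omega$-dependent (e.g.\ half the value of the liminf at each sample point), and no constant uniform over the event need exist --- this is harmless because your argument is entirely pointwise in $\omega$, but the wording should say so. Second, your closing claim that no input beyond the stated hypothesis is needed quietly uses two standing facts: that the maximizer $\hat{\theta}_{ML}^N$ exists and that $\theta_*$ belongs to the feasible set, so that the inequality $L_N(\hat{\theta}_{ML}^N)\geq L_N(\theta_*)$ is available, and that $\hat{\theta}_{ML}^N$ is measurable so the almost-sure convergence statement is meaningful; in the paper's application both are guaranteed by the definition $\hat{\theta}_T=\mathop{\arg\max}_{\theta\in\Theta}L_T(\theta)$ together with Assumption \ref{assu2}.
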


\begin{lemma} (Theorem 3.2 \citep{Martingale})\label{lemma8}
Let $\{S_{ni}, \mathcal{F}_{n,i}, 1\leq i\leq k_n, n\geq 1\}$ be a zero-mean, square-integrable martingale array with differences $X_{ni}$, and let $\eta^2$ be an a.s. finite r.v. Suppose that 
\begin{equation}\label{l8e1}
    \mathop{\max}_i|X_{ni}|\xrightarrow{p} 0,
\end{equation}
\begin{equation}\label{l8e2}
    \sum_iX_{ni}^2\xrightarrow{p} \eta^2,
\end{equation}
\begin{equation}\label{l8e3}
\mathbb{E}\mathop{\max}_i X_{ni}^2   \ \text{is bounded in}\ n,
\end{equation}
and the $\sigma$-fields are nested:
\begin{equation}\label{l8e4}
\mathcal{F}_{n,i}\subseteq \mathcal{F}_{n+1,i}\ \text{for}\ 1\leq i\leq k_n, n\geq 1.
\end{equation}
Then $S_{nk_n}\sum_i X_{ni}\xrightarrow{d} Z$ (stably), where the r.v. $Z$ has characteristic function $\mathbb{E}e^{-\frac{1}{2}\eta^2t^2}$.
\end{lemma}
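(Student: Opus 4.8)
The plan is to follow the classical characteristic-function argument of McLeish and of Hall and Heyde, reducing the stable central limit theorem to the analysis of an auxiliary complex martingale product. Fix $t\in\mathbb{R}$ and abbreviate $S_n=S_{nk_n}=\sum_{i=1}^{k_n}X_{ni}$ and $U_n=\sum_{i=1}^{k_n}X_{ni}^2$. First I would introduce the product $T_n(t)=\prod_{j=1}^{k_n}(1+\mathrm{i}tX_{nj})$ and record its two structural properties. Because the differences are martingale differences, $\mathbb{E}[X_{nj}\mid\mathcal{F}_{n,j-1}]=0$, so $\mathbb{E}[1+\mathrm{i}tX_{nj}\mid\mathcal{F}_{n,j-1}]=1$ and the partial products $T_{n,m}(t)=\prod_{j=1}^{m}(1+\mathrm{i}tX_{nj})$ form a mean-one martingale; in particular $\mathbb{E}[T_n(t)]=1$. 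Moreover $|T_n(t)|^2=\prod_{j}(1+t^2X_{nj}^2)\le \exp(t^2U_n)$, which together with \eqref{l8e2} keeps $|T_n(t)|$ bounded in probability.

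The second step connects $e^{\mathrm{i}tS_n}$ to $T_n(t)$ through the expansion $\log(1+\mathrm{i}tx)=\mathrm{i}tx+\tfrac12 t^2x^2+\rho(x)$, valid for $|tx|$ small with $|\rho(x)|\le C|tx|^3$. Summing over $j$ gives $T_n(t)=\exp\!\big(\mathrm{i}tS_n+\tfrac12 t^2U_n+R_n\big)$ with $R_n=\sum_j\rho(X_{nj})$, hence $e^{\mathrm{i}tS_n}=T_n(t)\exp(-\tfrac12 t^2U_n-R_n)$. I would then control the remainder via $|R_n|\le C|t|^3\sum_j|X_{nj}|^3\le C|t|^3\big(\max_j|X_{nj}|\big)U_n$, so \eqref{l8e1} and \eqref{l8e2} force $R_n\xrightarrow{p}0$; the regime where some $|tX_{nj}|$ is not small is excluded on the high-probability event $\{\max_j|X_{nj}|\le\varepsilon\}$ furnished by \eqref{l8e1}. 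Combined with \eqref{l8e2} this yields $\exp(-\tfrac12 t^2U_n-R_n)\xrightarrow{p}\exp(-\tfrac12 t^2\eta^2)$, a quantity bounded in modulus by $1$.

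The decisive step is to pass from convergence in probability of this exponential factor, plus the normalization $\mathbb{E}[T_n(t)]=1$, to convergence of $\mathbb{E}[e^{\mathrm{i}tS_n}\,\mathbb{I}(A)]$ for events $A$ in the limit field, which is exactly stable convergence. Here I would first upgrade the in-probability bound on $|T_n(t)|$ to asymptotic uniform integrability, using \eqref{l8e3} to dominate the single largest factor and a truncation of $X_{nj}$ at level $\varepsilon$ to keep $\prod_j(1+t^2X_{nj}^2)$ integrable; then I would show $\mathbb{E}[T_n(t)\,\mathbb{I}(A)]\to\mathbb{P}(A)$ for each $A$ belonging to some $\mathcal{F}_{m,1}$, where the nesting hypothesis \eqref{l8e4} guarantees these sets form an increasing field on which the conditional expectations stabilize. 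Writing $\mathbb{E}[e^{\mathrm{i}tS_n}\mathbb{I}(A)]=\mathbb{E}\!\big[T_n(t)\exp(-\tfrac12 t^2U_n-R_n)\mathbb{I}(A)\big]$ and splitting off the deterministic limit $\exp(-\tfrac12 t^2\eta^2)$ then gives $\mathbb{E}[e^{\mathrm{i}tS_n}\mathbb{I}(A)]\to\mathbb{E}[\exp(-\tfrac12 t^2\eta^2)\mathbb{I}(A)]$, i.e.\ $S_n$ converges stably to $Z$ with conditional characteristic function $\exp(-\tfrac12\eta^2t^2)$; taking $A$ to be the whole space recovers the unconditional statement.

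The main obstacle is precisely this last decoupling: $T_n(t)$ and the correction $\exp(-\tfrac12 t^2U_n-R_n)$ are strongly dependent, so one cannot simply multiply the limit $\mathbb{E}[T_n(t)]\to 1$ by the limit of the second factor. Making the exchange rigorous requires the uniform-integrability control of $T_n(t)$ coming from \eqref{l8e3} together with the $\varepsilon$-truncation, and establishing $\mathbb{E}[T_n(t)\mathbb{I}(A)]\to\mathbb{P}(A)$ from the martingale structure and the nesting condition \eqref{l8e4}. Everything else—the Taylor expansion in the second step and the remainder estimate—is routine.
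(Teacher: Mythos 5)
The paper does not actually prove this lemma: it is quoted as a support lemma directly from Theorem 3.2 of the cited reference \citep{Martingale} (Hall and Heyde), so there is no internal proof to compare against. Your sketch reconstructs precisely the classical McLeish--Hall--Heyde characteristic-function argument that constitutes the proof in that source: the mean-one product $T_n(t)=\prod_j(1+\mathrm{i}tX_{nj})$, the identity $e^{\mathrm{i}tS_n}=T_n(t)\exp(-\tfrac12 t^2U_n-R_n)$, the remainder bound $|R_n|\le C|t|^3(\max_j|X_{nj}|)U_n$ killed by \eqref{l8e1}--\eqref{l8e2}, and the decoupling of $T_n(t)$ from the exponential factor via uniform integrability together with the nesting condition \eqref{l8e4}. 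Your identification of that decoupling as the crux is exactly right.

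One mechanism in your sketch would fail as literally stated: you propose to obtain uniform integrability by ``a truncation of $X_{nj}$ at level $\varepsilon$.'' Truncating the increments themselves destroys the conditional mean-zero property (so $\mathbb{E}[T_n(t)]=1$ is lost), and it does not make $\prod_j(1+t^2X_{nj}^2)$ integrable anyway, since the number of factors $k_n$ is unbounded. The correct device --- and the one used in the cited source --- is optional stopping: set $J_n=\min\{j:\sum_{i\le j}X_{ni}^2>C\}\wedge k_n$ and work with the stopped product $T_n'(t)=\prod_{j\le J_n}(1+\mathrm{i}tX_{nj})$. Stopping preserves the martingale structure, so $\mathbb{E}[T_n'(t)]=1$ still holds exactly; the modulus bound becomes $|T_n'(t)|^2\le e^{t^2C}\bigl(1+t^2\max_jX_{nj}^2\bigr)$, whose expectation is controlled precisely by \eqref{l8e3} (this is the rigorous version of the ``single largest factor'' role you assign to that hypothesis), giving $L^2$-boundedness and hence uniform integrability; and $T_n'$ agrees with $T_n$ except on $\{U_n>C\}$, whose probability is small uniformly in $n$ for large $C$ because $U_n\xrightarrow{p}\eta^2$ with $\eta^2$ a.s.\ finite. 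With this repair, the rest of your outline --- including the use of the nesting condition to reduce stable convergence to $\mathbb{E}[T_n(t)\mathbb{I}(A)]\to\mathbb{P}(A)$ for $A$ in the increasing union of the row $\sigma$-fields, by splitting off the head product $\prod_{l\le i}(1+\mathrm{i}tX_{nl})\xrightarrow{p}1$ and using that the tail product has conditional expectation one --- goes through and matches the source's proof.
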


\begin{lemma} (Corollary 3.1 \citep{Martingale})\label{lemma9}
If \eqref{l8e1} and \eqref{l8e3} are replaced by the conditional Lindeberg condition 
$$\text{for all}\  \epsilon>0,\ \sum_i\mathbb{E}[X_{ni}^2\mathbb{I}(|X_{ni}|>\epsilon)|\mathcal{F}_{n, i-1}]\xrightarrow{p} 0,$$
if \eqref{l8e2} is replaced by an analogous condition on the conditional variance:
$$V_{nk_n}^2=\sum \mathbb{E}(X_{ni}^2|\mathcal{F}_{n, i-1})\xrightarrow{p}\eta^2,$$
and if \eqref{l8e4} holds, then the conclusion of Lemma \ref{lemma8} remains true.
\end{lemma}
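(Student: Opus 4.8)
The plan is to derive the conditional martingale central limit theorem (Lemma~\ref{lemma9}) from its unconditional counterpart (Lemma~\ref{lemma8}) by a truncation-and-compensation argument: I would build a modified, uniformly bounded martingale array to which Lemma~\ref{lemma8} applies directly, and then show the modification is asymptotically negligible. First I would record the elementary fact that for any nonnegative adapted array $\{\zeta_{ni}\}$ one has the implication $\sum_i \mathbb{E}[\zeta_{ni}\mid\mathcal{F}_{n,i-1}]\xrightarrow{p}0 \Rightarrow \sum_i \zeta_{ni}\xrightarrow{p}0$, which follows from Lenglart's domination inequality (a nonnegative increasing process is dominated by its predictable compensator). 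Applying this with $\zeta_{ni}=X_{ni}^2\mathbb{I}(|X_{ni}|>\epsilon)$ converts the conditional Lindeberg hypothesis into the unconditional statement $\sum_i X_{ni}^2\mathbb{I}(|X_{ni}|>\epsilon)\xrightarrow{p}0$ for every $\epsilon>0$. Since $\max_i X_{ni}^2\,\mathbb{I}(\max_i|X_{ni}|>\epsilon)\le \sum_i X_{ni}^2\mathbb{I}(|X_{ni}|>\epsilon)$, this immediately recovers condition \eqref{l8e1}, namely $\max_i|X_{ni}|\xrightarrow{p}0$.

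The crux is to replace the conditional variance hypothesis $V_{nk_n}^2=\sum_i\mathbb{E}[X_{ni}^2\mid\mathcal{F}_{n,i-1}]\xrightarrow{p}\eta^2$ by the quadratic-variation condition \eqref{l8e2}, $\sum_i X_{ni}^2\xrightarrow{p}\eta^2$. I would write $\sum_i X_{ni}^2 - V_{nk_n}^2 = \sum_i\big(X_{ni}^2-\mathbb{E}[X_{ni}^2\mid\mathcal{F}_{n,i-1}]\big)$, a sum of martingale differences, and show it vanishes in probability. Splitting $X_{ni}^2$ at level $\epsilon$, the contribution of $\mathbb{I}(|X_{ni}|>\epsilon)$ (and of its compensator) is controlled by the conditional Lindeberg term just established, while the bounded part $X_{ni}^2\mathbb{I}(|X_{ni}|\le\epsilon)$ yields martingale differences bounded by $\epsilon^2$ whose conditional second moments sum to at most $\epsilon^2 V_{nk_n}^2$; an $L^2$ (weak law of large numbers) estimate for bounded martingale differences then makes this part $O(\epsilon)$, and letting $\epsilon\to0$ gives \eqref{l8e2}. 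Because $V_{nk_n}^2$ is only assumed to converge in probability, I would carry out the $L^2$ bound after localizing by a stopping time $\tau_n=\min\{i:\sum_{j\le i}\mathbb{E}[X_{nj}^2\mid\mathcal{F}_{n,j-1}]>\eta^2+1\}$, on which the compensator is bounded; this stopping is inactive with probability tending to one precisely because $V_{nk_n}^2\xrightarrow{p}\eta^2$ is finite.

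With \eqref{l8e1} and \eqref{l8e2} in hand, the remaining hypotheses \eqref{l8e3} and \eqref{l8e4} are handled on the truncated array $\hat X_{ni}=X_{ni}\mathbb{I}(|X_{ni}|\le\epsilon)-\mathbb{E}[X_{ni}\mathbb{I}(|X_{ni}|\le\epsilon)\mid\mathcal{F}_{n,i-1}]$: its increments satisfy $|\hat X_{ni}|\le 2\epsilon$, so $\mathbb{E}\max_i\hat X_{ni}^2\le 4\epsilon^2$ is bounded in $n$, giving \eqref{l8e3}, and the nesting \eqref{l8e4} is inherited unchanged. Lemma~\ref{lemma8} then applies to $\{\hat X_{ni}\}$ and yields stable convergence of $\sum_i\hat X_{ni}$ to the mixed-normal limit with characteristic function $\mathbb{E}e^{-\frac12\eta^2 t^2}$. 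Finally I would show $\sum_i(X_{ni}-\hat X_{ni})\xrightarrow{p}0$ — again via Lenglart's inequality applied to the truncated-away increments and their compensators — so that a Slutsky-type argument for stable convergence transfers the limit to $S_{nk_n}=\sum_i X_{ni}$, completing the proof.

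The main obstacle is the genuinely conditional, in-probability nature of the hypotheses: one cannot simply take expectations to invoke an ordinary Lindeberg argument, so every passage from the conditional sums to the actual sums must go through a domination/localization device (Lenglart's inequality together with a stopping-time truncation that is eventually inactive). Keeping the two limits coordinated — first $n\to\infty$ for fixed truncation level $\epsilon$, then $\epsilon\to0$ — while preserving the \emph{stable} (mixing) mode of convergence rather than mere convergence in distribution, is the delicate bookkeeping that the argument must manage.
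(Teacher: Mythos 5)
The paper contains no proof of this statement: Lemma \ref{lemma9} is imported verbatim as Corollary 3.1 of the cited martingale limit theory monograph \citep{Martingale} and sits among the support lemmas without argument, so there is no in-paper proof to compare yours against. Judged as a standalone derivation from Lemma \ref{lemma8}, your sketch is essentially the classical argument and is sound: the Lenglart/Dvoretzky domination step correctly converts the conditional Lindeberg condition into $\sum_i X_{ni}^2\mathbb{I}(|X_{ni}|>\epsilon)\xrightarrow{p}0$, which yields \eqref{l8e1}; the decomposition of $\sum_i\bigl(X_{ni}^2-\mathbb{E}[X_{ni}^2\mid\mathcal{F}_{n,i-1}]\bigr)$ into a tail part (killed by the Lindeberg term and its compensator) and a bounded part (killed by a localized $L^2$ estimate, letting $\epsilon\to0$ after $n\to\infty$) correctly delivers \eqref{l8e2}; the compensated truncation $\hat X_{ni}$ satisfies \eqref{l8e3} and inherits \eqref{l8e4}; and stable convergence is indeed preserved under addition of an $o_p(1)$ term. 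The stopping-time localization also works as stated: even though the full conditional-variance compensator can overshoot $\eta^2+1$ at the stopping index, the increments of the \emph{truncated} compensator are at most $\epsilon^2$, so the quantity entering the $L^2$ bound stays below $\eta^2+1+\epsilon^2$, and the stopping is inactive with probability tending to one since $V_{nk_n}^2\xrightarrow{p}\eta^2$.

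Two bookkeeping points should be made explicit. First, Lemma \ref{lemma8} is applied to the array $\{\hat X_{ni}\}$, so all four of its hypotheses --- including \eqref{l8e1} and \eqref{l8e2} --- must be verified for $\hat X_{ni}$, not merely for $X_{ni}$ as your second paragraph suggests. This is routine but is where the proof actually lives: since $\mathbb{E}[X_{ni}\mid\mathcal{F}_{n,i-1}]=0$, the compensator terms satisfy $\bigl|\mathbb{E}[X_{ni}\mathbb{I}(|X_{ni}|\le\epsilon)\mid\mathcal{F}_{n,i-1}]\bigr|\le\epsilon^{-1}\mathbb{E}[X_{ni}^2\mathbb{I}(|X_{ni}|>\epsilon)\mid\mathcal{F}_{n,i-1}]$, whose sum is $o_p(1)$ by the conditional Lindeberg condition, and the same estimates give $\max_i|\hat X_{ni}|\xrightarrow{p}0$ and $\sum_i\hat X_{ni}^2\xrightarrow{p}\eta^2$. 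Second, the ``delicate coordination'' of the limits $n\to\infty$ and $\epsilon\to0$ that you flag at the end is needed only in the quadratic-variation step: in the final transfer the truncation level may be held fixed, because for each fixed $\epsilon$ one has $\bigl|\sum_i X_{ni}\mathbb{I}(|X_{ni}|>\epsilon)\bigr|\le\epsilon^{-1}\sum_i X_{ni}^2\mathbb{I}(|X_{ni}|>\epsilon)=o_p(1)$, so $\sum_i(X_{ni}-\hat X_{ni})\xrightarrow{p}0$ already at fixed $\epsilon$ and no second limit is required there. With these details spelled out, your argument is a complete and correct proof of the quoted result.
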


\begin{lemma} (Theorem 29.4 \citep{billingsley1995})\label{billingsley}
For random vectors $X_n=(X_{n1}, \cdots,X_{nk})$ and $Y=(Y_1,\cdots,Y_k)$,  a necessary and sufficient condition for $X_n\xrightarrow{d} Y$ is that $\sum_{u=1}^kt_uX_{nu}\xrightarrow{d}\sum_{u=1}^kt_uY_u$ for each $(t_1,\cdots,t_k)\in\mathbb{R}^k$.
\end{lemma}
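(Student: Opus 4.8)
The plan is to reduce the multivariate convergence to a statement about characteristic functions and then invoke L\'evy's continuity theorem, exploiting the elementary observation that the characteristic function of the vector $X_n$ evaluated at a point $t=(t_1,\dots,t_k)$ coincides with the characteristic function of the scalar linear combination $\sum_{u=1}^k t_uX_{nu}$ evaluated at the single argument $1$. Writing $\varphi_{X_n}(t)=\mathbb{E}\,e^{\,i\sum_{u=1}^k t_uX_{nu}}$ for the characteristic function of $X_n$ and $\varphi_Y$ for that of $Y$, the whole argument rests on the identity $\varphi_{X_n}(t)=\mathbb{E}\,e^{\,i\cdot 1\cdot(\sum_{u=1}^k t_uX_{nu})}$, which lets a one-dimensional statement about $\sum_{u=1}^k t_uX_{nu}$ be read off as a value of the $k$-dimensional transform $\varphi_{X_n}$.

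For necessity, I would suppose $X_n\xrightarrow{d}Y$ and fix $t\in\mathbb{R}^k$. Since the map $x\mapsto\sum_{u=1}^k t_ux_u$ is continuous on $\mathbb{R}^k$, the continuous mapping theorem immediately yields $\sum_{u=1}^k t_uX_{nu}\xrightarrow{d}\sum_{u=1}^k t_uY_u$. This direction needs no characteristic-function input and is essentially a one-line consequence of continuity.

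For sufficiency, I would assume $\sum_{u=1}^k t_uX_{nu}\xrightarrow{d}\sum_{u=1}^k t_uY_u$ for every fixed $t\in\mathbb{R}^k$. Applying the one-dimensional L\'evy continuity theorem to these scalar random variables gives convergence of their characteristic functions at every real argument; evaluating at the argument $1$ and using the identity above yields $\varphi_{X_n}(t)\to\varphi_Y(t)$. As $t\in\mathbb{R}^k$ is arbitrary, the characteristic functions of $X_n$ converge pointwise to $\varphi_Y$ on all of $\mathbb{R}^k$. Because $\varphi_Y$ is the characteristic function of a genuine random vector, it is continuous at the origin, so the multivariate L\'evy continuity theorem applies and delivers $X_n\xrightarrow{d}Y$, closing the equivalence.

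The main obstacle lies in the sufficiency direction, and specifically in the appeal to the continuity theorem: pointwise convergence of characteristic functions does not by itself force convergence in distribution unless one also knows the limit is continuous at $0$, equivalently that the sequence $\{X_n\}$ is tight. Here that hypothesis is automatic, since the limiting object $Y$ is assumed to be a random vector whose characteristic function is continuous everywhere; the substantive analytic content, namely the continuity theorem itself (proved via tightness together with the Fourier inversion formula), is the one ingredient I would take as given rather than reprove.
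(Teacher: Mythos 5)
Your proof is correct and is the standard Cram\'er--Wold argument via characteristic functions and the L\'evy continuity theorem; the paper itself offers no proof of this lemma, importing it verbatim as Theorem 29.4 of \cite{billingsley1995}, whose proof proceeds exactly as yours does (continuous mapping for necessity, the identity $\varphi_{X_n}(t)=\mathbb{E}\,e^{\,i\sum_{u}t_uX_{nu}}$ plus the multivariate continuity theorem for sufficiency). One minor simplification: to obtain $\varphi_{X_n}(t)\to\varphi_Y(t)$ you do not need the one-dimensional continuity theorem at all---convergence in distribution of $\sum_{u}t_uX_{nu}$ already gives $\mathbb{E}\,e^{\,is\sum_{u}t_uX_{nu}}\to\mathbb{E}\,e^{\,is\sum_{u}t_uY_u}$ for each $s$ because $z\mapsto e^{isz}$ has bounded continuous real and imaginary parts---so, as you correctly identify, the only substantive analytic input is the multivariate continuity theorem invoked at the final step, where continuity of $\varphi_Y$ at the origin (automatic since $Y$ is a genuine random vector) supplies the tightness.
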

\begin{lemma}(Lemma 3 \citep{lee2024}) \label{lemma10}
Let $X_1,\cdots, X_t$ be martingale difference sequence satisfying $\mathop{\max}_s|X_s|\leq R$ a.s., and let $\mathcal{F}_s$ be the $\sigma$-field generated by $(X_1,\cdots, X_s)$. Then for any $\delta\in (0,1)$ and any $\eta\in(0, 1/R]$, the following holds with probability at least $1-\delta$:
$$\sum_{s=1}^tX_s\leq (e-2)\eta\sum_{s=1}^t\mathbb{E}(X_s^2|\mathcal{F}_{s-1})+\frac{1}{\eta}\log\frac{1}{\delta}, \forall t\geq 1.$$
\end{lemma}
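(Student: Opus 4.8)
The plan is to establish this Freedman–Bernstein-type deviation bound through the classical exponential-supermartingale argument. First I would rescale the summands: set $Y_s = \eta X_s$, so that the hypothesis $\eta \in (0, 1/R]$ together with $\max_s |X_s| \le R$ forces $|Y_s| \le \eta R \le 1$ almost surely. On the interval $[-1,1]$ I would invoke the elementary scalar inequality $e^{y} \le 1 + y + (e-2)y^2$, which follows from the Taylor expansion by noting that $|y|^k \le y^2$ for every $k \ge 2$ and $\sum_{k \ge 2} 1/k! = e-2$; this is exactly what pins down the sharp constant $e-2$ appearing in the statement. Applying it with $y = \eta X_s$ yields, pointwise, $e^{\eta X_s} \le 1 + \eta X_s + (e-2)\eta^2 X_s^2$.

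Next I would take the conditional expectation given $\mathcal{F}_{s-1}$. Since $\{X_s\}$ is a martingale difference sequence, the linear term vanishes, $\mathbb{E}(X_s \mid \mathcal{F}_{s-1}) = 0$, leaving the one-step bound $\mathbb{E}(e^{\eta X_s} \mid \mathcal{F}_{s-1}) \le 1 + (e-2)\eta^2 \mathbb{E}(X_s^2 \mid \mathcal{F}_{s-1}) \le \exp\big((e-2)\eta^2 \mathbb{E}(X_s^2 \mid \mathcal{F}_{s-1})\big)$, where the last step uses $1 + u \le e^{u}$. With this estimate in hand I would define the process $M_t = \exp\big(\eta \sum_{s=1}^t X_s - (e-2)\eta^2 \sum_{s=1}^t \mathbb{E}(X_s^2 \mid \mathcal{F}_{s-1})\big)$ with $M_0 = 1$. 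The one-step bound gives $\mathbb{E}(M_t \mid \mathcal{F}_{t-1}) \le M_{t-1}$, so $\{M_t\}$ is a nonnegative supermartingale with $\mathbb{E}(M_0) = 1$.

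For the final step I would apply Ville's maximal inequality for nonnegative supermartingales, which gives $\mathbb{P}(\sup_{t \ge 1} M_t \ge 1/\delta) \le \delta\,\mathbb{E}(M_0) = \delta$. On the complementary event, of probability at least $1-\delta$, one has $M_t < 1/\delta$ \emph{simultaneously} for all $t \ge 1$; taking logarithms and dividing by $\eta$ rearranges precisely to $\sum_{s=1}^t X_s \le (e-2)\eta \sum_{s=1}^t \mathbb{E}(X_s^2 \mid \mathcal{F}_{s-1}) + \tfrac{1}{\eta}\log\tfrac{1}{\delta}$ for every $t$, as claimed.

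I expect the main obstacle to be the uniform-in-$t$ guarantee, i.e.\ the ``$\forall t \ge 1$'' quantifier. A plain Markov inequality applied to $\mathbb{E}(M_t) \le 1$ would only control a single fixed horizon, and a naive union bound over all $t$ diverges; it is exactly the supermartingale structure, exploited through Ville's inequality (a maximal inequality), that upgrades the fixed-$t$ control to a statement holding simultaneously over the whole trajectory. The remaining ingredients—the scalar bound on $[-1,1]$ and the verification of the supermartingale property—are routine once the rescaling $\eta R \le 1$ is used to guarantee $|Y_s| \le 1$.
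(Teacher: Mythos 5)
Your proof is correct. Note that the paper itself contains no proof of this statement---it is imported verbatim as Lemma 3 of \cite{lee2024} in the Support Lemmas section---and your derivation (the scalar bound $e^{y}\le 1+y+(e-2)y^{2}$ on $[-1,1]$, the conditional one-step estimate using $\mathbb{E}(X_s\mid\mathcal{F}_{s-1})=0$, the resulting nonnegative supermartingale $M_t$, and Ville's maximal inequality to obtain the simultaneous-in-$t$ guarantee rather than a fixed-horizon Markov bound) is exactly the standard argument behind the cited result, with every step, including the constant $e-2$, justified correctly.
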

\begin{lemma}(Lemma C.1 \citep{das2024provably}) \label{lemma11}
Let $z, z'\in\mathbb{R}$ and $\tilde{\alpha}(z, z'):=\int_{0}^1(1-v)\dot{\mu}(z+v(z'-z))dv$. Then for some $C>1$ (1.01 suffices), 
$$\tilde{\alpha}(z, z')\geq \frac{\dot{\mu}(z')}{C(2+|z-z'|)^2}.$$
\end{lemma}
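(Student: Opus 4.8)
The statement concerns only the logistic function $\mu(w)=(1+e^{-w})^{-1}$ with $\dot\mu=\mu(1-\mu)$, so the proof is purely one-dimensional analysis. Write $\lambda:=|z-z'|$. The engine of the argument is the self-concordance (log-Lipschitz) property of $\dot\mu$: since $\frac{d}{dw}\log\dot\mu(w)=\ddot\mu(w)/\dot\mu(w)=1-2\mu(w)\in(-1,1)$, the map $w\mapsto\log\dot\mu(w)$ is $1$-Lipschitz, which gives the pointwise comparison $\dot\mu(s)\ge\dot\mu(t)\,e^{-|s-t|}$ for all $s,t\in\mathbb{R}$. I would first record this elementary fact (equivalently, $|\ddot\mu|\le\dot\mu$).

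\textbf{Reduction to an exact integral.} I apply the comparison along the segment joining $z$ to $z'$. Taking $t=z'$ and $s=z+v(z'-z)$ for $v\in[0,1]$, one has $|s-z'|=(1-v)\lambda$, hence $\dot\mu\big(z+v(z'-z)\big)\ge\dot\mu(z')\,e^{-(1-v)\lambda}$. Substituting into the definition of $\tilde\alpha$ and evaluating the resulting integral (via $u=1-v$),
\begin{equation*}
\tilde\alpha(z,z')\;\ge\;\dot\mu(z')\int_0^1(1-v)e^{-(1-v)\lambda}\,dv\;=\;\dot\mu(z')\int_0^1 u\,e^{-u\lambda}\,du\;=\;\dot\mu(z')\,\frac{1-(1+\lambda)e^{-\lambda}}{\lambda^2},
\end{equation*}
where the last factor is read as its limit $1/2$ when $\lambda=0$. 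This already isolates $\dot\mu(z')$ and leaves a clean scalar quantity to bound from below.

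\textbf{The scalar inequality.} It remains to show $\frac{1-(1+\lambda)e^{-\lambda}}{\lambda^2}\ge\frac{1}{C(2+\lambda)^2}$. For $\lambda>0$ this is equivalent (after clearing denominators) to $(2+\lambda)^2\big[1-(1+\lambda)e^{-\lambda}\big]\ge\lambda^2/C$. I would take $C=1$ and use $(2+\lambda)^2-\lambda^2=4(1+\lambda)$ to factor the difference as $(1+\lambda)\big[4-(2+\lambda)^2e^{-\lambda}\big]$. Since $1+\lambda>0$, nonnegativity reduces to $(2+\lambda)^2\le 4e^{\lambda}$, i.e. $(1+\lambda/2)^2\le e^{\lambda}$, i.e. $1+\lambda/2\le e^{\lambda/2}$ — the textbook inequality $1+x\le e^{x}$. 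The boundary case $\lambda=0$ is checked directly: $\tilde\alpha=\dot\mu(z')/2\ge\dot\mu(z')/(4C)$. Thus the bound holds already with $C=1$, so the claimed value $C=1.01$ holds with room to spare.

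\textbf{Main obstacle.} There is no deep difficulty; the only points requiring care are the exact evaluation of $\int_0^1 u e^{-u\lambda}\,du$ and the factorization that collapses the scalar inequality to $1+x\le e^{x}$. Conceptually, the one thing worth emphasizing is \emph{why} the final denominator is polynomial rather than exponential: the log-Lipschitz comparison $\dot\mu(s)\ge\dot\mu(z')e^{-|s-z'|}$ is sharp enough that, after integrating the weight $(1-v)$ against $e^{-(1-v)\lambda}$, the $e^{-\lambda}$ tail is exactly cancelled and only the $\lambda^{-2}$ factor survives, matching $(2+\lambda)^{-2}$ up to a constant.
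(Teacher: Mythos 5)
Your proof is correct, and there is nothing in the paper to compare it against line by line: the paper does not prove this statement, but imports it verbatim as Lemma~C.1 of \citet{das2024provably}, so any complete argument here is already a net addition. Checking your steps: the self-concordance fact $\frac{d}{dw}\log\dot\mu(w)=1-2\mu(w)\in(-1,1)$ (equivalently $|\ddot\mu|\le\dot\mu$) does give $\dot\mu(s)\ge\dot\mu(t)e^{-|s-t|}$; with $s=z+v(z'-z)$ one has $|s-z'|=(1-v)\lambda$ for $\lambda=|z-z'|$; the substitution $u=1-v$ and integration by parts yield exactly $\int_0^1 u e^{-u\lambda}\,du=\bigl(1-(1+\lambda)e^{-\lambda}\bigr)/\lambda^2$; and your factorization
\begin{equation*}
(2+\lambda)^2\bigl[1-(1+\lambda)e^{-\lambda}\bigr]-\lambda^2=(1+\lambda)\bigl[4-(2+\lambda)^2e^{-\lambda}\bigr]
\end{equation*}
is algebraically exact, reducing the claim to $1+\lambda/2\le e^{\lambda/2}$, with the case $\lambda=0$ handled directly ($1/2\ge 1/4$). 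Since the right-hand side of the lemma is decreasing in $C$, your bound with $C=1$ implies the stated version with $C=1.01$, so you have in fact sharpened the constant. For comparison, the standard route in the logistic-bandit literature (from which \citet{das2024provably} draw this lemma, in the tradition of the self-concordance lemmas of Faury et al.\ and Abeille et al.) rests on the same exponential comparison $\dot\mu(s)\ge\dot\mu(z')e^{-|s-z'|}$, but lower-bounds the resulting integral more crudely rather than evaluating it in closed form, which is where the slack constant $1.01$ originates; your exact evaluation plus the clean factorization buys the optimal-form constant at no extra cost. One presentational nit: since the lemma is applied in the paper with $z=\beta_t z_t^\top\theta_*$ and $z'=\beta_t z_t^\top\theta$, it is worth stating explicitly (as you implicitly do) that your argument uses nothing about $\mu$ beyond $\bigl|\frac{d}{dw}\log\dot\mu\bigr|\le 1$, so the bound holds uniformly in all these arguments.
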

\end{document}